\definecolor{bgcolor}{rgb}{0.8,1,1}
\definecolor{bgcolor2}{rgb}{0.8,1,0.8}
\definecolor{niceblue}{rgb}{0.0,0.19,0.56}
\newcommand{\E}{\mathbb{E}}
\renewcommand{\P}{\mathbb{P}}
\newcommand{\R}{\mathbb{R}}
\newcommand{\cB}{\mathcal{B}}
\newcommand{\cC}{\mathcal{C}}
\newcommand{\cF}{\mathcal{F}}
\newcommand{\cG}{\mathcal{G}}
\newcommand{\cH}{\mathcal{H}}
\newcommand{\cK}{\mathcal{K}}
\newcommand{\cL}{\mathcal{L}}
\newcommand{\cM}{\mathcal{M}}
\newcommand{\cN}{\mathcal{N}}
\newcommand{\cP}{\mathcal{P}}
\newcommand{\cS}{\mathcal{S}}
\newcommand{\cW}{\mathcal{W}}
\newcommand{\cX}{\mathcal{X}}
\newcommand{\cY}{\mathcal{Y}}
\newcommand{\cZ}{\mathcal{Z}}
\newcommand{\bcG}{\bm{\mathcal{G}}}
\newcommand{\sD}{\mathscr{D}}
\newcommand{\sX}{\mathscr{X}}
\newcommand{\bg}{\bm{g}}
\newcommand{\bv}{\bm{v}}
\newcommand{\bw}{\bm{w}}
\newcommand{\bx}{\bm{x}}
\newcommand{\bI}{\bm{I}}
\newcommand{\bW}{\bm{W}}
\newcommand{\bX}{\bm{X}}
\newcommand{\bY}{\bm{Y}}
\newcommand{\bZ}{\bm{Z}}
\newcommand{\bmeta}{\bm{\eta}}
\newcommand{\sfH}{\mathsf{H}}
\newcommand{\sfK}{\mathsf{K}}
\newcommand{\sfN}{\mathsf{N}}
\newcommand{\ECMMD}{\mathrm{ECMMD}}
\newcommand{\rmN}{\mathrm{N}}
\newcommand{\ra}{\rightarrow}
\newcommand{\one}{\mathbbm{1}}
\newcommand{\vep}{\varepsilon}
\newcommand{\lrf}[1]{\left( #1 \right)}
\newcommand{\lrs}[1]{\left\{ #1 \right\}}
\newcommand{\lrt}[1]{\left[ #1 \right]}
\newcommand{\lrm}[1]{\left| #1 \right|}
\newcommand{\lrn}[1]{\left\| #1 \right\|}
\newcommand{\bbg}{\bar{\bm{g}}}
\newcommand{\bbw}{\bar{\bm{w}}}
\newcommand{\bbv}{\bar{\bm{v}}}
\newcommand{\bbW}{\bar{\bm{W}}}
\newcommand{\modelname}[1]{{\sf  #1}}
\newcommand{\revise}{\color{black}}
\numberwithin{equation}{section}
\theoremstyle{plain}
\newtheorem{theorem}{Theorem}[section]
\newtheorem{proposition}[theorem]{Proposition}
\newtheorem{lemma}[theorem]{Lemma}
\newtheorem{corollary}[theorem]{Corollary}
\theoremstyle{definition}
\newtheorem{assumption}[theorem]{Assumption}
\theoremstyle{remark}
\newtheorem{remark}[theorem]{Remark}
\icmltitlerunning{One-shot Conditional Sampling: MMD meets Nearest Neighbors}
\begin{document}

\twocolumn[
  \icmltitle{One-shot Conditional Sampling: MMD meets Nearest Neighbors}

  % It is OKAY to include author information, even for blind submissions: the
  % style file will automatically remove it for you unless you've provided
  % the [accepted] option to the icml2026 package.

  % List of affiliations: The first argument should be a (short) identifier you
  % will use later to specify author affiliations Academic affiliations
  % should list Department, University, City, Region, Country Industry
  % affiliations should list Company, City, Region, Country

  % You can specify symbols, otherwise they are numbered in order. Ideally, you
  % should not use this facility. Affiliations will be numbered in order of
  % appearance and this is the preferred way.
  \icmlsetsymbol{equal}{*}
  \icmlsetsymbol{jhuwork}{\ensuremath{\dagger}}
  
  \begin{icmlauthorlist}
    \icmlauthor{Anirban Chatterjee}{equal,uchicago}
    \icmlauthor{Sayantan Choudhury}{equal,jhuwork,mbzuai}
    \icmlauthor{Rohan Hore}{equal,cmu}
  \end{icmlauthorlist}

  \icmlaffiliation{uchicago}{Department of Statistics, University of Chicago, Chicago, USA.}
  \icmlaffiliation{mbzuai}{Department of Statistics and Data Science, MBZUAI, Abu Dhabi, UAE.}
  \icmlaffiliation{cmu}{Department of Statistics and Data Science, Carnegie Mellon University, Pittsburgh, USA. \textsuperscript{\ensuremath{\dagger}}Part of this work was done while this author was at Johns Hopkins University, USA}

  \icmlcorrespondingauthor{Anirban Chatterjee}{anirbanc@uchicago.edu}

  % You may provide any keywords that you find helpful for describing your
  % paper; these are used to populate the "keywords" metadata in the PDF but
  % will not be shown in the document
  \icmlkeywords{Machine Learning, ICML}

  \vskip 0.3in
]

% this must go after the closing bracket ] following \twocolumn[ ...

% This command actually creates the footnote in the first column listing the
% affiliations and the copyright notice. The command takes one argument, which
% is text to display at the start of the footnote. The \icmlEqualContribution
% command is standard text for equal contribution. Remove it (just {}) if you
% do not need this facility.

% Use ONE of the following lines. DO NOT remove the command.
% If you have no special notice, KEEP empty braces:
% \printAffiliationsAndNotice{}  % no special notice (required even if empty)
% Or, if applicable, use the standard equal contribution text:
\printAffiliationsAndNotice{\icmlEqualContribution.}

\begin{abstract}
  How can we generate samples from a conditional distribution that we never fully observe? This question arises across a broad range of applications in both modern machine learning and classical statistics, including image post-processing in computer vision, approximate posterior sampling in simulation-based inference, and conditional distribution modeling in complex data settings. In such settings, compared with unconditional sampling, additional feature information can be leveraged to enable more adaptive and efficient sampling. Building on this, we introduce Conditional Generator using MMD (\modelname{CGMMD}), a novel framework for conditional sampling. Unlike many contemporary approaches, our method frames the training objective as a simple, adversary-free direct minimization problem. A key feature of \modelname{CGMMD} is its ability to produce conditional samples in a single forward pass of the generator, enabling practical one-shot sampling with low test-time complexity. We establish rigorous theoretical bounds on the loss incurred when sampling from the \modelname{CGMMD} sampler, and prove convergence of the estimated distribution to the true conditional distribution. In the process, we also develop a uniform concentration result for nearest-neighbor based functionals, which may be of independent interest. Finally, we show that \modelname{CGMMD} performs competitively on synthetic tasks involving complex conditional densities, as well as on practical applications such as image denoising and image super-resolution.
\end{abstract}

\begin{figure*}[!t] 
    \centering
    \includegraphics[width=0.95\linewidth]{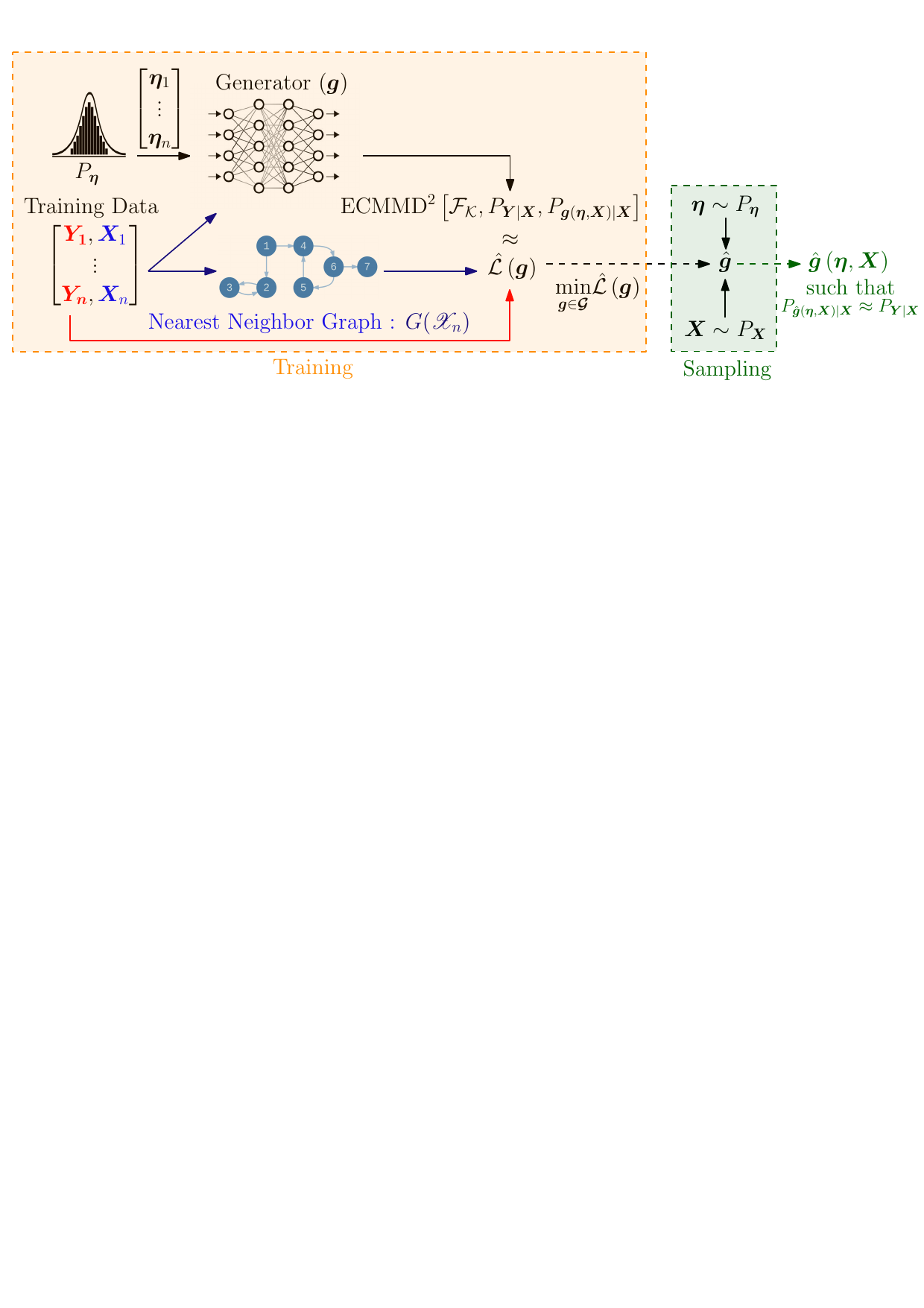}
    \caption{ Schematic overview of \modelname{CGMMD}: Given training data $(\bY_1,\bX_1),\ldots,(\bY_n,\bX_n)$, the samples $\sX_n = \{\bX_1,\ldots, \bX_n\}$ and auxiliary noise $\bm\eta_1,\ldots,\bm\eta_n$ are passed through the generator $\bg$ to produce samples $\bg(\bm\eta_1,\bX_1),\ldots, \bg(\bm\eta_n,\bX_n)$. These outputs are compared with the observed $\bY_1,\ldots,\bY_n$ values using a nearest-neighbor $\left(G(\sX_n)\right)$ based estimate of the ECMMD discrepancy (see \eqref{eqn:ECMMD_definition}) between true and generated conditional distributions. Edges are color-coded to highlight the dependence of each section on the corresponding inputs. After training, sampling is immediate: for any new input $\bX$, independently generate new $\bmeta\sim P_{\bmeta}$ , the trained model $\hat\bg$ then produces $\hat \bg(\bm\eta,\bX)$ as the conditional output. Each component is described in greater details in Section~\ref{sec:background} and Section \ref{sec:objective_ECMMD}.}
    \label{fig:overview of CGMMD}
    % \vspace{-0.35cm}
\end{figure*}

\section{Introduction}
A fundamental problem in statistics and machine learning is to model the relationship between a response $\bY\in \mathcal{Y}$ and a predictor $\bX\in \mathcal{X}$. Classical regression methods~\citep{hastie2009elements, koenker1978regression}, typically summarize this relationship through summary statistics, which are often insufficient for many downstream tasks that require the knowledge of the entire conditional law. Access to the full conditional distribution enables quantification of uncertainty associated with prediction \citep{castillo2022optional}, uncovers latent structure \citep{mimno2015posterior}, supports dimension reduction~\citep{reich2011sufficient}, and graphical modeling~\citep{chen2024conditional}. In modern scientific applications, it provides a foundation for simulation-based inference \citep{cranmer2020frontier} across various domains, including computer vision~\citep{gupta2024conditional}, neuroscience \citep{von2022mental}, and the physical sciences~\citep{hou2024cosmological, mastandrea2024constraining}.

Classical approaches such as distributional regression and conditional density estimation \citep{rosenblatt1969conditional,fan1996estimation,hothorn2014conditional} model the full conditional distribution directly but often rely on strong assumptions and offer limited flexibility. In contrast, recent advances in generative models like Generative Adversarial Networks (GANs) \citep{zhou2023deep,mirza2014conditional,odena2017conditional}, Variational Autoencoders (VAEs) \citep{harvey2021conditional,doersch2016tutorial,mishra2018generative}, and diffusion models \citep{rombach2022high,saharia2022photorealistic,zhan2025conditionalimagesynthesisdiffusion} provide more flexible, assumption lean alternatives for conditional distribution learning across applications in vision, language, and scientific simulation. A more detailed discussion of related work, background, and connections to simulation-based inference is provided in Section~\ref{sec:review}.

GANs, introduced by \citet{goodfellow2014generative} as a two-player minimax game optimizing the Jensen–Shannon divergence \citep{fuglede2004jensen}, are a widely adopted class of generative models, known for their flexibility and empirical success. However, training remains delicate and unstable, even in the unconditional setting \citep{arjovsky2017towards, salimans2016improved}. As \citet{arjovsky2017towards} point out, the generator and target distributions often lie on low-dimensional manifolds that do not intersect, rendering divergences like Jensen–Shannon or KL constant or infinite and thus providing no useful gradient. To address this, alternative objectives based on Integral Probability Metrics (IPMs) \citep{muller1997integral}, such as the Wasserstein distance \citep{villani2008optimal} and Maximum Mean Discrepancy (MMD) \citep{gretton2012kernel}, have been proposed for more stable training in unconditional sampling using GANs.

Building on the success of MMD-GANs \citep{li2015generative, dziugaite2015training, binkowski2018demystifying, huang2022evaluating}, we propose an MMD-based loss using nearest neighbors to quantify discrepancies between conditional distributions. While MMD has been used in conditional generation, to the best of our knowledge we are the first to provide sharp theoretical guarantees for MMD based conditional sampling, offering a principled foundation for training conditional generators. Initially developed for two-sample testing by \citet{gretton2012kernel}, MMD has since seen broad adoption across the statistical literature \citep{gretton2007kernel, fukumizu2007kernel, chwialkowski2016kernel, sutherland2016generative}. It quantifies the discrepancy between two probability distributions as the maximum difference in expectations over functions $f$ drawn from the unit ball of a Reproducing Kernel Hilbert Space (RKHS) defined on $\cY$ \citep{aronszajn1950theory}. Formally, let $\cY$ be a separable metric space equipped with $\cB_{\cY}$, the sigma-algebra generated by the open sets of $\cY$. Let $\cP(\cY)$ be the collection of all probability measures on $\left(\cY,\cB_{\cY}\right)$. Then for any $P_{\bY},P_{\bZ}\in \cP(\cY)$,
\begin{equation}\label{eq:def_mmd}
    \mathrm{MMD}(\mathcal{F}_\cK,P_{\bY},P_{\bZ}) = \sup_{f\in \mathcal{F}_\cK}{\E[f(\bY)]-\E[f(\bZ)]},
\end{equation}
where $\mathcal{F}_\cK$ is the unit ball of a reproducing kernel Hilbert space (RKHS) $\mathcal{K}$ on $\mathcal{Y}$.

\subsection{\underline{C}onditional \underline{G}enerator using \underline{M}aximum \underline{M}ean \underline{D}iscrepancy \modelname{(CGMMD)}}\label{sec:our_approach}

To extend MMD to the conditional setting, we employ the expected conditional MMD (ECMMD) from \citet{chatterjee2024kernel} (also see \cite{huang2022evaluating}), which naturally generalizes the MMD distance to a discrepancy between conditional distributions. Formally, for $\bX\sim P_{\bX}$, conditional distributions $P_{\bY\mid \bX}$ and $P_{\bZ\mid \bX}$ supported on $\mathcal{Y}$, the squared $\mathrm{ECMMD}$ can be defined as,
\begin{align}\label{eqn:ECMMD_definition}
    &\mathrm{ECMMD}^2(\mathcal{F}_\cK,P_{\bY\mid \bX}, P_{\bZ\mid \bX})\nonumber\\
    &\hspace{30pt}:=\E_{\bX\sim P_{\bX}} \bigl[\mathrm{MMD}^2(\mathcal{F}_\cK,P_{\bY\mid \bX}, P_{\bZ\mid \bX})\bigr].
\end{align}
We discuss simplified formulations of this measure later in Section~\ref{sec:ECMMD_kernel_definition}.
By \citet[Proposition 2.3]{chatterjee2024kernel}, ECMMD is indeed a strict scoring rule, meaning that $\mathrm{ECMMD}^2(\mathcal{F}_\cK,P_{\bY\mid \bX}, P_{\bZ\mid \bX})=0$ if and only if $P_{\bY\mid \bX}=P_{\bZ\mid \bX}$ almost surely. This property establishes ECMMD as a principled and reliable tool for comparing conditional distributions.

Instead of estimating the target conditional distribution $P_{\bY\mid \bX}$ directly, we follow the generative approach from  \citet{zhou2023deep} and \citet{song25wasserstein}. By the noise outsourcing lemma (see Lemma \ref{lemma:noise_outsourcing}), the problem of nonparametric conditional density estimation can be reformulated as a generalized nonparametric regression problem. In particular, for a given predictor value \( \bX = \bx \), our goal is to learn a conditional generator \( \bg(\bmeta, \bx) \), where \( \bmeta \) is drawn from a simple reference distribution (e.g., Gaussian or uniform). The generator is trained so that \( \bg(\bmeta, \bx) \) approximates the conditional distribution of \( \bY \mid \bX = \bx \) for all $\bx$. Discrepancy between the true conditional distribution \( P_{\bY\mid\bX} \) and the model distribution \( P_{\bg\lrf{\bmeta,\bX}\mid\bX} \) is measured using the squared ECMMD. Once training is complete, conditional sampling becomes a one-shot procedure: draw \( \bmeta \) from the reference distribution and sample \( \bg(\bmeta, \bx) \). In this way, the generator provides an explicit and efficient representation of the conditional distribution of \( \bY \mid \bX \). We refer to $\bg(\bmeta, \bx)$ as the Conditional Generator using Maximum Mean Discrepancy, or \modelname{CGMMD} for short. We provide the schematic overview of the method in Figure \ref{fig:overview of CGMMD}. Now, we turn to the main contributions of our proposed method.
% \vspace{-0.3cm}
\subsection{Main Contributions}
% \vspace{-0.1cm}
Our main contributions are summarized below.
\begin{itemize}[left = 0pt]
\item \textbf{Direct Minimization.} Similar to MMD-GANs in the unconditional setting, \modelname{CGMMD} avoids adversarial min-max optimization and instead enables direct minimization of an 
{\revise ECMMD based objective (see~\eqref{eq:def_mmd}),}
% well-defined loss, 
offering a more straightforward and tractable alternative to GAN-based training~\citep{zhou2023deep, song25wasserstein, ramesh2022gatsbi}. This design helps avoid common issues in conditional GANs, such as mode collapse and unstable min–max dynamics.

\item \textbf{One-shot Sampling.} While diffusion models have demonstrated remarkable success in generating high-quality and diverse samples, their iterative denoising procedure~\citep{ho2020denoising} makes sampling computationally expensive and time-consuming. In contrast, \modelname{CGMMD} enables efficient one-shot sampling, i.e., conditional samples are obtained in a single forward pass of the generator. Specifically, to sample from $\bY \mid \bX = \bx$, one simply draws $\bmeta$ from a simple reference distribution (e.g., Gaussian or uniform) and evaluates $\hat{\bg}(\bmeta, \bx)$, where $\hat{\bg}$ is a solution of \eqref{eqn:G_hat}.

\item \textbf{Theoretical Guarantees.} We provide rigorous theoretical guarantees for \modelname{CGMMD}. Theorem~\ref{thm:bdd_empirical_sampler_main} gives a non-asymptotic finite-sample bound on the error of the conditional sampler $\hat{\bg}(\bmeta, \bx)$, and Corollary~\ref{cor:cond_dist_convg} establishes convergence to the true conditional distribution as the sample size increases. Together, these results provide strong theoretical justification for \modelname{CGMMD}. 

To the best of our knowledge, this is the first application of tools from uniform concentration of nonlinear functionals, nearest neighbor methods, and generalization theory to conditional generative modeling. In the process, we also establish a general uniform concentration result for a broad class of nearest-neighbor-based functionals (Appendix~\ref{appendix:unif_conc}), which may be of independent interest.

\item \textbf{Numerical Experiments.} Finally, we provide experiments on both synthetic and real data (mainly in image post-processing tasks) to evaluate the performance of \modelname{CGMMD} and compare it with existing approaches in the literature. Overall, our proposed approach performs reliably across different settings and often matches or exceeds the alternative approaches in more challenging cases.
\end{itemize}
% \vspace{-3mm}

\section{Technical Background}\label{sec:background}
% \vspace{-3mm}
In this section, we introduce the necessary concepts and previous works required to understand our proposed framework, \modelname{CGMMD}. To that end, we begin with the necessary formalism. 

Let $\cX,\cY$ be Polish spaces, that is, complete separable metric spaces equipped with the corresponding Borel-sigma algebras $\cB_\cX$ and $\cB_\cY$ respectively. Let $\cP(\cX)$ and $\cP(\cY)$ be the collection of all probability measures defined on $(\cX,\cB_\cX)$ and $(\cY,\cB_\cY)$ respectively. Recalling the RKHS $\cK$ defined on $\cY$ from \eqref{eq:def_mmd}, the Riesz representation theorem \citep[Therorem II.4]{reed1980methods} guarantees the existence of a positive definite kernel $\sfK:\mathcal{Y}\times \mathcal{Y}\to \R$ such that for every $\bm y\in\mathcal{Y}$, the feature map $\phi_{\bm y}\in \mathcal{\cK}$ satisfies $\sfK(\bm y,\cdot)=\phi_{\bm y}(\cdot) \textnormal{ and } \sfK(\bm y_1,\bm y_2)=\langle \phi_{\bm y_1}, \phi_{\bm y_2}\rangle_{\mathcal{K}}$. 

The definition of feature maps can now be extended to embed any distribution $P\in\cP(\cY)$ into $\cK$. In particular, for $P\in \cP(\cY)$ we can define the kernel mean embedding $\mu_{P}$ as $\langle f, \mu_{P}\rangle_{\mathcal{K}}=\E_{Y\sim P}[f(Y)]$. Moreover, by the canonical form of the feature maps, it follows that $\mu_P(t):=\E_{Y\sim P}[K(Y,t)]\quad \textnormal{for all~} t\in \mathcal{Y}$. Henceforth, we make the following assumptions on  kernel $\sfK$.
\begin{assumption}\label{assumption:K_main}
    The kernel $\sfK:\cY\times\cY\ra\R$ is positive definite and satisfies the following:
    \begin{enumerate}
        \vspace{-5pt}
        \item [1.] The kernel $\sfK$ is bounded, that is $\|\sfK\|_{\infty}<K$ for some $K>0$ and Lipschitz continuous.
        \vspace{-3pt}
        \item [2.] The kernel mean embedding $\mu:\mathcal{P}(\mathcal{Y})\rightarrow\mathcal{K}$ is a one-to-one (injective) function. This is also known as the \textit{characteristic kernel} property \citep{sriperumbudur2011universality}.
    \end{enumerate}
\end{assumption}
Assumption~\ref{assumption:K_main} ensures that the mean embedding \( \mu_P \in \mathcal{K} \) (see Lemma 3 in~\citet{gretton2012kernel} and Lemma 2.1 in~\citet{park2020measure}), and that MMD defines a metric on \( \mathcal{P}(\mathcal{Y}) \). While these properties can be guaranteed under weaker conditions on the kernel \( \sfK \), we adopt the above assumption for technical convenience. With the above notations the MMD (recall \eqref{eq:def_mmd}) can be equivalently expressed as $\mathrm{MMD}^2(\mathcal{F}_\cK,P_{\bY},P_{\bZ})=\|\mu_{P_{\bY}}-\mu_{P_{\bZ}}\|^2_{\mathcal{K}}$ (see Lemma 4 from \citet{gretton2012kernel}) where $\|\cdot\|_{\mathcal{K}}$ is the norm induced by the inner product  $\langle \cdot,\cdot\rangle_{\mathcal{K}}$. In the following, we express the ECMMD in an equivalent form and leverage it to obtain a consistent empirical estimator.

\subsection{ECMMD: Representation via Kernel Embeddings} \label{sec:ECMMD_kernel_definition}
 Recalling the definition of ECMMD from \eqref{eqn:ECMMD_definition}, we note that it admits an equivalent formulation. In particular, for distributions $P_{\bY\mid \bX}$ and $P_{\bZ\mid \bX}$ (which exists by \citet[Theorem 8.37]{klenke2008probability}), define the conditional mean embeddings $\mu_{P_{\bY\mid \bX}}(t):=\E[\sfK(\bY,t)\mid\bX]$ and $\mu_{P_{\bZ\mid \bX}}(t):=\E_[\sfK(\bZ,t)\mid\bX]$ for all $t\in \mathcal{Y}$. Under Assumption \ref{assumption:K_main}, the conditional mean embeddings are indeed well defined by \citet[Lemma 3.2]{park2020measure}. Consequently, $\|\mu_{P_{\bY\mid \bX=\bx}}-\mu_{P_{\bZ\mid \bX=\bx}}\|^2_{\mathcal{K}}$ is the squared MMD metric between the conditional distributions for a particular value of $\bX = \bx$. Averaging this quantity over the marginal distribution of $\bX$ yields the squared ECMMD distance:
\begin{align}\label{eqn:ECMMD_mean_embedding}
    \mathrm{ECMMD}^2
    &(\mathcal{F}_\cK,P_{\bY\mid \bX}, P_{\bZ\mid \bX})\nonumber\\
    &=\E_{\bX\sim P_{\bX}}\bigl[\|\mu_{P_{\bY\mid \bX}}-\mu_{P_{\bZ\mid \bX}}\|^2_{\mathcal{K}}\bigr]
\end{align}
However, to use ECMMD as a loss function for estimating the conditional sampler, we require a consistent estimator of the expression in~\eqref{eqn:ECMMD_mean_embedding}. To that end, the well-known \emph{kernel trick} enables a more tractable reformulation of ECMMD, making it amenable to estimation from observed data. By \citet[Proposition 2.4]{chatterjee2024kernel} {\revise (also see \citet{huang2022evaluating} and  \citet{park2020measure})}, the squared ECMMD admits the tractable form 
\begin{align}\label{eq:ECMMD_tractable}
  \mathrm{ECMMD}^2(\mathcal{F}_\cK,P_{\bY\mid \bX}, P_{\bZ\mid \bX})=\E\lrt{\sfH(\bW, \bW')}
\end{align}
where $\bW = (\bY,\bZ), \bW' = (\bY',\bZ')$, $\sfH(\bW,\bW') = \sfK(\bY,\bY^\prime)+\sfK(\bZ,\bZ^\prime)-\sfK(\bY,\bZ^\prime)-\sfK(\bZ,\bY^\prime)$ and $(\bY,\bY^\prime,\bZ,\bZ^\prime,\bX)$ is generated by first sampling $\bX\sim P_{\bX}$, then drawing $(\bY,\bZ)$ and $(\bY^\prime,\bZ^\prime)$ independently from $P_{\bY\mid \bX}\times P_{\bZ\mid \bX}$. {\revise Note that when $\bm Y,\bm Z$ are independent of $\bm X$, the expression from \eqref{eq:ECMMD_tractable} is equivalent to the classical expression of squared $\mathrm{MMD}$ \citep{gretton2012kernel}. }

\subsection{ECMMD: Consistent Estimation using Nearest Neighbors}\label{sec:ECMMD_estimation}
% \vspace{-2mm}
Towards estimating the ECMMD, we leverage the equivalent expression from \eqref{eq:ECMMD_tractable}. By the tower property of conditional expectations, \eqref{eq:ECMMD_tractable} can be further expanded as, 
% $$\mathrm{ECMMD}^2(\mathcal{F}_\cK,P_{\bY\mid \bX}, P_{\bZ\mid \bX})=\E\lrt{\E\lrt{\sfH(\bW, \bW')\mid\bX}}.$$
\begin{equation*}
    \mathrm{ECMMD}^2(\mathcal{F}_\cK,P_{\bY\mid \bX}, P_{\bZ\mid \bX})=\E\lrt{\E\lrt{\sfH(\bW, \bW')\mid\bX}}.
\end{equation*}
To estimate ECMMD, we observe that it involves averaging a conditional expectation over the distribution \( P_{\bX} \). Given observed samples \( \{(\bY_i, \bZ_i, \bX_i): 1 \leq i \leq n\} \) drawn from the joint distribution \( P_{\bY\bZ\bX} = P_{\bY \mid \bX} \times P_{\bZ \mid \bX} \times P_{\bX} \), we proceed by first estimating the inner conditional expectation given \( \bX = \bX_i \), and then averaging these estimates over the observed values \( \bX_1, \ldots, \bX_n \). To estimate the inner conditional expectation given \( \bX = \bX_i \), one can, in principle, average the inner function over sample indices whose corresponding predictors are `close' to \( \bX_i \). A natural way to quantify such proximity is through nearest-neighbor graphs. Formally we construct the estimated ECMMD as follows.

Fix $k = k_n\geq 1$ and let $G(\sX_n)$ be the directed $k-$nearest neighbor graph on $\sX_n = \lrs{\bX_1,\ldots,\bX_n}$. Moreover let $N_{G(\sX_n)}(i) := \lrs{j\in [n]:\bX_i\ra\bX_j\text{ is an edge in }G(\sX_n)}\text{ for all }i\in [n]$. Now the $k-$NN based estimator of $\rm ECMMD$ can be defined as,
\begin{align}\label{eq:ecmmd_estimate}
    \textstyle
    \widehat{\rm \ECMMD}^2
    &\lrf{\cF_\cK, P_{\bY\mid\bX},P_{\bZ\mid\bX}}\nonumber\\
    &:= \frac{1}{n}\sum_{i=1}^{n}\frac{1}{k_n}\sum_{j\in N_{G(\sX_n)}(i)}\sfH\lrf{\bW_i,\bW_j}
\end{align}
where $\bW_i = (\bY_i,\bZ_i)$ for all $i\in [n]$ and $\sfH\lrf{\bW_i,\bW_j} = \sfK\lrf{\bY_i,\bY_j} - \sfK\lrf{\bY_i,\bZ_j} - \sfK\lrf{\bZ_i,\bY_j} + \sfK\lrf{\bZ_i,\bZ_j}$ for all $1\leq i,j\leq n$. \citet[Theorem 3.2]{chatterjee2024kernel} shows that under mild conditions, this estimator is consistent for the oracle ECMMD. We exploit this nearest-neighbor construction to define the \modelname{CGMMD} objective in Section~\ref{sec:objective_ECMMD}.

\subsection{Generative Representation of Conditional Distribution}\label{sec:noise_outsourcing}
% \vspace{-2mm}
As outlined in Section~\ref{sec:our_approach}, conditional density estimation can be reformulated as a generalized nonparametric regression problem. Suppose $(\bY,\bX)\in \mathcal{X}\times \mathcal{Y}$ follows some joint distribution $P_{\bY\bX}$, and we observe $n$ independent samples $\left\{(\bY_1,\bX_1),\ldots, (\bY_n,\bX_n)\right\}$ from $P_{\bY\bX}$. Our goal is to generate samples from the unknown conditional distribution $P_{\bY\mid\bX}$. The \emph{noise outsourcing lemma} (see \citet[Theorem 5.10]{kallenberg1997foundations}, \citet[Lemma 2.1]{zhou2023deep} {\revise and \citet[Lemma 5]{bloem2020probabilistic}}) formally connects conditional distribution estimation with conditional sample generation. For completeness, we state it below.
\begin{lemma}[Noise Outsourcing Lemma]\label{lemma:noise_outsourcing}
    Suppose $(\bY,\bX)\sim P_{\bY\bX}$. Then, for any $m\geq 1$, there exist a random vector $\bmeta\sim P_{\bmeta}=\mathrm{N} \left(\bm 0_m, \bm I_m \right)$ and a Borel-measurable function $\bbg:\R^m\times\mathcal{X}\to\mathcal{Y} $ such that $\bmeta$ is generated independent of $\bX$ and $(\bY, \bX)=(\bbg(\bmeta,\bX), \bX)\,\,\textnormal{almost surely}$.
% \vspace{-3mm}
\end{lemma}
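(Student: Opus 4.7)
The plan is to prove the lemma by reducing to the one-dimensional standard case via two changes of variables: first a Borel isomorphism $\phi: \mathcal{Y} \to B \subseteq [0,1]$, which exists because $\mathcal{Y}$ is Polish (the countable case is immediate and the uncountable case is Kuratowski's theorem), and second the standard normal CDF $\Phi: \mathbb{R} \to (0,1)$, which converts between uniform and Gaussian noise. Combined with the conditional quantile transform, this produces both the noise variable $\bmeta$ and the map $\bbg$.

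First, since $\mathcal{X}$ and $\mathcal{Y}$ are Polish, a regular conditional distribution $\kappa(\bx, \cdot)$ for $\bY$ given $\bX = \bx$ exists (Theorem 8.37 of Klenke, already cited in the paper). Let $F(t, \bx)$ denote the conditional CDF of $\phi(\bY)$ given $\bX = \bx$ under $\kappa$, and let $F^{-1}(u, \bx) := \inf\{t \in [0,1] : F(t, \bx) \geq u\}$ be its generalized inverse. Joint Borel measurability of $(u, \bx) \mapsto F^{-1}(u, \bx)$ follows from monotonicity and left-continuity in $u$ together with measurability of $\bx \mapsto F(t, \bx)$ at each rational $t$. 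Define $\bbg(\bmeta, \bx) := \phi^{-1}\bigl(F^{-1}(\Phi(\eta_1), \bx)\bigr)$, where $\eta_1$ is the first coordinate of $\bmeta$ and the remaining coordinates are unused. By the probability integral transform, if $\bmeta \sim \rmN(\bm 0_m, \bI_m)$ is drawn independently of $\bX$, then conditionally on $\bX$ the law of $\bbg(\bmeta, \bX)$ equals $\kappa(\bX, \cdot)$, so $(\bbg(\bmeta, \bX), \bX) \stackrel{d}{=} (\bY, \bX)$.

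The main obstacle is upgrading this equality in distribution to the almost-sure identity demanded by the statement. The standard device is to construct $\bmeta$ not as a fresh independent variable but as a measurable function of $(\bY, \bX, V)$, where $V \sim \mathrm{Unif}(0,1)$ is an auxiliary variable independent of $(\bY, \bX)$ introduced on an enriched probability space. Its role is to randomize across any atoms of $F(\cdot, \bX)$: setting
\[
U := F\bigl(\phi(\bY)-, \bX\bigr) + V \cdot \bigl(F(\phi(\bY), \bX) - F(\phi(\bY)-, \bX)\bigr),
\]
a short calculation shows that $U \sim \mathrm{Unif}(0,1)$ and $U$ is independent of $\bX$, whether or not $F(\cdot, \bx)$ has atoms. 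Taking $\eta_1 := \Phi^{-1}(U)$ and $\eta_2, \ldots, \eta_m$ to be fresh independent $\rmN(0,1)$'s on the enlarged space then gives $\bmeta \sim \rmN(\bm 0_m, \bI_m)$ independent of $\bX$, with $F^{-1}(\Phi(\eta_1), \bX) = F^{-1}(U, \bX) = \phi(\bY)$ almost surely (because $U \in (F(\phi(\bY)-, \bX), F(\phi(\bY), \bX)]$ by construction), and hence $\bbg(\bmeta, \bX) = \bY$ almost surely. Verifying these two properties rigorously is the only substantive technical work; modulo this, the proof is a direct application of the probability integral transform.
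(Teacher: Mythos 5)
Your proof is correct in substance, but note that the paper itself never proves this lemma: it is imported by citation to \citet[Theorem 5.10]{kallenberg1997foundations} and \citet[Lemma 2.1]{zhou2023deep}, so there is no internal argument to match against. What you have written is essentially a self-contained reconstruction of the standard transfer-theorem proof: Borel isomorphism $\phi:\mathcal{Y}\to B\subseteq[0,1]$ via Kuratowski, a regular conditional distribution (which exists since $\mathcal{Y}$ is Polish), the conditional quantile map $F^{-1}(\Phi(\eta_1),\bx)$ to build $\bbg$, and R\"uschendorf's distributional transform $U=F(\phi(\bY)-,\bX)+V\,(F(\phi(\bY),\bX)-F(\phi(\bY)-,\bX))$ to manufacture the noise from $(\bY,\bX,V)$ and upgrade equality in distribution to the almost-sure identity $\bbg(\bmeta,\bX)=\bY$; the extra coordinates $\eta_2,\ldots,\eta_m$ handle the arbitrary $m\geq 1$ and the $\Phi^{-1}$ step converts uniform to Gaussian noise exactly as needed for $P_{\bmeta}=\rmN(\bm 0_m,\bI_m)$. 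This buys the reader an explicit construction rather than a black-box citation, and it correctly isolates the two genuinely delicate points: joint measurability of $(u,\bx)\mapsto F^{-1}(u,\bx)$ (your Galois-connection/monotonicity argument is the right one) and the need for an auxiliary $V$ on an enlarged probability space, which the lemma's phrasing (and Kallenberg's own statement) implicitly permits. Two small finishing touches you should add for full rigor: (i) $F^{-1}(\Phi(\eta_1),\bx)$ may fall outside $B$ on a null set, so define $\bbg$ as $\phi^{-1}\circ F^{-1}\circ\Phi$ on the Borel set where the composition lands in $B$ and as a fixed point $y_0\in\mathcal{Y}$ otherwise, which keeps $\bbg$ Borel and does not affect the almost-sure identity since $F^{-1}(U,\bX)=\phi(\bY)\in B$ a.s.; and (ii) spell out that the conditional form of the distributional-transform computation (uniformity of $U$ given $\bX=\bx$ for every $\bx$, hence independence of $U$ and $\bX$) is applied $\kappa(\bx,\cdot)$-pointwise and then integrated over $P_{\bX}$. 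With those additions the argument is complete and matches what the cited references establish.
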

{\revise Under appropriate conditions the above result also follows from Brenier's theorem \citep[Theorem 3.8]{villani2021topics}}. Moreover, by \citet[Lemma 2.2]{zhou2023deep}, $(\bY,\bX)\overset{d}{=}(\bbg(\bmeta,\bX),\bX)$ if and only if $\bbg(\bmeta,\bx)\sim P_{\bY\mid \bX=\bx}$ for every $\bx\in \mathcal{X}$. This identifies 
$\bbg$ as a conditional generator. Consequently, to draw from $P_{\bY\mid \bX}$, we sample $\bmeta\sim\mathrm{N} \left(\bm 0_m, \bm I_m \right)$ and output $\bbg(\bmeta,\bX)$. 

This perspective places conditional density estimation firmly within the realm of generative modeling. The task reduces to: given $n$ independent samples from $P_{\bY\bX}$, learn the conditional generator $\bbg$. \cite{zhou2023deep,ramesh2022gatsbi,song25wasserstein,liu2021wasserstein} leveraged this idea to develop a GAN-based (respectively Wasserstein-GAN) framework for conditional sampling. In contrast, our approach follows a similar path but replaces the potentially unstable min–max optimization of GANs with a principled minimization objective based on ECMMD discrepancy. The precise formulation is given in the following section.

% \vspace{-3mm}

% \vspace{-2mm}
\section{ECMMD Based Objective for \modelname{CGMMD}}\label{sec:objective_ECMMD}
% \vspace{-3mm}
Building on the generative representation of conditional distributions and the ECMMD discrepancy introduced earlier, our goal is to learn a conditional generator $\bbg$ by minimizing the ECMMD distance between the true conditional distribution $\bY\mid \bX$ and the generated conditional distribution $\bbg(\bmeta,\bX)\mid \bX$. We restrict our attention to a parameterized function class $\mathcal{G}$, as solving this unconstrained minimization problem over all measurable functions is intractable. To that end, we begin by defining the population objective 
\begin{align*}
\cL(\bg)
:&=\mathrm{ECMMD}^2\lrt{\cF_\cK, P_{\bY\mid\bX}, P_{\bg(\bmeta,\bX)\mid\bX}}\\
& = \E_{\bX\sim P_{\bX}}\bigl[\|\mu_{P_{\bY\mid \bX}}-\mu_{P_{\bg(\bmeta,\bX)\mid \bX}}\|^2_{\mathcal{K}}\bigr].
\end{align*}
The target generator is then given by $\bg^\star\in \arg\min_{\bg\in \mathcal{G}} \cL(\bg)$. Since the oracle objective $\cL(\cdot)$ is not directly available, we employ the estimation strategy outlined in Section~\ref{sec:ECMMD_estimation} to construct a consistent empirical approximation of $\cL(\bg)$. Given $n$ independent samples $(\bY_1,\bX_1),\ldots,(\bY_n, \bX_n)\sim P_{\bY\bX}$ and independent draws of noise variables $\bmeta_1\ldots,\bmeta_n{\sim}P_{\bmeta}$, we define the empirical objective,
\begin{align}\label{eqn:ECMMD_empirical_objective}
    \hat{\cL}(\bg)
    :&= \widehat{\rm \ECMMD}^2\lrf{\cF_\cK, P_{\bY\mid\bX},P_{\bg(\bmeta,\bX)\mid\bX}}\nonumber\\
    &= \frac{1}{nk_n}\sum_{i=1}^{n}\sum_{j\in N_{G(\sX_n)}(i)} \sfH\lrf{\bW_{i,\bg},\bW_{j,\bg}}
\end{align}
where $\sfH$ is defined from \eqref{eq:ecmmd_estimate} and $\bW_{i,\bg} := \lrf{\bY_i,\bg\lrf{\bmeta_i,\bX_i}}$ for all $1\leq i\leq n$. Our estimate of the conditional generator is then defined as
\begin{equation}\label{eqn:G_hat}
    \hat{\bg}\in \arg\min_{\bg\in \mathcal{G}} \hat{\cL}(\bg).
\end{equation}
With the framework now in place, we emphasize that \modelname{CGMMD} offers substantial flexibility to practitioners. In our experiments, we restrict $\bcG$ to deep neural networks, i.e., $\mathcal{G} = \left\{ \bg_{\theta}:\R^m\times\mathcal{X}\to\mathcal{Y} \mid \theta \in \R^{\mathcal{S}} \right\}$ where $\mathcal{S}$ is the total number of parameters of the neural network $\bg_{\theta}$. Here, \eqref{eqn:G_hat} reduces to solving $\hat{\theta} \in \arg \min_{\theta \in \R^{\mathcal{S}}} \hat{\mathcal{L}}(\bg_{\theta})$. A corresponding  pseudo-code is provided in Algorithm \ref{alg:cgmmd_training_batch_nn}.

\begin{algorithm}[!h]
\caption{\modelname{CGMMD} Training}
\label{alg:cgmmd_training_batch_nn}
\begin{algorithmic}[1]
\REQUIRE Data $\{(\bY_i,\bX_i)\}_{i=1}^n$, generator $\bg_\theta$, auxillary kernel function $\sfH$, noise $P_{\bmeta}$,
stepsize $\alpha$, epochs $E$, batch size $B$, neighbors $k_B$.
\STATE Sample $\{\bmeta_i\}_{i=1}^n \sim P_{\bmeta}$
\FOR{$e=1$ to $E$}
  \FOR{mini-batch $I\subset[n]$, $|I|=B$}
    \STATE $G \leftarrow k_B$-NN graph on $\{\bX_i\}_{i\in I}$
    \STATE $\bg_i\leftarrow\bg_\theta(\bmeta_i,\bX_i),\;
           \bW_{i,\bg}\leftarrow(\bY_i,\bg_i)\;\forall i\in I$
    \STATE $\hat\cL \leftarrow \frac{1}{Bk_B}\sum_{i\in I}\sum_{j\in N_G(i)}
            \sfH(\bW_{i,\bg},\bW_{j,\bg})$
    \STATE $\theta \leftarrow \theta - \alpha\nabla_\theta \hat\cL$
  \ENDFOR
\ENDFOR
\STATE \textbf{return} $\hat\theta\leftarrow\theta$
\end{algorithmic}
\end{algorithm}

In practice, the user may tailor the method by selecting the kernel 
$\sfK$, the function class $\mathcal{G}$, number of neighbors $k_n$, and the manner in which the auxiliary noise variable $\bmeta$ is incorporated into $\bg(\cdot,\bx)$. We discuss some of these potential choices as well as refinements to the \modelname{CGMMD} objective when $P_{\bm X}$ has discrete support in Appendix \ref{appendix:discussion_derandom}.

\section{Analysis and Convergence Guarantees}\label{sec:analysis_convg}

In this section, we analyze the error of estimating the true conditional sampler $\bbg$ (see Lemma \ref{lemma:noise_outsourcing}). This section is further divided into two parts. In Section \ref{sec:non_asymp_error} we begin by deriving a finite-sample bound on the error arising from replacing the true conditional sampler $\bbg$ with its empirical estimate $\hat\bg$. As a further contribution in Section \ref{sec:convg_sampler}, we establish the convergence of the conditional distribution induced by the empirical sampler to the true conditional distribution. For clarity and ease of exposition, we present simplified versions of the assumptions and main results here, while deferring the complete statements and proofs to Appendix \ref{appendix:convg}.

\subsection{Non-Asymptotic Error Bounds}\label{sec:non_asymp_error}
For the estimated empirical sampler $\hat\bg$ defined in \eqref{eqn:G_hat} the estimation error can be defined as (recall Definition \ref{eqn:ECMMD_definition}),
\begin{align}\label{eq:L_g_hat_def}
    \cL(\hat\bg)
    &= \ECMMD^2\left[\cF, P_{\bbg(\bmeta,\bX)\mid\bX}, P_{\hat\bg(\bmeta,\bX)\mid\bX}\right]\nonumber\\
    &= \E\left[\left\|\mu_{P_{\bbg(\bmeta,\bX)\mid\bX}} - \mu_{P_{\hat\bg(\bmeta,\bX)\mid\bX}}\right\|_\cK^2\mid\hat\bg\right],
\end{align}
where the expectations are taken over the randomness of $\bmeta$ and $\bX$ keeping the empirical sampler $\hat\bg$ fixed. In other words, the estimation error evaluates the squared ECMMD between the conditional distributions of $\bbg(\bmeta,\bX)$ and $\hat\bg(\bmeta,\bX)$ given $\bX$. In the following, we will provide non-asymptotic bounds on the estimation error $\cL(\hat\bg)$. To that end, for the rest of the article, we assume $\cY\subseteq\R^p$ for some $p\geq 1$ and we begin by rigorously defining the class of functions $\bcG$.
\paragraph*{Details of $\bcG$:} Let $\bcG = \bcG_{\cH,\cW,\cS,\cB}$ be the set of ReLU neural networks $\bg:\R^m\times\R^d\ra\R^p$ with depth $\cH$, width $\cW$, size $\cS$ and $\lrn{\bg}_\infty\leq \cB$. In particular, $\cH$ denotes the number of hidden layers and $\lrf{w_0,w_2,\ldots,w_{\cH}}$ denotes the width of each layer, where $w_0 = d+m$ and $w_{\cH} = p$ denotes the input and output dimension, respectively. We take $\cW = \max\lrs{w_0,w_1,\ldots, w_{\cH}}$. Finally, size $\cS = \sum_{i=1}^{\cH}w_{i}\lrf{w_{i-1}+1}$ refers to the total number of parameters of the network. To establish the error bounds, we make the following assumption about the parameters of $\bcG$.

\begin{assumption}\label{assumption:network_param_main}
    The network parameters of $\bcG$ satisfies $\cB\geq 1$ and $\cH,\cW\ra\infty$ such that,
    \begin{align*}
        \frac{\cH\cW}{(\log n)^{\frac{d+m}{2}}} \xrightarrow[]{n \to \infty} \infty\text{ and } \frac{\cB^2\cH\cS\log \cS\log n}{n} \xrightarrow[]{n \to \infty} 0.
    \end{align*}
\end{assumption}
The imposed conditions require that the neural network's size grows with the sample size, specifically that the product of its depth and width increases with \(n\). These assumptions are flexible enough to accommodate a wide range of architectures, but a key constraint is that the network size must remain smaller than the sample size. This arises from the use of empirical process theory~\citep{van1996weak, bartlett2019nearly} to control the stochastic error in the estimated generator. Similar conditions appear in recent work on conditional sampling~\citep{zhou2023deep, liu2021wasserstein, song25wasserstein} and in convergence analyses for deep nonparametric regression~\citep{hieber2020nonparametric, kohler2019rate, nakada2020adaptive}. We also make the following technical assumptions.

\begin{assumption}\label{assumption:bias_convergence_main}
The following conditions on $P_{\bY\bX}$, the kernel $\sfK$, the true conditional sampler $\bbg$ and the class $\bcG$ holds.
    \begin{enumerate}
        \item $P_{\bX}$ is supported on $\cX\subseteq\R^d$ for some $d>0$ and $\left\|\bX_1 - \bX_2\right\|_2$ has a continuous distribution for $\bX_1,\bX_2\sim P_{\bX}$.
        \item Moreover $\bX\sim P_{\bX}$ is sub-gaussian, that is \footnote{We use the notation $a\lesssim_{\theta}b$ to imply $a\leq C_\theta b$ for some constant $C_{\theta}>0$ depending on the parameter $\theta$. In particular $a\lesssim b$ implies $a\leq Cb$ for some universal constant $C>0$. Henceforth take $\bm\theta = (d,m,p,\sfK)$.}, $\P\left(\left\|\bX\right\|_2>t\right)\lesssim\exp\left(-t^2\right)$ for all $t>0$.
        \item The target conditional sampler $\bbg:\R^m\times\R^d\ra\R^p$ is uniformly continuous with $\lrn{\bbg}_{\infty}\leq 1$.
        \item \label{itm:assumption_lipschitz_main} For any $\bg\in \bcG$ consider 
        \begin{equation*}
            h_{\bg}(\bx) = \E\left[\sfK(\bY,\cdot) - \sfK\left(\bg\left(\bmeta,\bX\right), \cdot\right)\middle|\bX = x\right]
        \end{equation*}
        % $h_{\bg}(\bx) = \E\left[\sfK(\bY,\cdot) - \sfK\left(\bg\left(\bmeta,\bX\right), \cdot\right)\middle|\bX = x\right]$ 
        and assume that $\left|\langle h_{\bg}(\bx), h_{\bg}(\bx_1) - h_{\bg}(\bx_2)\rangle\right|\lesssim\|\bx_1-\bx_2\|_2,$ for all $\bx, \bx_1, \bx_2\in \cX$ where the constant is independent of $\bg$.
    \end{enumerate}
\end{assumption}

The first two assumptions are standard in the nearest neighbor literature and have been studied in the context of conditional independence testing using nearest neighbor-based methods~\citep{huang2022kernel, deb2020measuring, azadkia2021simple, borgonovo2025convexity, dasgupta2014}. The first, concerning uniqueness in nearest neighbor selection, can be relaxed via tie-breaking schemes (see Section 7.3 in~\citet{deb2020measuring}), though we do not pursue this direction. The second, on the tail behavior of the predictor \(\bX\), can be weakened to include heavier-tailed distributions, such as those satisfying sub-Weibull conditions~\citep{vladimirova2020sub} (also see \eqref{eq:assumption_tail}). The third assumption is mainly for technical convenience; similar conditions appear in prior work on neural network-based conditional sampling~\citep{zhou2023deep, song25wasserstein, liu2021wasserstein}. Its uniform continuity condition can also be relaxed to continuity (see Appendix~\ref{appendix:convg}).

\begin{remark}\label{remark:lipschitz_assumption}
    Assumption \ref{assumption:bias_convergence_main}.\ref{itm:assumption_lipschitz_main} is arguably the most critical in our analysis. It quantifies the sensitivity of the conditional mean embeddings to changes in the predictor $\bX$, and is essential for establishing concentration of the nearest-neighbor-based ECMMD estimator (see \eqref{eq:ecmmd_estimate}) around its population counterpart. Similar assumptions have been used in prior work on nearest neighbor methods \citep{huang2022kernel,deb2020measuring, azadkia2021simple, dasgupta2014}. As noted in \citet[Section 4]{azadkia2021simple}, omitting such regularity conditions can lead to arbitrarily slow convergence rates. While the locally lipschitz-type condition can be relaxed, for example to Hölder continuity upto polynomial factors (see \eqref{eq:assumption_lipschitz}) it remains a key assumption for our theoretical guarantees. We further elaborate on this assumption in Appendix~\ref{appendix:lipschitz_assumption_discussion}.
\end{remark}

Under the above assumptions, we are now ready to present our main theorem on the error incurred by using the empirical sampler $\hat\bg$.

\begin{theorem}[Simpler version of Theorem \ref{thm:convergence_general}]\label{thm:bdd_empirical_sampler_main}
    Adopt Assumption \ref{assumption:K_main}, Assumption \ref{assumption:network_param_main} and Assumption \ref{assumption:bias_convergence_main}. Moreover take
    % \begin{equation*}
    %     \omega_{\bbg}(r) := \sup\left\{\left\|\bbg(\bm x) - \bbg(\bm y)\right\|_2: \|\bm x - \bm y\|_2\leq r\right\}
    % \end{equation*}
    % \begin{align*}
    %     \omega_{\bbg}(r):=\sup_{\substack{\bx,\bm y\in \R^{d+m}\\ \|\bx-\bm y\|\leq r}}\left\|\bbg(\bm x) - \bbg(\bm y)\right\|_2
    % \end{align*}
    $\omega_{\bbg}(r) := \sup\left\{\left\|\bbg(\bm x) - \bbg(\bm y)\right\|_2:\bm x,\bm y\in \R^{d+m}, \|\bm x - \bm y\|_2\leq r\right\}$ 
    to be the optimal modulus of continuity of the true conditional sampler $\bbg$. Let $k_n = o\left(\sqrt{n}\right)$, then for any $\delta\in (0,1)$, with probability at least $1-\delta$,
    \begin{align*}
        \cL\left(\hat\bg\right)
        \lesssim_{\bm\theta}
        &\  \frac{\mathrm{poly}\log n}{n^{\frac{1}{2d}}} + \sqrt{\frac{\cB^2\cH\cS\log\cS\log n}{n}}\\
        & \qquad + \omega_{\bbg}\left(\frac{2\sqrt{\log n}}{\left(\cH\cW\right)^{\frac{1}{d+m}}}\right) + \sqrt{\frac{\log\left(1/\delta\right)}{n}}.
    \end{align*}
\end{theorem}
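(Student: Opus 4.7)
The plan is to adopt the classical approximation–estimation decomposition, tailored to the neural-network class $\bcG$ and the $k$-NN-based empirical objective. Since $\hat\bg\in\arg\min_{\bg\in\bcG}\hat\cL(\bg)$, one has the oracle inequality
\begin{align*}
    \cL(\hat\bg)\;\leq\;\inf_{\bg^\star\in \bcG}\cL(\bg^\star)\;+\;2\sup_{\bg\in \bcG}\left|\hat\cL(\bg)-\cL(\bg)\right|,
\end{align*}
and the task is to control the two pieces separately.

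For the approximation term I would invoke a standard deep ReLU approximation result (in the spirit of Yarotsky or Shen--Yang--Zhang) to produce a $\bg^\star\in \bcG_{\cH,\cW,\cS,\cB}$ whose sup-norm error on the ball of radius $\sqrt{\log n}$ (the sub-Gaussian truncation region for $\bX$) is bounded by a constant multiple of $\omega_{\bbg}\lrf{2\sqrt{\log n}/(\cH\cW)^{1/(d+m)}}$. Lipschitz continuity and boundedness of $\sfK$ (Assumption \ref{assumption:K_main}) then upgrade this sup-norm bound on $\bbg-\bg^\star$ to a bound of the same order on $\cL(\bg^\star)=\E\lrn{\mu_{P_{\bbg(\bmeta,\bX)\mid\bX}}-\mu_{P_{\bg^\star(\bmeta,\bX)\mid\bX}}}_\cK^2$, where the squared modulus is absorbed into the modulus itself via the global boundedness of $\omega_{\bbg}$. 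This yields the third term of the theorem.

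The estimation term I further split as $\sup_\bg|\E\hat\cL(\bg)-\cL(\bg)|+\sup_\bg|\hat\cL(\bg)-\E\hat\cL(\bg)|$. The bias piece is controlled via Assumption \ref{assumption:bias_convergence_main}.\ref{itm:assumption_lipschitz_main}: the inner-product expansion of the squared RKHS norm and the local Lipschitz behavior of $\langle h_\bg(\bX_i),h_\bg(\bX_j)\rangle$ convert the typical $k$-NN radius bound $\|\bX_i-\bX_j\|_2\lesssim \mathrm{poly}\log(n)\cdot(k_n/n)^{1/d}$ (which follows from standard $k$-NN distance concentration on the sub-Gaussian support after truncating to $\{\|\bX\|_2\leq \sqrt{\log n}\}$) into the first term of the theorem once $k_n=o(n^\gamma)$. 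For the fluctuation piece I would invoke the uniform concentration inequality for $k$-NN-based functionals developed in Appendix \ref{appendix:unif_conc}. Boundedness of $\sfH$ (since $|\sfK|\leq K$) and the Lipschitz dependence $|\sfH(\bW_{i,\bg_1},\bW_{j,\bg_1})-\sfH(\bW_{i,\bg_2},\bW_{j,\bg_2})|\lesssim \|\bg_1-\bg_2\|_\infty$ combine with the deep-ReLU covering bound $\log \cN(\epsilon,\bcG,\|\cdot\|_\infty)\lesssim \cS\cH\log(\cS\cB/\epsilon)$ (Bartlett--Harvey--Liaw--Mehrabian) to yield, via a chaining/union bound over an $\epsilon$-net, the $\sqrt{\cB^2\cH\cS\log\cS\log n/n}$ rate together with the $\sqrt{\log(1/\delta)/n}$ tail.

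The main obstacle is the fluctuation analysis for the NN-based functional: unlike classical empirical-process theory, the summands $\sfH(\bW_{i,\bg},\bW_{j,\bg})$ are not independent because the graph $G(\sX_n)$ couples samples through the neighbor structure. I expect this to be handled by exploiting the fact that the in-degree of the $k$-NN graph in $\R^d$ is bounded by a dimension-dependent constant times $k_n$, so after conditioning on $\sX_n$ the functional has coordinate-wise bounded differences of order $O(1/n)$ and a McDiarmid-type argument applies to each fixed $\bg$ in the net. The second subtlety is transferring these bounds uniformly across $\bcG$ without losing the tail exponent, which is precisely what Assumption \ref{assumption:network_param_main} accommodates by keeping $\cS$ small relative to $n$. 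Truncation of $\bX$ outside $\{\|\bX\|_2\leq \sqrt{\log n}\}$ is absorbed into polylogarithmic factors by the sub-Gaussian tail.
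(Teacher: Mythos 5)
Your proposal is correct and follows essentially the same route as the paper, which proves this statement as a special case of Theorem \ref{thm:convergence_general}: the oracle decomposition into approximation error plus a uniform estimation error, the Shen--Yang--Zhang ReLU approximation on the truncated region of radius $\sqrt{\log n}$ yielding the modulus-of-continuity term, the nearest-neighbor bias controlled through Assumption \ref{assumption:bias_convergence_main}.\ref{itm:assumption_lipschitz_main} together with a covering-ball analysis of $\|\bX_1-\bX_{\sfN(1)}\|_2$, and the fluctuation term controlled by the Appendix \ref{appendix:unif_conc} uniform concentration result combined with pseudo-dimension covering bounds and a McDiarmid tail. The only cosmetic differences are that the paper splits the estimation error into three pieces (conditioning on $\sX_n$) rather than your bias/fluctuation two-way split, and it applies McDiarmid to the supremum statistic as a whole (after bounding its expectation via Gaussian complexity) rather than per net element, which sidesteps the data-dependence of the empirical net.
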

The first two terms capture the stochastic error from the uniform concentration of the empirical loss around the population ECMMD objective. The third term reflects approximation error from estimating the true conditional sampler \(\bbg\) using neural networks in \(\bcG\). While we defer the proof of this result and its generalization to Appendix~\ref{appendix:proofof_empirical_sampler} and Appendix~\ref{appendix:convg}, respectively, we highlight the main novelty of our analysis here. Specifically, it integrates tools from recent advances in uniform concentration for non-linear functionals~\citep{maurer2019uniform, ni2024uniform}, nearest neighbor methods~\citep{azadkia2021simple, deb2020measuring}, and generalization theory, including neural network approximation of smooth functions~\citep{shen2019deep, zhang2022deep}. To our knowledge, this is the first application of these techniques to conditional generative modeling with nonparametric nearest neighbor objectives. Additionally, we establish a uniform concentration result for a broad class of nearest-neighbor-based functionals (Appendix~\ref{appendix:unif_conc}), which may be of independent interest.
% \vspace{-2mm}

\textcolor{black}{
\textbf{Adaptation to Intrinsic Dimensionality.} The explicit dependence on feature dimension $d$ in Theorem \ref{thm:bdd_empirical_sampler_main} (see also Theorem \ref{thm:convergence_general}) likely reflects the intrinsic hardness of conditional sampling. Such curse-of-dimensionality phenomenon are widely known to be a fundamental hardness in related problems of density estimation, conditional density estimation, and conditional mean estimation \citep{tsybakov2009, bilodeau2023minimax, li2022minimax}.}

% \vspace{-1mm}

\textcolor{black}{Importantly, \modelname{CGMMD} adapts to the intrinsic dimension of the conditioning variable. Specifically, we say that $\bX$ has intrinsic dimension $\bar d$ if, for all $t>0$ and $\vep\in(0,t]$, the set $B(t)\cap \mathrm{Support}(\bX)$ can be covered by at most $O((t/\vep)^{\bar d})$ closed balls of radius $\vep$. This notion, studied in prior works of \citet{deb2020measuring} and \citet{huang2022kernel}, is closely related to the \textit{Assouad dimension} \citep{fraser2020assouad} and includes settings where $\bX$ lies on a lower-dimensional manifold \citep{huang2022kernel}. Under this definition, the dimensional dependence in Theorem \ref{thm:bdd_empirical_sampler_main} (and similarly in Theorem \ref{thm:convergence_general}) can be improved by replacing $d$ with $\bar d$. In light of the manifold hypothesis \citep{whiteley2025statistical}, which posits that many high-dimensional datasets concentrate near low-dimensional manifolds, this adaptation is particularly relevant in practice. We validate this adaptation with a synthetic experiment in Appendix \ref{sec:exp_intrinsic_dimension}.}

\subsection{Convergence of the Empirical Sampler}\label{sec:convg_sampler}
As outlined earlier, in this section, we leverage the bound established in Theorem~\ref{thm:bdd_empirical_sampler_main} to demonstrate the convergence of the conditional distribution identified by the estimated sampler \( \hat\bg(\bmeta, \bX) \) to the true conditional distribution.

While Theorem~\ref{thm:bdd_empirical_sampler_main} provides a finite-sample quantitative guarantee on the loss incurred by using the estimated sampler in place of the true sampler \( \bg \), we now show that the conditional distribution induced by \( \hat\bg \) converges to the true conditional distribution. Furthermore, we strengthen this result by establishing convergence in terms of characteristic functions as well. By a classical result by Bochner (see Theorem \ref{thm:bochner}) every continuous positive definite function $\psi$ is associated with a finite non-negative Borel measure $\Lambda_{\psi}$. With this notation, we have the following convergence result with proof given in Appendix \ref{appendix:proofof_cond_dist_convg}.

% \vspace{-4mm}
\begin{corollary}\label{cor:cond_dist_convg}
    Suppose the assumptions from Theorem \ref{thm:bdd_empirical_sampler_main} hold. Then,
    \begin{align}\label{eq:MMD_convg}
        \E\left[\mathrm{MMD}^2\lrt{\cF, P_{\hat \bg(\bmeta,\bX)\mid\bX}, P_{\bbg(\bmeta,\bX)\mid\bX}}\right] \longrightarrow 0.
    \end{align}
    Moreover, if the kernel  $\sfK(\bm x,\bm y) = \psi(\bm x-\bm y)$ for some bounded, lipschitz continuous positive definite function $\psi$. Then,
    \begin{align}\label{eq:charac_convg}
        \int\lrf{\phi_{\hat\bg(\bmeta,\bX)\mid\bX}(\bm t) - \phi_{\bbg(\bmeta,\bX)\mid\bX}(\bm t)}^2\mathrm d\Lambda_{\psi}(\bm t) \overset{P}{\rightarrow} 0
    \end{align}
    where $\phi_{\hat\bg(\bmeta,\bX)\mid\bX}$ and $\phi_{\bbg(\bmeta,\bX)\mid\bX}$ are the characteristic functions of the conditional distributions $P_{\hat\bg(\bmeta,\bX)\mid\bX}$ and $P_{\bbg(\bmeta,\bX)\mid\bX}$ respectively.
\end{corollary}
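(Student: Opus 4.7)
The key observation enabling \eqref{eq:MMD_convg} is that, by the very definition of $\ECMMD^2$ in \eqref{eqn:ECMMD_definition}, the quantity $\cL(\hat\bg)$ in \eqref{eq:L_g_hat_def} is \emph{exactly} $\E\bigl[\mathrm{MMD}^2[\cF, P_{\hat\bg(\bmeta,\bX)\mid\bX}, P_{\bbg(\bmeta,\bX)\mid\bX}]\bigm|\hat\bg\bigr]$, so \eqref{eq:MMD_convg} is equivalent to showing $\E[\cL(\hat\bg)] \to 0$. To this end, the plan is to invoke Theorem~\ref{thm:bdd_empirical_sampler_main} and check that every term in its bound vanishes: the first term $\text{poly}\log(n)/n^{(1-\gamma)/d}\to 0$ since $0<\gamma<1$; the second term $\sqrt{\cB^2\cH\cS\log\cS\log n/n}\to 0$ directly by Assumption~\ref{assumption:network_param_main}; the modulus term $\omega_{\bbg}\bigl(2\sqrt{\log n}/(\cH\cW)^{1/(d+m)}\bigr)\to 0$ because the same assumption forces $(\cH\cW)^{1/(d+m)}\gg (\log n)^{1/2}$, so the argument tends to $0$ and uniform continuity of $\bbg$ (Assumption~\ref{assumption:bias_convergence_main}.3) takes over; finally $\sqrt{\log(1/\delta)/n}\to 0$ for any fixed $\delta$.

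To convert the resulting high-probability statement into an expectation bound, I would leverage the uniform boundedness $\cL(\hat\bg)\leq 4K$, which follows from $\|\mu_P\|_\cK^2 = \E[\sfK(\bY,\bY')]\leq K$ for every $P$ together with the triangle inequality in $\cK$. Letting $A_n$ denote the sum of the three deterministic terms above and splitting the expectation over the good event (probability $\geq 1-\delta$, contributing $A_n + c\sqrt{\log(1/\delta)/n}$) and its complement (contributing at most $4K\delta$), choosing $\delta = \delta_n\to 0$ slowly (e.g.\ $\delta_n = 1/n$) gives $\E[\cL(\hat\bg)] \lesssim A_n + c\sqrt{\log n/n} + 4K/n\to 0$, which yields \eqref{eq:MMD_convg}.

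For \eqref{eq:charac_convg}, my approach is to use the classical spectral representation of MMD under a translation-invariant kernel. If $\sfK(\bm x,\bm y) = \psi(\bm x-\bm y)$ with $\psi$ bounded, continuous and positive definite, Bochner's theorem (Theorem~\ref{thm:bochner}) provides a finite non-negative measure $\Lambda_\psi$ with $\psi(\bm z) = \int e^{i\langle \bm t,\bm z\rangle}\, d\Lambda_\psi(\bm t)$. Substituting this into $\mathrm{MMD}^2(P,Q) = \E_{P\otimes P}[\sfK] + \E_{Q\otimes Q}[\sfK] - 2\E_{P\otimes Q}[\sfK]$ and swapping expectations via Fubini (justified by finiteness of $\Lambda_\psi$ and boundedness of the complex exponentials) yields
\begin{equation*}
\mathrm{MMD}^2[\cF, P, Q] = \int \bigl|\phi_P(\bm t) - \phi_Q(\bm t)\bigr|^2\, d\Lambda_\psi(\bm t).
\end{equation*}
Applying this identity pointwise in $\bX$ to the conditional laws $P_{\hat\bg(\bmeta,\bX)\mid\bX}$ and $P_{\bbg(\bmeta,\bX)\mid\bX}$ and taking expectation over $\bX$ converts \eqref{eq:MMD_convg} directly into \eqref{eq:charac_convg}. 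The only delicate step in the whole argument is the high-probability-to-expectation conversion, where one must tune $\delta_n$ against the slowest deterministic rate; the Bochner representation step is classical and essentially mechanical once translation invariance and positive-definiteness are in hand.
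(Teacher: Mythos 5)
Your proposal is correct and follows essentially the same route as the paper: the first limit comes from the bound in Theorem~\ref{thm:bdd_empirical_sampler_main} (every term of which vanishes under Assumption~\ref{assumption:network_param_main} and uniform continuity of $\bbg$) together with boundedness of the kernel to pass from a high-probability statement to $\E[\cL(\hat\bg)]\to 0$, and the second limit is the spectral identity $\mathrm{MMD}^2(P,Q)=\int\lrm{\phi_P-\phi_Q}^2\,\mathrm d\Lambda_\psi$ for translation-invariant kernels. The only cosmetic differences are that you make the expectation step explicit via a $\delta_n=1/n$ truncation where the paper invokes the Dominated Convergence Theorem, and you rederive the Bochner/Fubini identity that the paper simply cites from \citet[Corollary 4]{sriperumbudur2010hilbert}.
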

The above results demonstrate the efficacy of \modelname{CGMMD}. In particular, they show that the conditional distribution learned by the conditional sampler in \modelname{CGMMD} closely approximates the true conditional distribution.
\section{Numerical Experiments}\label{sec:experiments}

We begin our empirical study with toy examples of bivariate conditional sample generation, then move to practical applications such as image denoising and super-resolution on MNIST \citep{yann2010mnist}, {\revise denoising on} CelebHQ \citep{karras2018progressive}, {\revise super-resolution on} STL10 \citep{coates2011analysis} and {\revise inpainting on FashionMNIST \citep{xiao2017fashion}}. We compare \modelname{CGMMD} with the methods in \citet{zhou2023deep} and \citet{song25wasserstein} on synthetic data {\revise and also add comparisons with conditional normalizing flows in synthetic benchmarks}. Moreover, to assess test-time complexity, we compare \modelname{CGMMD} with a diffusion model using classifier-free guidance \citep{ho2022classifier}. Due to space constraints, only selected results are shown here; full details appear in Appendix~\ref{app:add_expt}. {\revise In all the experiments presented here, we have used the Gaussian kernel and batch-size $200$.} \textcolor{black}{Codes are available at \href{https://github.com/anirbanc96/cgmmd}{\texttt{https://github.com/anirbanc96/cgmmd}}.} 
% We begin our empirical study with toy examples of bivariate conditional sample generation, then move to practical applications such as image denoising and super-resolution on MNIST \citep{yann2010mnist}, CelebHQ \citep{karras2018progressive}, and STL10 \citep{coates2011analysis}. We compare \modelname{CGMMD} with the methods in \citep{zhou2023deep,song25wasserstein} on synthetic data. Moreover, to assess test-time complexity, we compare \modelname{CGMMD} with a diffusion model using classifier-free guidance \citep{ho2022classifier}. Due to space constraints, only selected results are shown here; full details appear in Appendix~\ref{app:add_expt}.
% \begin{figure}[H]
%     \centering
%     \includegraphics[width=0.7\linewidth]{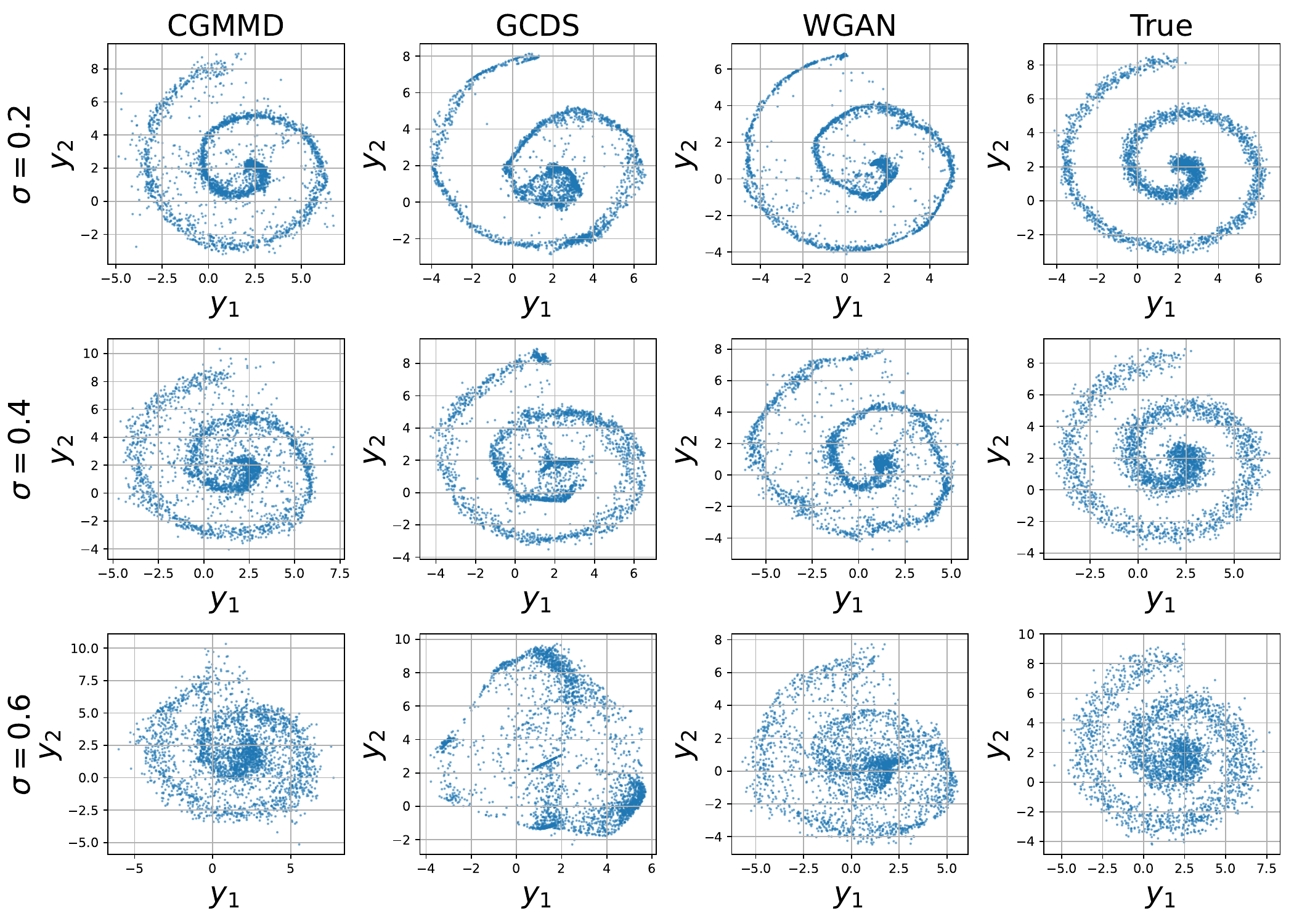}
%     \caption{Comparison of conditional generators on the Helix benchmark}
%     \label{fig:helix_comparison}
%     \vspace{-4.6mm}
% \end{figure}
% \begin{figure}[H]
%     \centering
%     \begin{minipage}[t]{0.492\textwidth}
%         \centering
%         \includegraphics[width=\linewidth]{plots/helix_comparison_sigmas_0p2_0p4_0p6.pdf}
%         \vspace{-2mm}
%         \caption{ \centering Helix Generation.}
%         \label{fig:helix_comparison}
%     \end{minipage}\hfill
%     \begin{minipage}[t]{0.492\textwidth}
%         \centering
%         \includegraphics[width=\linewidth]{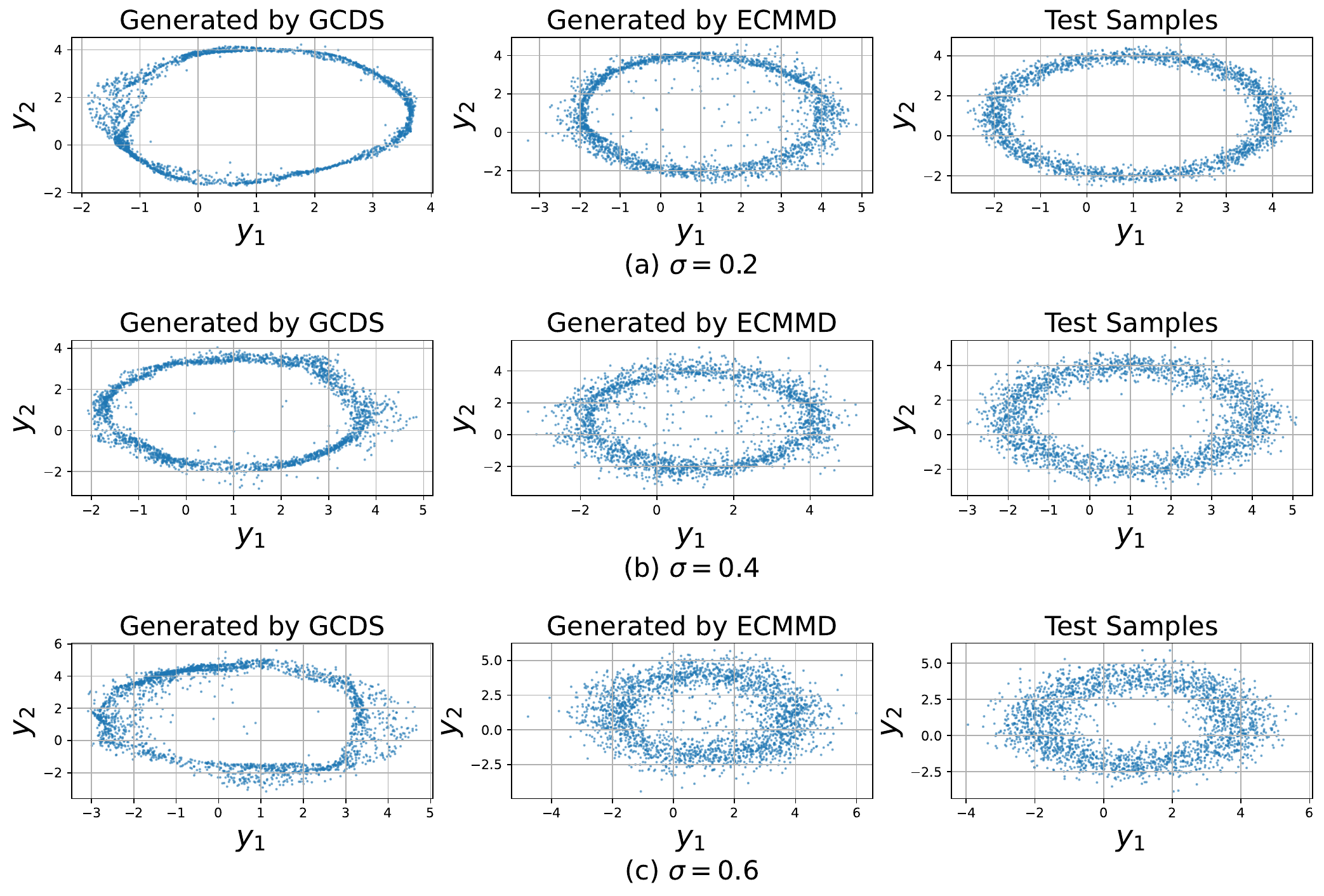}
%         \vspace{-2mm}
%         \caption{ \centering Circle Generation.}
%         \label{fig:circle_comparison}
%     \end{minipage}
%     \vspace{-6mm}
% \end{figure}

\begin{figure}[!h]
  \centering
  \includegraphics[width=\linewidth]{plots/Spiral_comaprison_full.pdf}
  \caption{ \centering Comparison of conditional generators on the Helix benchmark at $\bX = 1$.}
  \label{fig:helix_comparison}
\end{figure}

\begin{figure*}[h]
    \centering
    \includegraphics[width=\linewidth]{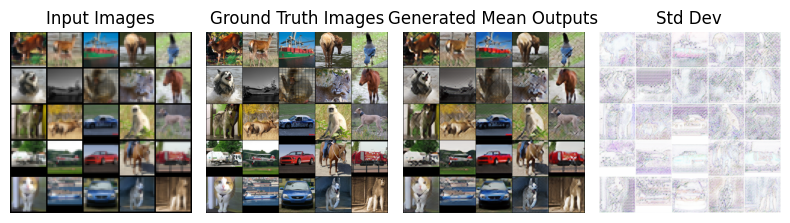}
    \caption{High resolution reconstructions of STL10 images from low resolution inputs. From left to right: The low resolution input images, the true high resolution images, mean of reconstructed images from \modelname{CGMMD}, pixel-wise standard deviation of the reconstructed images.}
    \label{fig:stl10_main}
\end{figure*}

\subsection{Conditional Bivariate Sampling}\label{sec:synth_expt}

In this section, we compare our proposed \modelname{CGMMD} with two baseline approaches: the \modelname{GCDS} \citep{zhou2023deep}, a vanilla GAN framework, and a Wasserstein-based modification, \modelname{WGAN} (trained with pure Wasserstein loss) \citep{song25wasserstein}. We consider a synthetic setup with $\bX\sim \mathrm{N}(0,1)$, $\bm U\sim \textnormal{Unif}[0,2\pi]$, and $\bm\vep_1,\bm\vep_2\stackrel{\text{iid}}{\sim}\mathrm{N}(0,\sigma^2)$. The response variables are $\bY_1 = 2\bX + \bm{U}\sin(2\bm U)+\bm\vep_1, \bY_2 = 2\bX + \bm{U}\cos(2\bm{U})+\bm\vep_2,$ and our goal is to generate conditional samples from $(\bY_1,\bY_2)\mid \bX$ at varying noise levels (\(\sigma\)). All three methods use the same two-hidden-layer feed-forward ReLU generator with noise $\bmeta$ concatenated to the generator input, and are evaluated at noise levels $\sigma~\in~\{0.2,0.4,0.6\}$. At low noise ($\sigma=0.2$), all three methods recover the helix structure well. As the noise level rises, however, \modelname{CGMMD} maintains the overall curvature, in particular at the ‘eye’ (the center of the helix), while the reconstructions from \modelname{GCDS} and \modelname{WGAN} degrade noticeably (See Figure~\ref{fig:helix_comparison}). In this regard we have noticed that without $\ell_1$ regularisation \modelname{WGAN} training is often unstable. We also explore an additional conditional bivariate setting (which imitates circular structure), with qualitatively similar results deferred to Appendix \ref{appendix:cycle_expt} {\revise and Appendix \ref{appendix:cnf_comparison}}.

\subsection{Image Super-Resolution and Denoising}\label{sec:real_data}

In this section, we evaluate the performance of \modelname{CGMMD} across two tasks: image super-resolution and image denoising. For this, we use the MNIST, STL-10 and CelebHQ datasets.

\begin{figure}[!ht]
    \centering
    \begin{minipage}[t]{0.48\textwidth}
        \centering
        \includegraphics[width=\linewidth]{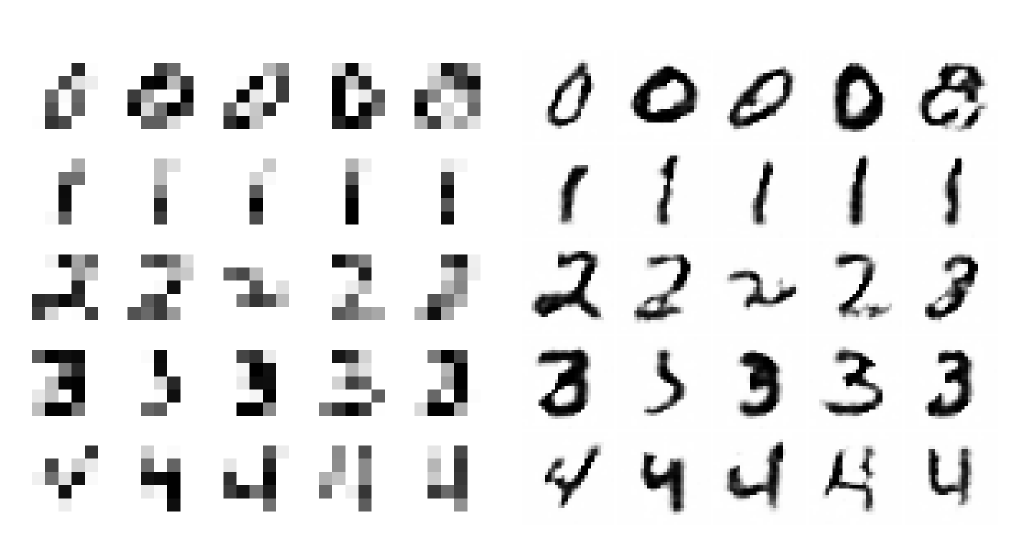}
        \caption{ \centering MNIST Super-Resolution with digits $\{0,1,2,3,4\}$. }
        \label{fig:mnist_super_resolution}
    \end{minipage}\hfill
    \begin{minipage}[t]{0.48\textwidth}
        \centering
        \includegraphics[width=\linewidth]{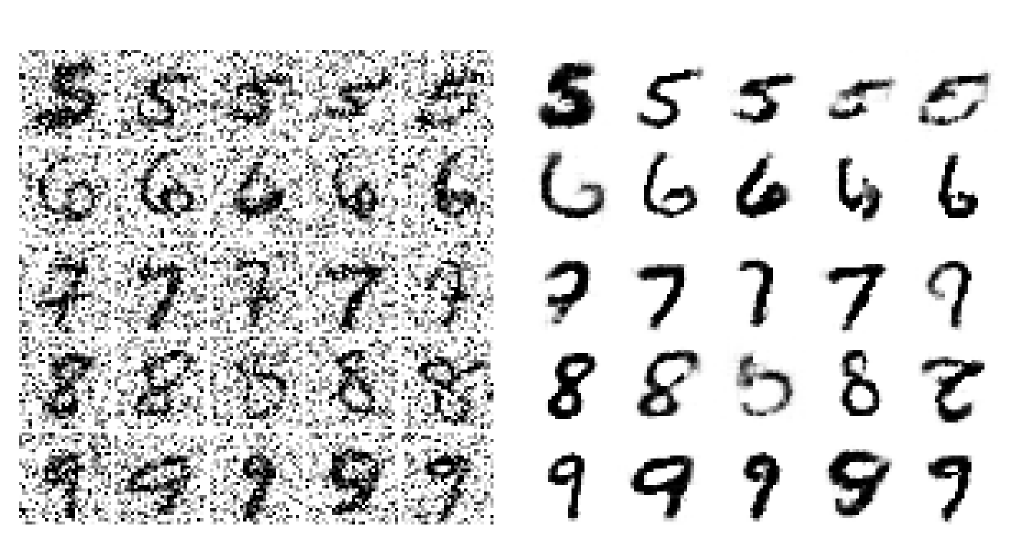}
        \caption{ \centering MNIST Denoising with digits $\{5,6,7,8,9\}$ at $\sigma = 0.5$.}
        \label{fig:mnist_denoising}
    \end{minipage}
\end{figure}

\paragraph{Super-Resolution.} We implement \modelname{CGMMD} for the $4\times$ image super-resolution task on MNIST ($7\times7$ to $28\times28$) and STL-10 \citep{coates2011analysis} ($3\times24\times24$ to $3\times96\times96$). This task can be naturally formulated as a conditional generation problem, where the goal is to produce a high-resolution image given a low-resolution input. Figure~\ref{fig:mnist_super_resolution} and Figure~\ref{fig:stl10_main} show that \modelname{CGMMD} accurately reconstructs high-resolution images from low-resolution inputs for both MNIST and STL-10. Moreover, the pixel-wise standard deviation image in Figure~\ref{fig:stl10_main} indicates that our method yields substantial diversity in the generated outputs. We emphasize that our goal is not to outperform state-of-the-art super-resolution methods~\citep{kim2016accurate, zhang2018image}, but rather to demonstrate the flexibility of our approach. Additional results and experimental details are provided in Appendix~\ref{appendix:MNIST_expt} and Appendix~\ref{appendix:stl10}.
% \\[0.3em]

\begin{figure}[!h]
  \centering
  \includegraphics[width=\linewidth]{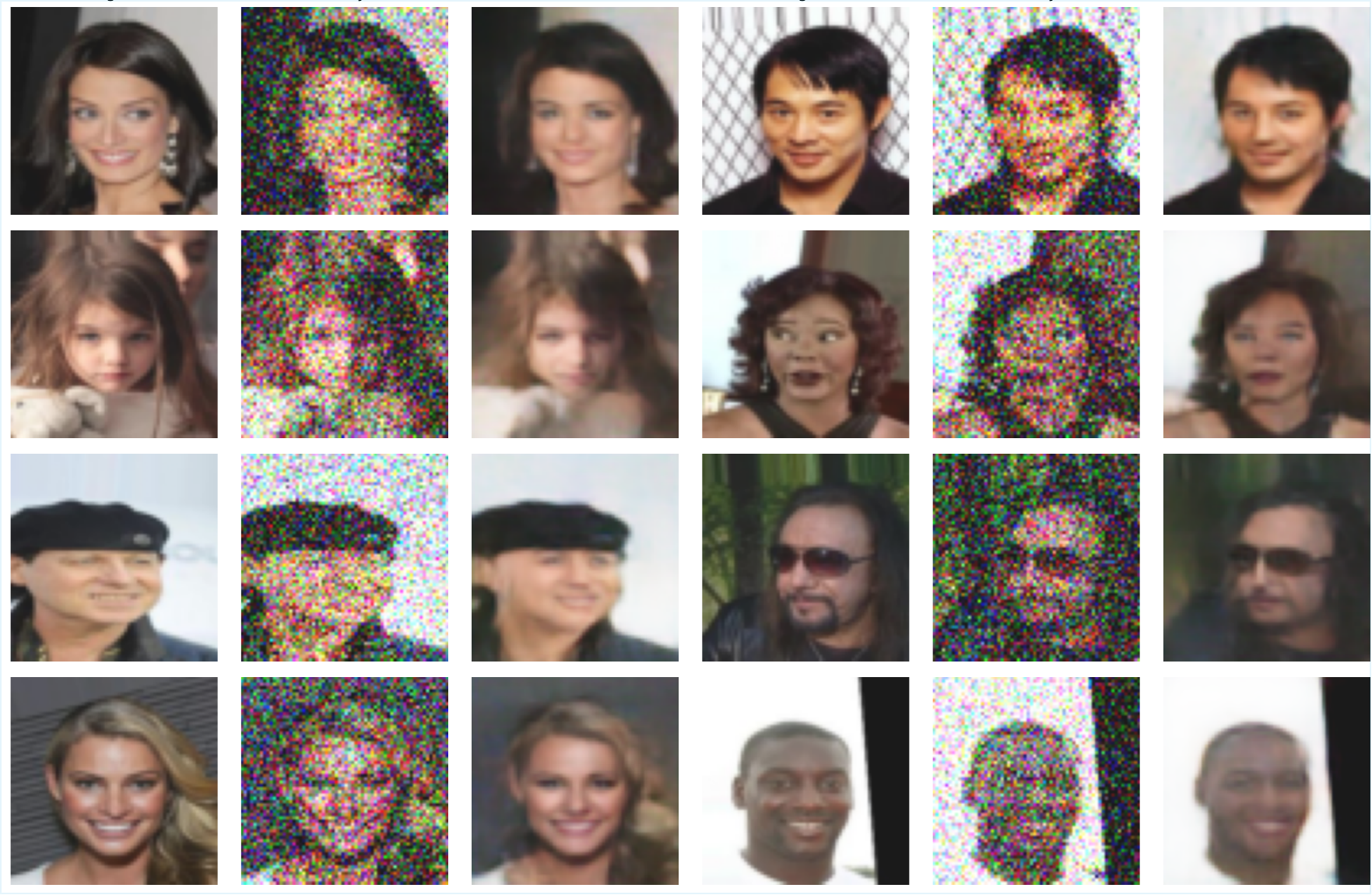}
  \caption{ CelebHQ denoising using \modelname{CGMMD} at $\sigma = 0.25$. From left to right: Original, Noisy and Denoised.}
  \label{fig:denoise_celebHQ}
\end{figure}

\textbf{Image Denoising.} We evaluate \modelname{CGMMD} on the image denoising task using the MNIST ($28\times 28$ iamges) and CelebHQ ($3\times 64\times 64$ images) datasets. In this task, the inputs are images (digits for MNIST and facial images for CelebHQ) corrupted with additive Gaussian noise ($\sigma=0.5,0.25$ for MNIST and CelebHQ respectively). We can indeed formulate this as a conditional generation problem. In Figure~\ref{fig:mnist_denoising}, the left $5$ columns represent the noisy digit images while the right $5$ columns are the clean images reconstructed using \modelname{CGMMD}. Additional experiments and details are given in Appendix \ref{appendix:MNIST_expt}. For the CelebHQ experiment, Figure \ref{fig:denoise_celebHQ} shows original images (left), noisy inputs (middle), and denoised outputs produced by \modelname{CGMMD} (right). The results demonstrate that our model effectively reconstructs clean facial images from noisy inputs and preserves quality even under high noise levels. Additional details are given in Appendix \ref{appendix:celebHQ}.

\begin{table}[h]
    \centering
    \footnotesize
    \caption{Comparison of \modelname{CGMMD} with Diffusion Model (DM) and Distilled Diffusion (DD).}
    \setlength{\tabcolsep}{4pt}
    \renewcommand{\arraystretch}{1.5}
    \begin{threeparttable}
        \begin{tabular}{|l|c|c|c|c|}
        \hline
        Model & PSNR & SSIM & Time/batch (s) & Time/img (s) \\
        \hline\hline
        DM
        & 13.326 & 0.861
        & \cellcolor{bgcolor2} 6.94
        & \cellcolor{bgcolor2} $5.42\times10^{-2}$ \\

        DD
        & 10.658 & 0.508
        & \cellcolor{bgcolor2} $1.18\times10^{-1}$
        & \cellcolor{bgcolor2} $9.2\times10^{-4}$ \\

        \modelname{CGMMD}
        & 8.922 & 0.718
        & \cellcolor{bgcolor2} $7.21\times10^{-2}$
        & \cellcolor{bgcolor2} $5.6\times10^{-4}$ \\
        \hline
        \end{tabular}
    \end{threeparttable}
    \label{tab:comparison_with_diffmodel}
\end{table}

\textbf{Comparison with Conditional Diffusion Models.} In Table \ref{tab:comparison_with_diffmodel}, we compare \modelname{CGMMD} with a diffusion model using classifier-free guidance~\citep{ho2022classifier} {\revise and progressive distilled diffusion \citep{meng2023distillation, salimans2022progressive}} on the MNIST image denoising task  ($\sigma = 0.9$). {\revise The results in Table \ref{tab:comparison_with_diffmodel} indicate that the diffusion model achieves higher-quality reconstructions but at a substantially higher computational cost compared to \modelname{CGMMD}. Distilled diffusion offers comparable performance to \modelname{CGMMD}, achieving better PSNR but lower SSIM, while incurring a moderate increase in computation. Overall, \modelname{CGMMD} provides a favorable trade-off, generating images of reasonable quality much faster, making it particularly well-suited for applications where rapid conditional sampling is essential.}

\section{Conclusion}
We introduced \modelname{CGMMD}, a conditional generative framework that learns the full conditional law of $\bY \mid \bX$ by directly minimizing an empirical ECMMD objective.
Unlike adversarial approaches, the proposed method avoids min--max optimization while retaining the flexibility of neural conditional generators and enabling efficient one-shot sampling.
Our nearest-neighbor-based construction provides a practical estimator of the conditional MMD loss and leads to a simple and scalable training algorithm.

On the theoretical side, we established non-asymptotic finite-sample error bounds for the learned sampler and proved convergence of the induced conditional distribution to the true conditional law.
These guarantees connect conditional generative modeling with tools from kernel embeddings, nearest-neighbor estimation, uniform concentration, and neural network approximation theory.

Empirically, \modelname{CGMMD} demonstrates consistent performance across synthetic conditional sampling tasks as well as image denoising, super-resolution, and inpainting benchmarks, achieving a favorable trade-off between accuracy and computational efficiency. We emphasize, however, that the primary goal of this work was not to surpass state-of-the-art methods on these benchmarks, but rather to illustrate the flexibility of the proposed framework and provide empirical support for the underlying theoretical developments. An important direction for future work is the further refinement and optimization of \modelname{CGMMD} to enhance its empirical performance.

While our assumptions on the network architecture require the network size to grow with the sample size to guarantee accurate approximation, we believe the analysis can be extended to fixed-architecture networks achieving arbitrarily small approximation error. Another promising direction is the study of flow-based conditional sampling methods \citep{hagemannposterior, hertrichgenerative}. We believe that the proof techniques developed in this work could also yield theoretical guarantees for flow-based generative sampling approaches trained with MMD objectives; we leave these directions for future work.

% \subsection{Super-resolution with STL10 dataset}\label{sec:stl10_main}
% Similar to the MNIST $4X$ super-resolution experiment, we apply \modelname{CGMMD} to reconstruct high-resolution $3\times 96\times 96$ images from low-resolution 
% $3\times 24\times 24$ color inputs from STL-10 \citep{coates2011analysis}. Our aim is not to surpass state-of-the-art super-resolution methods~\citep{kim2016accurate, zhang2018image}, but to demonstrate flexibility of our own approach. As shown in Figure~\ref{fig:stl10_main}, our method generates high-resolution images that closely resemble the ground truth. Furthermore, the pixel-wise standard deviation image demonstrates that our method produces substantial diversity in the generated outputs, highlighting the effectiveness of the \modelname{CGMMD} objective. We add details about this experiment in Appendix \ref{appendix:stl10}.

\section*{Acknowledgments}
We thank Bhaswar Bhattacharya for helpful discussions.

\section*{Impact Statement}
This paper presents work whose goal is to advance the field of Machine
Learning. There are many potential societal consequences of our work, none
which we feel must be specifically highlighted here.

% In the unusual situation where you want a paper to appear in the
% references without citing it in the main text, use \nocite

\bibliography{icml_references}
\bibliographystyle{icml2026}

%%%%%%%%%%%%%%%%%%%%%%%%%%%%%%%%%%%%%%%%%%%%%%%%%%%%%%%%%%%%%%%%%%%%%%%%%%%%%%%
%%%%%%%%%%%%%%%%%%%%%%%%%%%%%%%%%%%%%%%%%%%%%%%%%%%%%%%%%%%%%%%%%%%%%%%%%%%%%%%
% APPENDIX
%%%%%%%%%%%%%%%%%%%%%%%%%%%%%%%%%%%%%%%%%%%%%%%%%%%%%%%%%%%%%%%%%%%%%%%%%%%%%%%
%%%%%%%%%%%%%%%%%%%%%%%%%%%%%%%%%%%%%%%%%%%%%%%%%%%%%%%%%%%%%%%%%%%%%%%%%%%%%%%
\newpage
\appendix
\onecolumn

%%%%%%%%%%%%%%%%%%%%%%%%%%%%%%%%%%%%%%%%%%%%%%%%%%%%%%%%%%%%%%%%%%%%%%%%%%%%%%%
\part*{Supplementary Materials}

\tableofcontents

\newpage
\section{Selected Background and Influences}\label{sec:review}
Here we provide a concise overview of the most directly relevant lines of work that align with our approach to conditional generative modeling. We concentrate on selected contributions that either motivate or underpin our methodology, rather than attempting a full survey of the field. 

\paragraph{Statistical foundations of conditional density estimation} 
A rich line of work in statistics addresses conditional density estimation through nonparametric methods. Classical approaches include kernel and local-polynomial smoothing \citep{rosenblatt1969conditional,hyndman1996estimating, chen2000estimation,hall2005approximating} and regression-style formulations for conditional densities \citep{fan1996estimation,fan2004crossvalidation}. Alternative strategies exploit nearest-neighbor ideas \citep{lincheng1985strong} or expansions in suitable basis functions \citep{izbicki2016nonparametric,sugiyama2010least}. More recent frameworks, such as distributional regression \citep{hothorn2014conditional,rigby2005generalized,kock2025truly}, model the entire conditional distribution directly rather than focusing on low-order summaries. Together, these approaches form the statistical foundation for modern methods of conditional density estimation.

\paragraph{Conditional {\revise GAN and MMD Gradient Flows}.}
Alongside classical approaches, Conditional Generative Adversarial Networks (cGANs) extend the original GAN framework \citep{goodfellow2014generative} by conditioning both the generator and discriminator on side information such as labels or auxiliary features \citep{zhou2023deep,mirza2014conditional,baptista2024conditional,odena2017conditional}. Variants employ projection-based discriminators for improved stability \citep{miyato2018cgans} or architectures tailored to structured outputs such as image-to-image translation \citep{isola2017image,denton2015deep,reed2016generative}. Despite strong empirical results, cGANs often inherit the instability and mode-collapse issues of adversarial training, motivating alternative losses based on integral probability metrics such as MMD or Wasserstein distances \citep{ren2016conditional,liu2021wasserstein,huang2022evaluating,song25wasserstein}, which in turn inspire our ECMMD-based conditional generator. Among the most closely related works are \citet{ren2016conditional} and \citet{huang2022evaluating}. \citet{ren2016conditional} introduce an RKHS-to-RKHS operator-based embedding to measure pointwise differences between conditional distributions. However, their formulation relies on strong assumptions that may not hold in continuous domains \citep{song2009hilbert}, and the estimator incurs a high computational cost, up to $O(n^3)$ or $O(B^3)$, where $B$ is the batch size. In a related direction, \citet{huang2022evaluating} propose a measure equivalent to ECMMD for aleatoric uncertainty quantification and conditional sample generation. While their approach demonstrates strong empirical performance, it requires Monte Carlo sampling and potentially repeated sampling from both the generative model and the true conditional distribution, making it computationally intensive (up to $O(B^2)$). Furthermore, it remains unclear whether the learned generator consistently approximates the true conditional distribution. \textcolor{black}{Another closely related work is \citet{bouchacourt2016disco}, where the authors introduce a similar loss function using Rao's dissimilarity coefficients \citep{rao198224} for posterior sampling.}

{\revise Recently, another line of work has focused on (un)conditional sampling using Maximum Mean Discrepancy (MMD) gradient flows. In particular, \cite{arbel2019maximum, hagemannposterior, hertrichgenerative, galashovdeep} have proposed constructing Wasserstein gradient flows of the MMD and leveraging them for both conditional and unconditional sample generation. Notably, the recent work of \cite{hagemannposterior} considers the same conditional sampling problem studied in this paper and proposes a flow-based model based on the energy distance (equivalently, a negative distance kernel). However, the key distinction between their work and ours lies in our MMD-GAN–based formulation, flexibility in the choice of kernels, as well as the rigorous theoretical analysis we provide, including finite-sample guarantees and comprehensive convergence results.}

\paragraph{Simulation-based inference.}
A parallel line of work on conditional sample generation appears in the simulation-based inference literature. One of the earliest and most popular approaches is Approximate Bayesian Computation (ABC) (see \cite{martin2024approximating} and references within), which aims to draw approximate samples from the posterior distribution. Recent advances leverage modern machine learning to improve this process, typically by learning surrogate posteriors from simulations using neural networks (see \cite{cranmer2020frontier} for a survey). For example, \citet{ramesh2022gatsbi} propose a GAN-based approach, while others employ normalizing flows as a powerful alternative \citep{rezende2015variational,papamakarios2021normalizing,linhart2022validation}. We refer readers to \citet{zammit2024neural} for a comprehensive review of recent developments.

\newpage
\section{Proofs of Theorem \ref{thm:bdd_empirical_sampler_main} and Corollary \ref{cor:cond_dist_convg}}

\subsection{Proof of Theorem \ref{thm:bdd_empirical_sampler_main}}\label{appendix:proofof_empirical_sampler}
Under Assumption \ref{assumption:K_main}, Assumption \ref{assumption:bias_convergence_main} and Assumption \ref{assumption:network_param_main} Theorem \ref{thm:bdd_empirical_sampler_main} follows as a special case of Theorem \ref{thm:convergence_general}. To that end, from Theorem \ref{thm:convergence_general} note that for any $\delta>0$ with probability atleast $1-\delta$, there exists an universal constant $C>0$ such that,
\begin{align}\label{eq:L_ghat_bdd_main}
    \cL(\hat\bg) \lesssim_{\bm\theta}
    & \sqrt{\frac{\cB^2\cH\cS\log\cS\log n}{n}} + \frac{\text{poly}\log (n)}{n^{\frac{1}{2d}}}\\
    & + \underbrace{1-\Phi\lrf{R}^m\lrf{1-C\exp\lrf{-R^2}}}_{L_1} + \underbrace{\sqrt{d+m}\omega_{\bbg}^{E}\lrf{2R\lrf{\cH\cW}^{-\frac{1}{d+m}}}}_{L_2} + \sqrt{\frac{\log\lrf{1/\delta}}{n}}\nonumber
\end{align}
for any $R>0$ with $E = [-R,R]^d$ and,
\begin{align*}
    \omega_{\bbg}^{E}(r) = \sup\left\{\left\|\bbg(\bx) - \bbg(\bm y)\right\|_2: \|\bm x-\bm y\|_2\leq r, \bm x,\bm y\in E\right\}.
\end{align*}
Note that from Assumption \ref{assumption:bias_convergence_main} we know $\bbg$ is uniformly continuous, hence,
\begin{align}\label{eq:omega_g_bdd}
    \omega_{\bbg}^{E}(r)\leq \omega_{\bbg}(r)\text{ for all }r>0.
\end{align}
Moreover, take $R = R_n = \sqrt{(\log n)}$ then we can simplify the terms $L_1$ and $L_2$ as follows. To that end recall the expression $L_1$ and note that $\Phi$ is the CDF of standard Gaussian distribution. Then as $n\ra\infty$ we have the lower bound
\begin{align*}
    \Phi(R_n)\geq 1-\frac{\exp(-R_n^2/2)}{\sqrt{2\pi}R_n},
\end{align*}
and hence by Taylor series expansion,
\begin{align*}
    \Phi(R_n)^m\geq 1-\frac{m\exp\left(-R_n^2/2\right)}{\sqrt{2\pi}R_n} + O\left(\frac{\exp\left(-R_n^2\right)}{R_n^2}\right).
\end{align*}
Then as $n\ra \infty$ and recalling $R_n = \sqrt{\log n}$, 
\begin{align}\label{eq:L1_bdd}
    L_1 = 1-\Phi(R_n)^m\left(1-Ce^{-R_n^2}\right)\lesssim \frac{m\exp\left(-R_n^2/2\right)}{\sqrt{2\pi}R_n} + e^{-R_n^2}\lesssim \frac{1}{\sqrt{n}}.
\end{align}
With this choice of $R = R_n$ and recalling \eqref{eq:omega_g_bdd} we can simplify $L_2$ as,
\begin{align}\label{eq:L2_bdd}
    L_2\lesssim \omega_{\bbg}\left(\frac{2\sqrt{\log n}}{\left(\cH\cW\right)^{\frac{1}{d+m}}}\right).
\end{align}
The proof is now completed by combining the bounds from \eqref{eq:L_ghat_bdd_main}, \eqref{eq:L1_bdd} and \eqref{eq:L2_bdd}.

\subsection{Proof of Corollary \ref{cor:cond_dist_convg}}\label{appendix:proofof_cond_dist_convg}
The proof of the first convergence follows directly by observing that \( \omega_{\bbg}(r) \to 0 \) as \( r \to 0 \) by definition, and applying Theorem~\ref{thm:bdd_empirical_sampler_main}, the expression for \( \cL(\hat\bg) \) in \eqref{eq:L_g_hat_def}, and the Dominated Convergence Theorem (DCT).

The proof for the second convergence is an immediate consequence of the first convergence and \citet[Corollary 4]{sriperumbudur2010hilbert}.

\newpage

\section{Additional Experiments}\label{app:add_expt}
In this section, we present full details about the experiments from Section \ref{sec:experiments} and additional experiments to depict usefulness of our approach \modelname{CGMMD} across varied tasks. In all of the experiments, unless otherwise stated we take $\sfK$ to be the Gaussian kernel, and use the AdamW optimizer with default parameters.

\subsection{Synthetic setup: Circle Generation}\label{appendix:cycle_expt}
Much like the helix-generation experiment in Section~\ref{sec:synth_expt}, we now consider a synthetic sampling setup where the task remains to generate conditional samples from a bivariate distribution, but here the conditional distribution follows a circular rather than a spiral structure. 

Specifically, let $\bX\sim \mathrm{N}(0,1)$, $\bm U\sim \textnormal{Unif}[0,2\pi]$, and $\bm\varepsilon_{1},\bm\varepsilon_{2}\stackrel{\text{iid}}{\sim}\mathrm{N}(0,\sigma^{2})$. Define the response variables as 
\begin{align}\label{eq:circle_dist}
\bY_{1} = \bX + 3\sin(\bm U)+\bm\varepsilon_{1}, 
\quad 
\bY_{2} = \bX + 3\cos(\bm U)+\bm\varepsilon_{2}.
\end{align}

In this experiment we compare our proposed \modelname{CGMMD} with the \modelname{GCDS} method of \citet{zhou2023deep}. As before, both methods employ the same two-hidden-layer feed-forward ReLU generator with noise $\eta$ concatenated to the input, and we evaluate performance at noise levels $\sigma\in\{0.2,0.4,0.6\}$.

At low level noises both methods perform similarly. However, at higher noise levels, \modelname{CGMMD} preserves the circular shape of the conditional distribution (Figure~\ref{fig:circle_comparison}), whereas \modelname{GCDS} tends to produce elliptical distortions.

\begin{figure}[!h]
    \centering
    \includegraphics[width=\linewidth]{plots/circle_comparison_sigmas_0p2_0p4_0p6.pdf}
    \caption{Comparison of conditional generators on the Circle benchmark}
    \label{fig:circle_comparison}
\end{figure}

In Figure~\ref{fig:ecmmd_circle_reconstruction}, we also demonstrate how quickly our approach \modelname{CGMMD} picks up the circular structure for the setting laid out in Section \ref{sec:synth_expt} at no more than $100$ epochs even with a small two-hidden-layer feed-forward ReLU generator network. 
\begin{figure}[!h]
    \centering
    \includegraphics[width=0.9\linewidth]{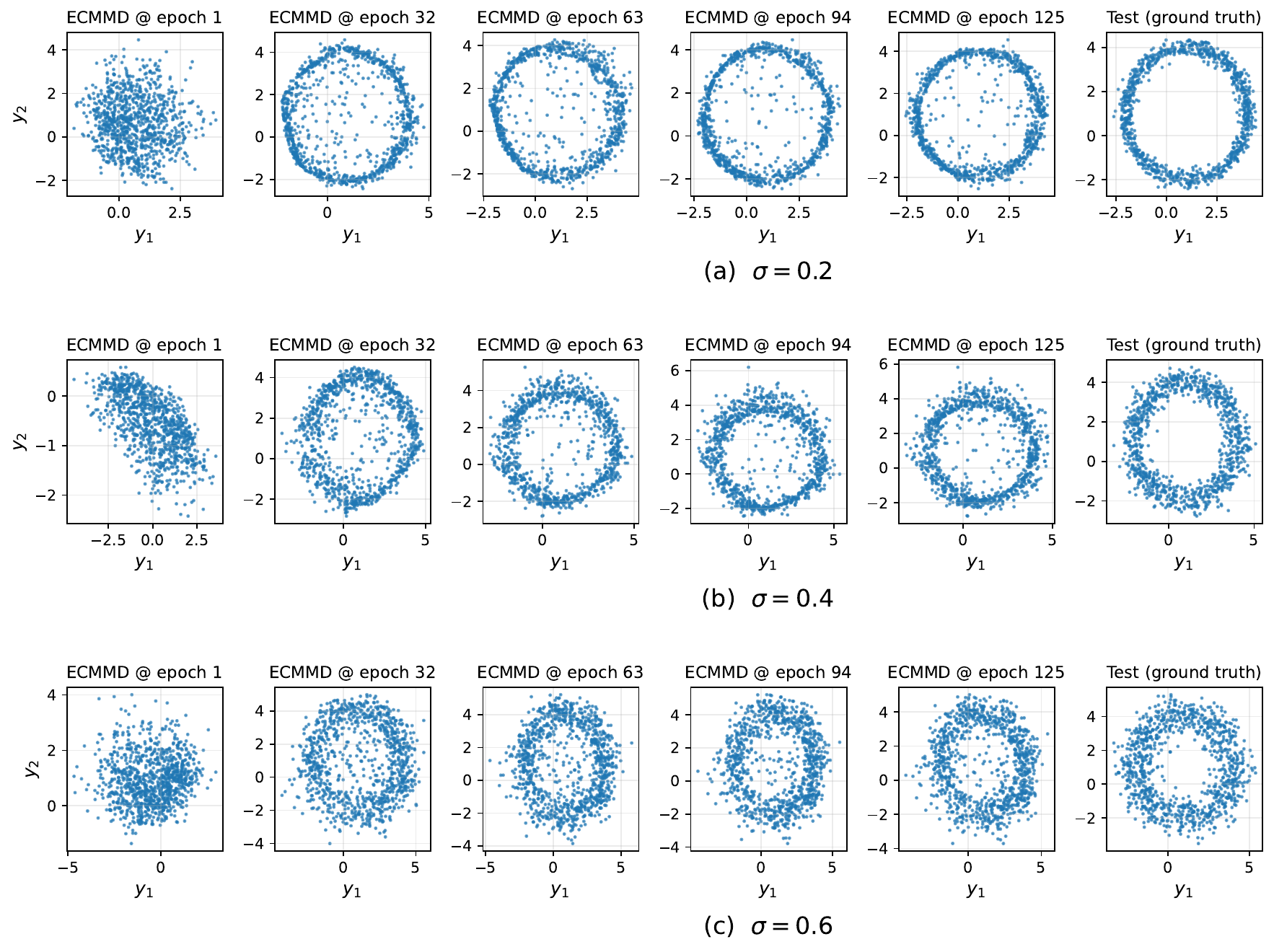}
    \caption{Conditional samples of $(\bY_1,\bY_2)\mid \bX=1$ for circle experiment, generated by \modelname{CGMMD} while training.}
    \label{fig:ecmmd_circle_reconstruction}
\end{figure}

\textcolor{black}{
\subsection{Adaptation to Intrinsic Dimension}\label{sec:exp_intrinsic_dimension}
In this section, we conduct a synthetic experiment to validate our claim from Section \ref{sec:analysis_convg} that \modelname{CGMMD} adapts to intrinsic dimensionality. Specifically, we take $\bZ\sim \rmN_5(\bm 0, \bm I_5)$ and define $\bX = \bm{A}\bZ$, where $\bm A\in \R^{d\times 5}$ has orthonormal columns. We then generate the response variable as $\bY = \bX^\top\bm 1/\sqrt{d} + \vep$, where $\vep\sim \rmN(0,1)$. In this experiment, we use \modelname{CGMMD} to train a conditional generator for the distribution of $\bY\mid \bX$. We evaluate the quality of generated samples by computing the MSE between the conditional mean and standard deviation estimated from the learned distribution and the corresponding quantities under the true conditional distribution at fixed conditioning values corresponding to $\bZ = 0.5\bm 1_5$ and $\bZ = -0.5\bm 1_5$.
\begin{figure}[!h]
    \centering
    \begin{subfigure}{0.49\linewidth}
        \centering
        \includegraphics[width=\linewidth]{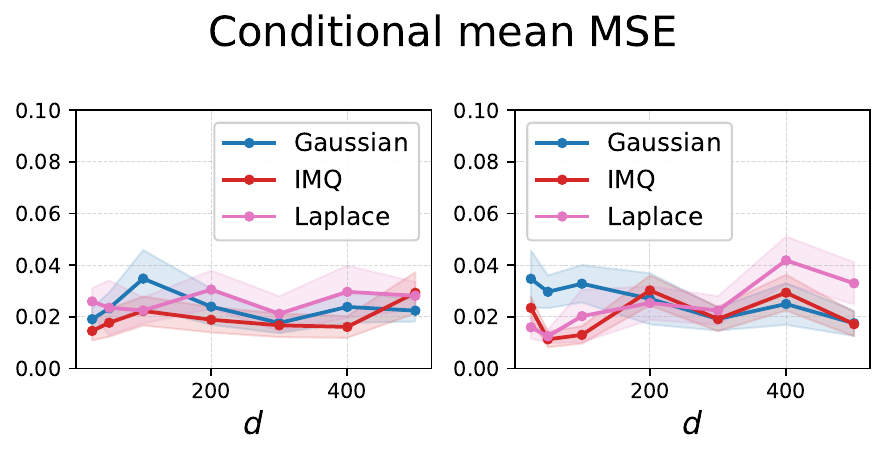}
        \caption{(left) $\bZ = 0.5\bm 1$ and (right) $\bZ = -0.5\bm 1$.}
        \label{fig:mean_mse}
    \end{subfigure}
    \hfill
    \begin{subfigure}{0.49\linewidth}
        \centering
        \includegraphics[width=\linewidth]{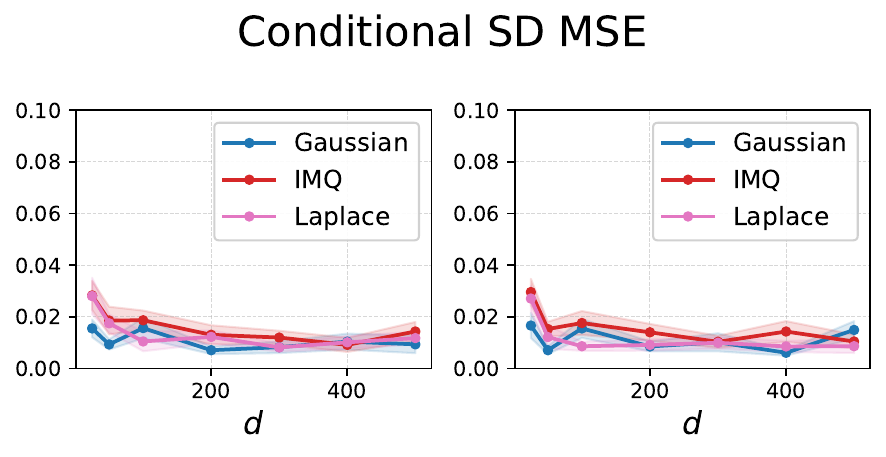}
        \caption{(left) $\bZ = 0.5\bm 1$ and (right) $\bZ = -0.5\bm 1$.}
        \label{fig:sd_mse}
    \end{subfigure}
    \caption{MSE of (a) Conditional Mean and (b) Conditional SD with increasing ambient dimension $d$.}
    \label{fig:mse_comparison}
\end{figure}
We report the variation in MSE values across ambient dimensions $d\in \{25, 50, 100, 200, 300, 400, 500\}$ in Figure \ref{fig:mse_comparison} with Gaussian, Laplace and IMQ kernels. From Figures \ref{fig:mean_mse} and \ref{fig:sd_mse}, we observe that the MSE values remain largely unchanged as the ambient dimension increases. This supports the claim that the error behavior of \modelname{CGMMD} depends primarily on the intrinsic dimensionality rather than the ambient dimensionality.}

{\revise
\subsection{Comparisons with Normalizing Flows}\label{appendix:cnf_comparison}
In this section we compare the \modelname{CGMMD} with conditional normalizing flows in two settings. For the first experiment we consider the setting from Section \ref{appendix:cycle_expt} and for the second setting we consider the two-moons benchmarking example from simulation based inference \citep{lueckmann2021benchmarking, ramesh2022gatsbi}. 

\subsubsection{Circle Generation}
Recall the conditional distribution $(\bY_1,\bY_2)\mid\bX$ from \eqref{eq:circle_dist}. In this experiment, we compare our proposed \modelname{CGMMD} with a conditional normalizing flow (cNF) following the general framework of \cite{winkler2019learning}. Unlike their coupling-layer-based architecture, our flow uses 2–3 Masked Autoregressive Transform (MAF) layers \citep{papamakarios2017masked}, interleaved with permutation layers, as the core building blocks. For \modelname{CGMMD} as before we employ two-hidden-layer feed-forward ReLU generator with noise $\eta$ concatenated to the input, and we evaluate performance at noise levels $\sigma\in\{0.4,0.8\}$.

\begin{figure}[h!]
    \centering
    \begin{minipage}{0.48\textwidth}
        \centering
        \includegraphics[width=\linewidth]{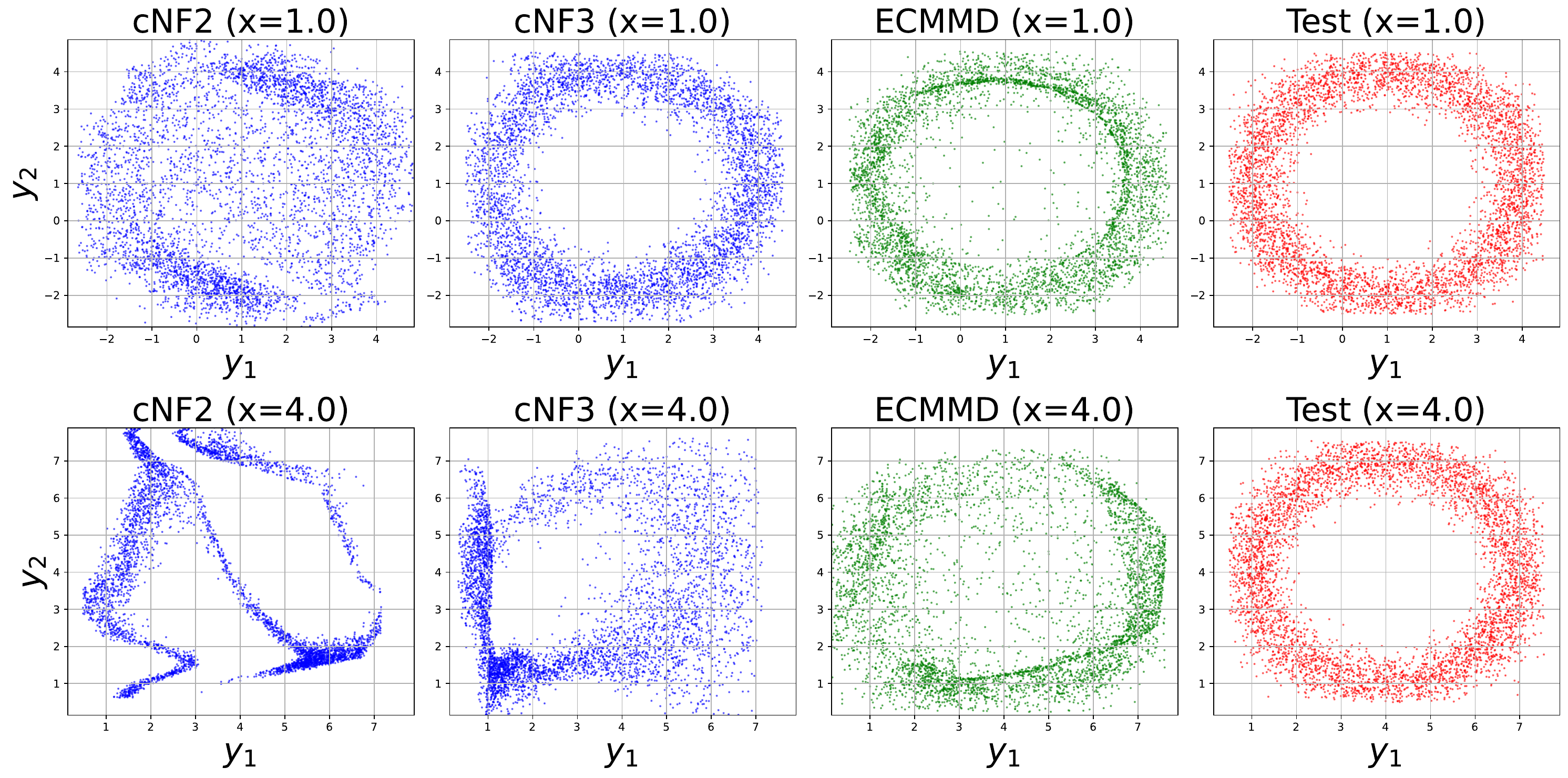}
    \end{minipage}\hfill
    \begin{minipage}{0.48\textwidth}
        \centering
        \includegraphics[width=\linewidth]{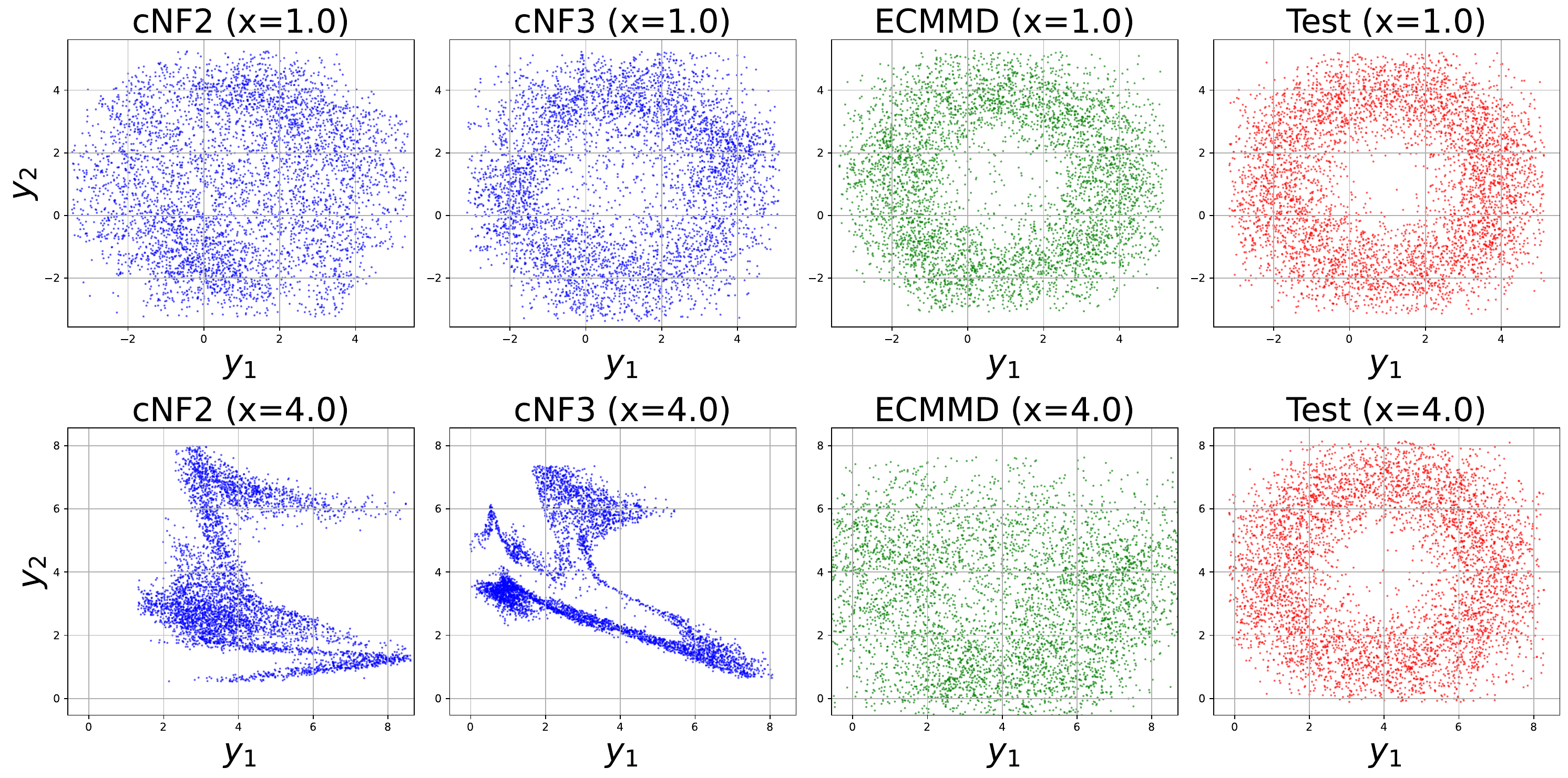}
    \end{minipage}
    \caption{Conditional samples of $(\bY_1, \bY_2)\mid \bX$ from the circle experiment generated by \modelname{CGMMD} and cNF. The left panel corresponds to $\sigma = 0.4$ and the right panel to $\sigma = 0.8$. The top row shows samples conditional on $\bX=1$, and the bottom row shows samples conditional on $\bX=4$.}
    \label{fig:circle_cnf}
\end{figure}

In Figure~\ref{fig:circle_cnf}, we plot the conditional samples generated by \modelname{CGMMD} and cNF for $\mathbf{X} = 1$ and $\mathbf{X} = 4$ at noise levels $\sigma = 0.4$ and $0.8$. We observe that when $\mathbf{X}$ belongs to a high-probability region ($\mathbf{X} = 1$), both \modelname{CGMMD} and cNF produce accurate conditional samples. However, when $\mathbf{X}$ belongs to a low-probability region ($\mathbf{X} = 4$), \modelname{CGMMD} is able to retain the semblance of the circular structure, whereas cNF fails to capture the underlying circular conditional distribution.

\subsubsection{Two Moons}
In this section, we consider sampling from the unknown posterior distribution in the two-moons benchmarking task from simulation-based inference \citep{lueckmann2021benchmarking, ramesh2022gatsbi}. The true posterior exhibits global bimodality and a locally crescent-shaped structure, making it a challenging benchmarking problem.

Here the data generating process has the following structure. Generate $\bY = (Y_1, Y_2)$ from the uniform distribution on the unit square $[-1,1]^2$ and then given $\bY$ generate $\bX$ as follows:
\begin{align*}
    \bX\mid\bY = \left(r\cos(\alpha) + 0.25, r\sin(\alpha)\right) + \left(-\frac{\left|Y_1 + Y_2\right|}{\sqrt{2}}, \frac{Y_2-Y_1}{\sqrt{2}}\right)
\end{align*}
where $\alpha\sim \text{Unif}(-\pi/2, \pi/2)$ and $r\sim \rmN\left(0.1,0.01^2\right)$. Given paired samples from the above data generating procedure, the objective is to learn the posterior distribution of $\bY\mid\bX$. To that end we implement the \modelname{CGMMD} and flow-based neural posterior estimation (SNPE) using MAF from the \texttt{sbi} \citep{tejero-cantero2020sbi} package. For \modelname{CGMMD} we implement a ResNet-style generator using LayerNorm residual blocks (MLP) and also a MDN-based generator with LayerNorm residual blocks producing full-covariance Gaussian mixtures. 

\begin{figure}[htbp]
    \centering

    % Top image
    \begin{minipage}{\textwidth}
        \centering
        \includegraphics[width=\textwidth]{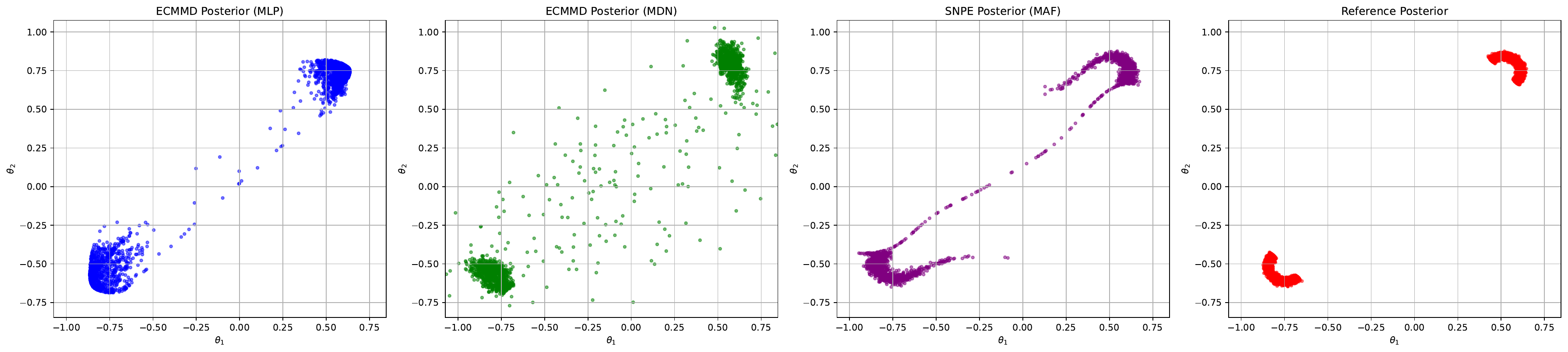}
    \end{minipage}

    \vspace{0.5cm} % vertical space between images

    % Bottom image
    \begin{minipage}{\textwidth}
        \centering
        \includegraphics[width=\textwidth]{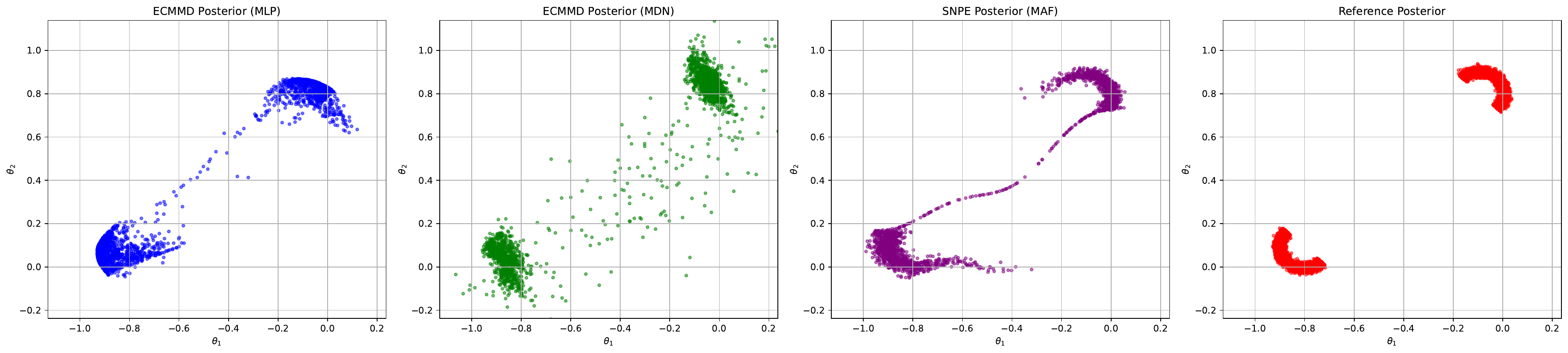}
    \end{minipage}

    \caption{Conditional samples of $\bY\mid \bX$ for the two-moons experiment, generated by \modelname{CGMMD} and SNPE from \texttt{sbi} \citep{tejero-cantero2020sbi}. The top row shows samples conditional on $\bX = (-0.64, 0.162)$, while the bottom row corresponds to $\bX = (-0.25, 0.633)$. Reference posterior samples are taken from the \texttt{sbibm} package \citep{lueckmann2021benchmarking}.
}
    \label{fig:two_moons}
\end{figure}

In Figure~\ref{fig:two_moons}, we show conditional samples generated by \modelname{CGMMD} and SNPE for $\bX = (-0.64, 0.162)$ and $(-0.25, 0.633)$. These $\bX$ values are chosen from the \texttt{sbibm} package \citep{lueckmann2021benchmarking}, which provides reference posterior samples for comparison. In both cases, SNPE captures the bimodality and the local crescent-shaped structure, whereas \modelname{CGMMD} preserves the bimodality but does not fully capture the local crescent shape. The MLP model, however, captures the presence of local curvature. This aligns with observations in \cite{ramesh2022gatsbi}, where GAN-based models were noted to struggle in capturing the local crescent structure.
}

\subsection{Additional results on MNIST super-resolution and denoising}\label{appendix:MNIST_expt}
Here, we present the complete results (performance for all digits in $\{0,1,\ldots,9\}$) for the image denoising and image super resolution task laid out in Section~\ref{sec:real_data}. For both denoising ( see Figure~\ref{fig:mnist_full_superres_0to4} and Figure~\ref{fig:mnist_full_superres_5to9}) and $4X$ super-resolution task (see Figure~\ref{fig:mnist_full_denoise_0to4} and Figure~\ref{fig:mnist_full_denoise_5to9}), we present the average reconstructed images generated by \modelname{CGMMD} along with the corresponding standard-deviation images for all the digits. We conclude that on average our method can reconstruct the original images with good precision. Moreover, the non-trivial pixel-wise standard deviation indicates substantial diversity in the generated images, supporting the effectiveness of the conditional sampling objective of \modelname{CGMMD}.
\begin{figure}[!h]
    \centering
    \includegraphics[width=0.8\linewidth]{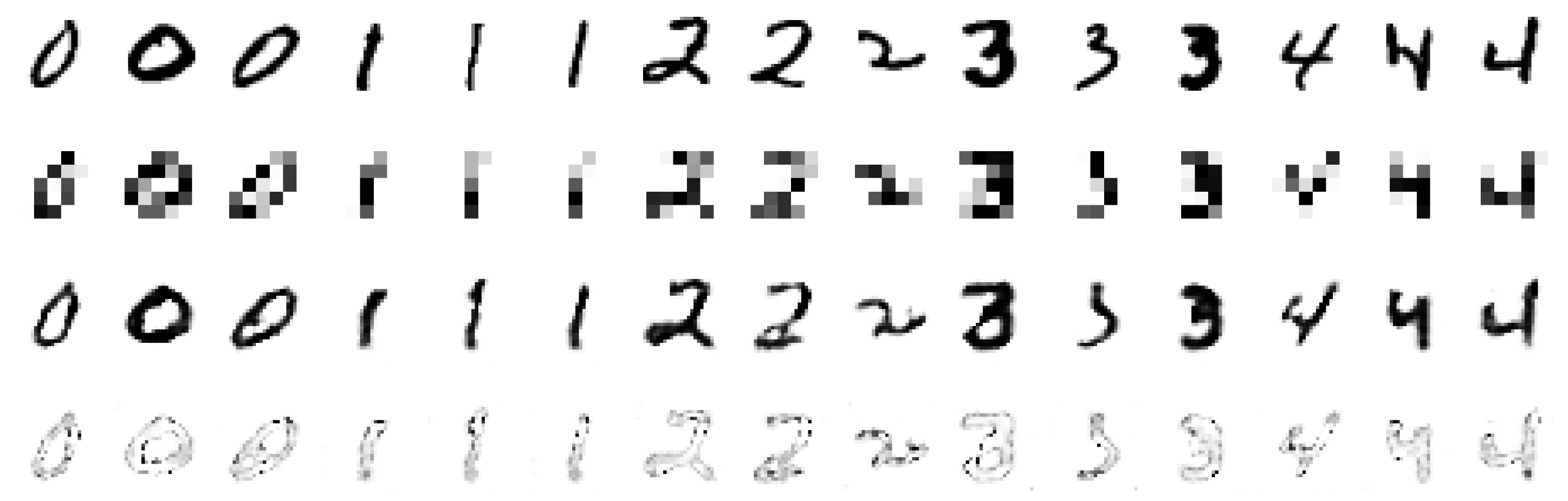}
    \caption{Additional MNIST super-resolution results for digits 
    $\{0,1,2,3,4\}$. Rows show (top to bottom): ground-truth images, corresponding low-resolution inputs, high-resolution mean reconstructions, and pixel-wise standard deviations.}
    \label{fig:mnist_full_superres_0to4}
\end{figure}
\begin{figure}[!h]
    \centering
    \includegraphics[width=0.8\linewidth]{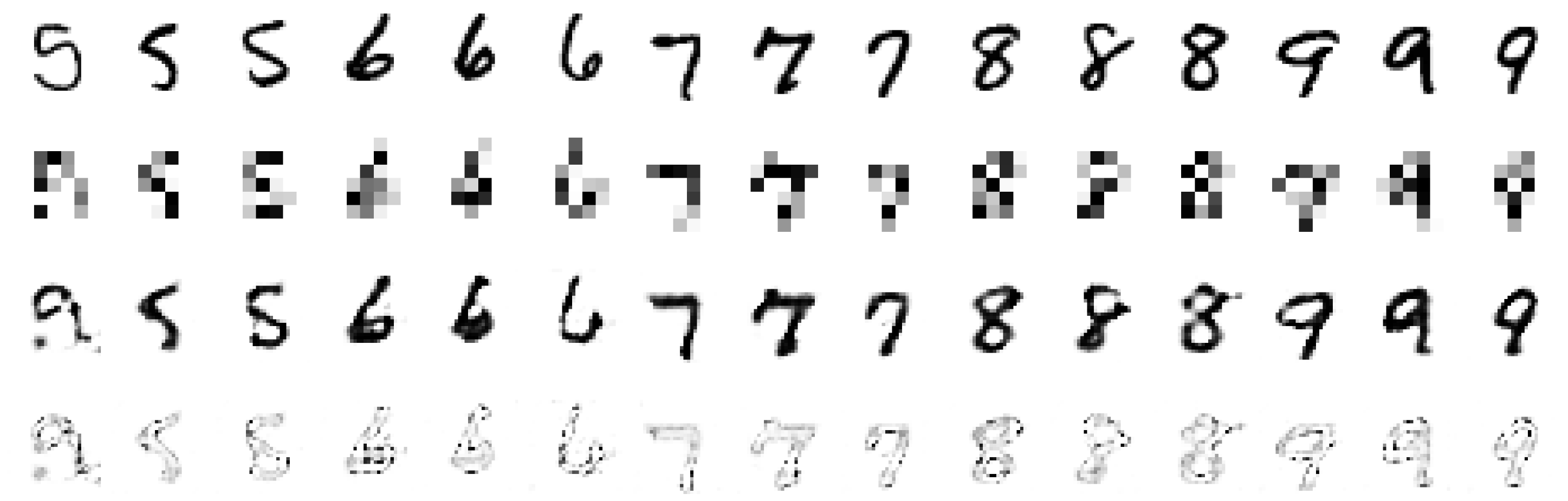}
    \caption{Additional MNIST super-resolution results for digits 
    $\{0,1,2,3,4\}$. Rows show (top to bottom): ground-truth images, corresponding low-resolution inputs, high-resolution mean reconstructions, and pixel-wise standard deviations.}
    \label{fig:mnist_full_superres_5to9}
\end{figure}

For the $4X$ super-resolution task on MNIST we use the following architechture: The model begins with two convolutional layers, interspersed with Batch Normalization and ReLU activations. The resulting feature maps are then concatenated with the auxiliary noise input and passed through two transposed convolutional layers for upsampling, each again interspersed with Batch Normalization and ReLU. A final convolutional layer with a sigmoid activation generates the high-resolution output.

\begin{figure}[!h]
    \centering
    \includegraphics[width=\linewidth]{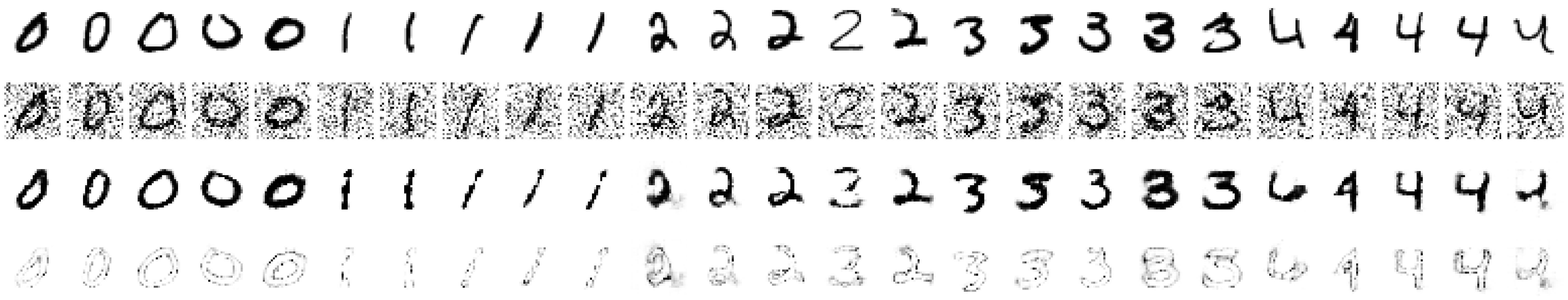}
    \caption{Additional MNIST denoising results for digits 
    $\{0,1,2,3,4\}$. Rows show (top to bottom): ground-truth images, corresponding noisy inputs, denoised mean images, and pixel-wise standard deviations.}
    \label{fig:mnist_full_denoise_0to4}
\end{figure}
\begin{figure}[!h]
    \centering
    \includegraphics[width=\linewidth]{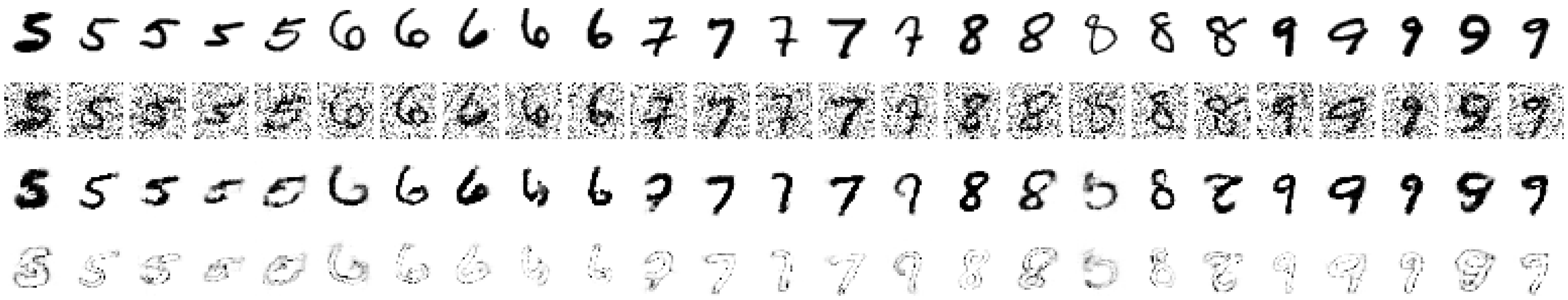}
    \caption{Additional MNIST denoising results for digits 
    $\{5,6,7,8,9\}$. Rows show (top to bottom): ground-truth images, corresponding noisy inputs, denoised mean images, and pixel-wise standard deviations.}
    \label{fig:mnist_full_denoise_5to9}
\end{figure}

For the denoising task on MNIST, we use a CNN-based autoencoder architecture. The model begins with an encoder composed of two convolutional layers interspersed with ReLU activations and max-pooling operations. The encoded features are flattened and passed through two fully connected layers with ReLU activations. After feature extraction, the auxillary noise is concatenated with the feature representation, and the combined vector is processed by another set of fully connected layers with ReLU activations. The resulting tensor is reshaped and passed through a decoder consisting of two transposed convolutional layers, the first followed by a ReLU activation and the second by a sigmoid activation, producing the denoised output.

\subsection{Additional results on image denoising with CelebHQ dataset}\label{appendix:celebHQ}
Here we present additional examples of the image denoising task on the CelebA-HQ dataset~\citep{karras2018progressive} from Section \ref{sec:experiments}. The dataset consists of $30{,}000$ high-quality images of celebrity faces. For our experiments, we downsampled the images to $64 \times 64$ resolution and added Gaussian noise with standard deviation $\sigma = 0.25$. To generate Figure~\ref{fig:denoise_celebHQ_16}, we selected images at random and applied $\ell_1$ regularization to enhance sharpness.

\begin{figure}[h]
    \centering
    \includegraphics[width=0.9\linewidth]{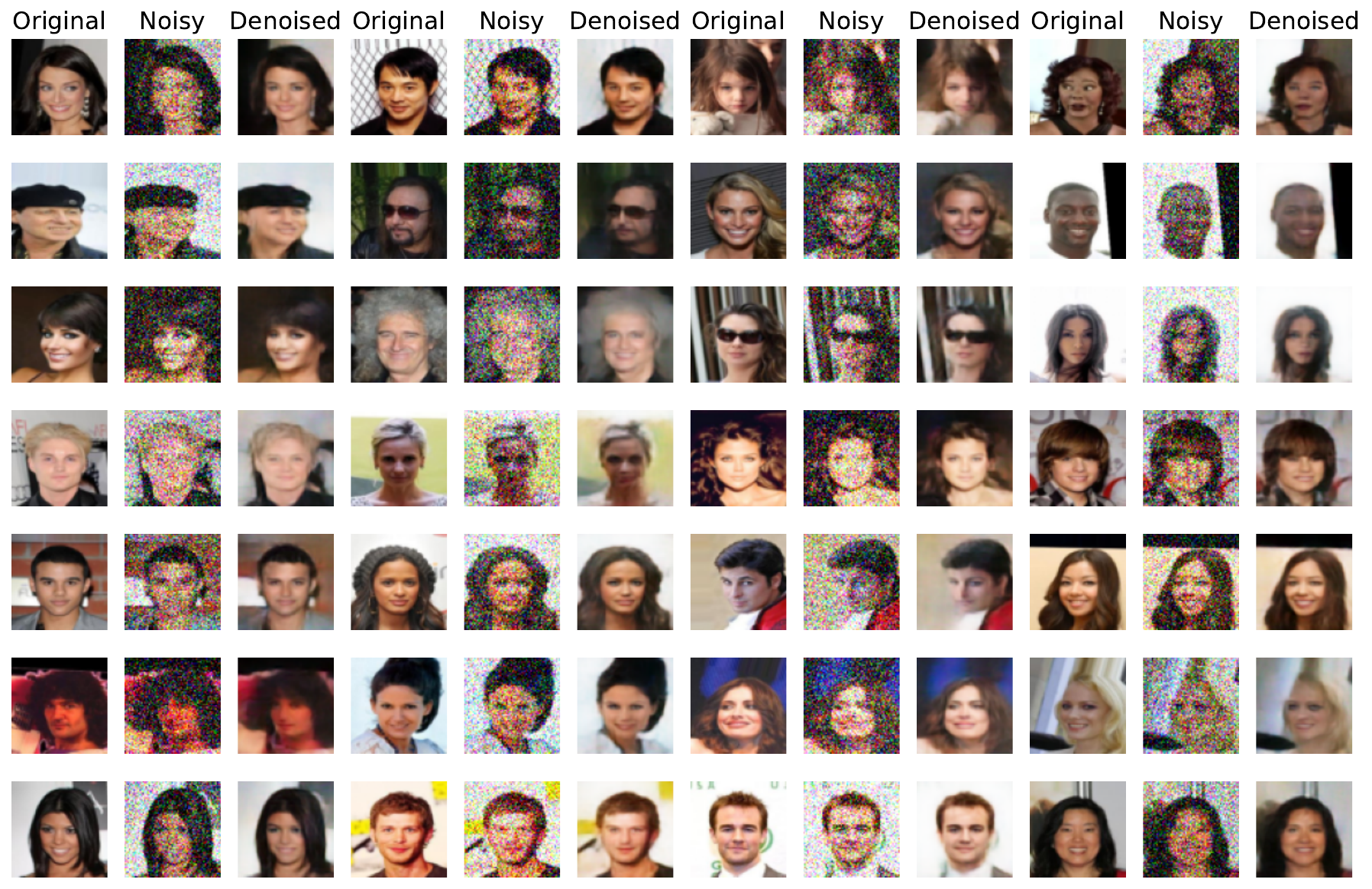}
    \caption{Performance of \modelname{CGMMD} on image denoising task. For each image, we plot the original clean image, the noisy image and the denoised image generated by \modelname{CGMMD}.}
    \label{fig:denoise_celebHQ_16}
\end{figure}
{\revise \subsection{Super-resolution with STL10 dataset}\label{appendix:stl10}
In this section, we add details to the super-resolution experiment from Section~\ref{sec:real_data}.
 Since nearest-neighbor methods scale poorly in high dimensions, we embed images in a lower dimensional space via a ResNet-18 encoder followed by PCA and perform neighborhood computations in this space. Real-world data are usually high-dimensional, but almost always reside on low-dimensional manifolds; leveraging such embeddings improves reconstruction quality, as also noted by prior work~\citep{li2015generative, ren2016conditional, huang2022evaluating}. We additionally apply $\ell_1$ regularization to obtain sharper reconstructions.  To reiterate, as shown in Figure~\ref{fig:stl10_main}, similar to the MNIST experiments, our method is able to generate high-resolution images that closely resemble the ground truth. }

\begin{figure}[!h]
    \centering
    \includegraphics[width=\linewidth]{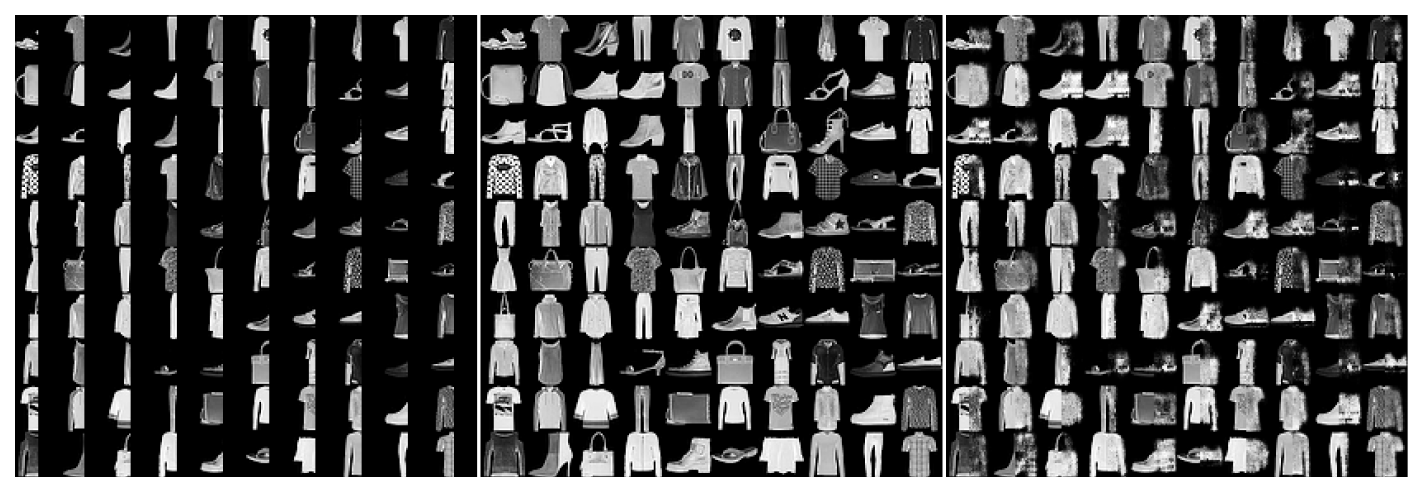}
    \caption{Inpainted reconstructions of FashionMNIST \citep{xiao2017fashion} images. From left to right: the left-half input, the original full image, and the inpainted output produced by \modelname{CGMMD}, respectively.}
    \label{fig:fmnist_inpainting}
\end{figure}

{\revise
\subsection{Image inpainting with FashionMNIST}
In this section, we address the task of image inpainting on the FashionMNIST dataset \citep{xiao2017fashion}, where the goal is to reconstruct the right half of each fashion product image from its left half. In our setup, the model receives the left $28\times 14$ portion of the image as input and produces a full $28\times 28$ image, with the generated $28\times 14$ right half augmented with the original left half. 

In Figure \ref{fig:fmnist_inpainting}, we present the performance of \modelname{CGMMD} in reconstructing full images for each FashionMNIST product category. For most examples, the reconstructions resemble the true items, and the results further demonstrate that \modelname{CGMMD} effectively captures the diversity across categories.
}
\newpage
\section{Design Choices and Practical Considerations}\label{appendix:discussion_derandom}

{\revise
\paragraph{Choice of $\sfK$ and $k_n$.} While various kernels $\sfK$ can be used, standard choices like Gaussian $\sfK_{\sigma}^{\text{gauss}}(x,y) = \exp\left(-\frac{\|x-y\|_2^2}{2\sigma^2}\right)$ or Laplace $\sfK_{\sigma}^{\text{lap}}(x,y) = \exp\left(-\frac{\|x-y\|_2}{2\sigma}\right)$ kernels usually perform well empirically. Prior work also supports rational quadratic kernels and linear combinations of kernels~\citep{binkowski2018demystifying}, with recent studies showing that using multiple kernels can yield more powerful discrepancy measures~\citep{chatterjee2025boosting, schrab2023mmd, schrab2022ksd}. In particular for a collection of kernels $\cK_r:=\{\sfK_1,\ldots, \sfK_r\}$ the loss function can be defined as,
\begin{align*}
    \hat\cL_{\text{multi}}(\bg) := \sum_{m=1}^{r}\frac{w_m}{nk_n}\sum_{i=1}^{n}\sum_{j\in N_{G}(\sX_n)(i)}\sfH_m\left(\bW_{i,g},\bW_{j,g}\right)
\end{align*}
where $\sfH_m$ is defined using $\sfK_m$ as in \eqref{eq:ecmmd_estimate} and $w_m$ is the weight associated with the kernel $\sfK_m$. Moreover for computational gains it is possible to implement low-rank kernel approximations like Random Fourier Features \citep{rahimi2007random}.

In our experiments, we use a Gaussian kernel with bandwidth set to $\sqrt{p}$, where $p$ is the dimension of $\mathbf{Y}$, following the recommendation in \cite{reddi2014decreasing}. However, there is no universal consensus on how to choose the bandwidth parameter. A widely used alternative in the two-sample testing literature is the median heuristic \citep{gretton2012kernel}, which sets the bandwidth to the median of the pairwise distances.

To sidestep bandwidth selection altogether, some works on unconditional generative modeling with MMD employ linear combinations of kernels with manually chosen bandwidths \citep{binkowski2018demystifying, li2015generative}. Recently, \citet{li2017mmd} proposed learning the bandwidth (equivalently, learning the kernel itself) via \textit{adversarial kernel learning}, in which both the generator and the kernel are jointly optimized through a min--max formulation. An analogous extension of \modelname{CGMMD} is conceivable, but lies beyond the scope of the present work.

\textcolor{black}{Additionally ImageNet-pretrained networks are a promising and practical choice for the kernel. Proposition 1 of \cite{santos2019learning} shows that pretrained perceptual feature maps can define characteristic kernels when their feature representations are dense in the space of continuous functions. Thus, such pretrained networks can naturally be incorporated into the \modelname{CGMMD} framework.}

In addition to the kernel $\sfK$, \modelname{CGMMD} also requires choosing the number of nearest neighbors $k_n$. Choosing $k_n$ too large increases the computational overhead as the nearest-neighbor is recomputed in each batch, while choosing $k_n$ too small leads to loss of local information. In our experiments, we select $k_n$ manually based on the specific experimental setting. This practice is consistent with the observations and recommendations in \citet{deb2020measuring} and \citet{huang2022kernel}. \textcolor{black}{Moreover, for ultra high-dimensional conditioning variables, constructing a $k$-NN graph directly in the raw input space may be unreliable. A standard alternative is to first learn a lower-dimensional representation (for example via a ResNet encoder) and then construct the graph in the learned feature space \citep{ren2016conditional,li2015generative}. Motivated by the manifold hypothesis, this often yields more meaningful neighborhood structure in practice. Indeed we follow this in our STL10 experiments (see Appendix \ref{appendix:stl10}), where the $k$-NN graph is built using learned embeddings rather than raw pixel values, leading to stable neighborhoods and strong sample quality.}
}

\textcolor{black}{
To study the choice of kernels (characterized by the bandwidth for Gaussian and Laplace kernels) and the number of nearest neighbors $k_n$, we conduct the following ablation study. We consider the data-generating process
\[
\bX = (X_1,\ldots,X_5)^\top \sim \mathcal{N}_5(\bm 0_5,\bm I_5),
\]
and generate
\[
\bY = X_1^2 + \exp\!\left(\frac{X_2+X_3}{3}\right) + \sin(X_4X_5) + \varepsilon,
\qquad \varepsilon \sim \mathcal{N}(0,1).
\]
We use \modelname{CGMMD} to learn a conditional sampler that approximates the conditional distribution of $\bY \mid \bX$. We then report the mean squared error (MSE) between the conditional mean and conditional standard deviation estimated from generated samples and the corresponding quantities under the true conditional distribution, evaluated at the fixed conditioning value $\bX=\bm 0_5$.}

\begin{table*}[ht]
  \centering

  % ── Gaussian ──────────────────────────────────────────────────────────────────
  \begin{subtable}[t]{0.48\textwidth}
  \caption{Gaussian -- MSE of Conditional Mean}
  \resizebox{\linewidth}{!}{%
  \begin{tabular}{c ccccccc}
  \toprule
  $\sigma$ & $k_n{=}5$ & $k_n{=}15$ & $k_n{=}30$ & $k_n{=}45$ & $k_n{=}60$ & $k_n{=}75$ & $k_n{=}90$ \\
  \midrule
  $0.0001$ & $0.3743$ & $1.3554$ & $1.1875$ & $1.1657$ & $1.1732$ & $1.1917$ & $1.2153$ \\
  $0.0005$ & $0.0035$ & $0.0120$ & $0.0195$ & $0.1281$ & $1.3850$ & $1.2650$ & $1.1820$ \\
  $0.0022$ & $0.0073$ & $0.0104$ & $0.0002$ & $0.0004$ & $0.0038$ & $0.4610$ & $1.1312$ \\
  $0.0100$ & $0.0141$ & $0.0063$ & $0.0009$ & $0.0009$ & $0.0213$ & $0.8550$ & $1.0946$ \\
  $0.0464$ & $0.0279$ & $0.0020$ & $0.0047$ & $0.0327$ & $0.6478$ & $1.2770$ & $0.9922$ \\
  $0.2154$ & $0.0020$ & $0.0059$ & $0.0708$ & $0.6410$ & $1.2945$ & $1.1707$ & $0.9130$ \\
  $1.0000$ & $0.4108$ & $0.5750$ & $1.1302$ & $1.1866$ & $1.0802$ & $0.8733$ & $0.7812$ \\
  $4.6416$ & $1.0612$ & $1.0141$ & $0.8623$ & $0.7110$ & $0.6148$ & $0.5883$ & $0.5877$ \\
  $21.5443$ & $0.3734$ & $0.4763$ & $0.4850$ & $0.3944$ & $0.4462$ & $0.4209$ & $0.4430$ \\
  $100.0000$ & $0.3534$ & $0.3043$ & $0.3819$ & $0.3396$ & $0.4081$ & $0.3038$ & $0.3567$ \\
  \bottomrule
  \end{tabular}}
  \end{subtable}
  \hfill
  \begin{subtable}[t]{0.48\textwidth}
  \caption{Gaussian -- MSE of Conditional SD}
  \resizebox{\linewidth}{!}{%
  \begin{tabular}{c ccccccc}
  \toprule
  $\sigma$ & $k_n{=}5$ & $k_n{=}15$ & $k_n{=}30$ & $k_n{=}45$ & $k_n{=}60$ & $k_n{=}75$ & $k_n{=}90$ \\
  \midrule
  $0.0001$ & $0.4494$ & $0.5700$ & $0.5903$ & $0.5723$ & $0.5720$ & $0.5669$ & $0.5645$ \\
  $0.0005$ & $0.3821$ & $0.4793$ & $0.5313$ & $0.2796$ & $0.0333$ & $0.1579$ & $0.2575$ \\
  $0.0022$ & $0.3623$ & $0.4048$ & $0.2910$ & $0.3349$ & $0.1713$ & $0.2127$ & $0.4654$ \\
  $0.0100$ & $0.2361$ & $0.0878$ & $0.0609$ & $0.0416$ & $0.0024$ & $0.4289$ & $0.5173$ \\
  $0.0464$ & $0.0934$ & $0.0438$ & $0.0838$ & $0.0601$ & $0.1019$ & $0.3714$ & $0.3762$ \\
  $0.2154$ & $0.0300$ & $0.0161$ & $0.0027$ & $0.1122$ & $0.2687$ & $0.2909$ & $0.2994$ \\
  $1.0000$ & $0.0459$ & $0.0909$ & $0.2006$ & $0.2268$ & $0.2339$ & $0.2538$ & $0.2592$ \\
  $4.6416$ & $0.2222$ & $0.1893$ & $0.2272$ & $0.2026$ & $0.2137$ & $0.2118$ & $0.2073$ \\
  $21.5443$ & $0.2196$ & $0.2177$ & $0.1748$ & $0.1859$ & $0.1969$ & $0.2000$ & $0.1743$ \\
  $100.0000$ & $0.3365$ & $0.1898$ & $0.2105$ & $0.2358$ & $0.1784$ & $0.2224$ & $0.2214$ \\
  \bottomrule
  \end{tabular}}
  \end{subtable}
  
  \vspace{1.5em}

  % ── Laplace ───────────────────────────────────────────────────────────────────
  \begin{subtable}[t]{0.48\textwidth}
  \caption{Laplace -- MSE of Conditional Mean}
  \resizebox{\linewidth}{!}{%
  \begin{tabular}{c ccccccc}
  \toprule
  $\sigma$ & $k_n{=}5$ & $k_n{=}15$ & $k_n{=}30$ & $k_n{=}45$ & $k_n{=}60$ & $k_n{=}75$ & $k_n{=}90$ \\
  \midrule
  $0.0001$ & $0.8058$ & $0.7666$ & $0.7467$ & $0.7423$ & $0.7448$ & $0.7507$ & $0.7597$ \\
  $0.0005$ & $0.7177$ & $0.8524$ & $0.7949$ & $0.7808$ & $0.7794$ & $0.7834$ & $0.7911$ \\
  $0.0022$ & $0.0124$ & $0.0494$ & $0.1031$ & $0.4121$ & $0.9426$ & $0.9148$ & $0.8874$ \\
  $0.0100$ & $0.0023$ & $0.0046$ & $0.0253$ & $0.0746$ & $0.4292$ & $1.0731$ & $1.0403$ \\
  $0.0464$ & $0.0023$ & $0.0023$ & $0.0177$ & $0.0850$ & $0.5762$ & $1.1609$ & $1.0748$ \\
  $0.2154$ & $0.0067$ & $0.0029$ & $0.0048$ & $0.2244$ & $1.2316$ & $1.1762$ & $0.9380$ \\
  $1.0000$ & $0.2551$ & $0.4658$ & $1.1840$ & $1.2665$ & $1.0799$ & $0.8863$ & $0.7984$ \\
  $4.6416$ & $0.5159$ & $0.5729$ & $0.5123$ & $0.5398$ & $0.5051$ & $0.4864$ & $0.4442$ \\
  $21.5443$ & $0.2099$ & $0.1124$ & $0.1589$ & $0.2428$ & $0.1814$ & $0.1824$ & $0.2165$ \\
  $100.0000$ & $3.2993$ & $0.0091$ & $0.1256$ & $0.0211$ & $0.0908$ & $1.5399$ & $0.0147$ \\
  \bottomrule
  \end{tabular}}
  \end{subtable}
  \hfill
  \begin{subtable}[t]{0.48\textwidth}
  \caption{Laplace -- MSE of Conditional SD}
  \resizebox{\linewidth}{!}{%
  \begin{tabular}{c ccccccc}
  \toprule
  $\sigma$ & $k_n{=}5$ & $k_n{=}15$ & $k_n{=}30$ & $k_n{=}45$ & $k_n{=}60$ & $k_n{=}75$ & $k_n{=}90$ \\
  \midrule
  $0.0001$ & $0.3064$ & $0.2878$ & $0.2632$ & $0.2384$ & $0.2137$ & $0.1904$ & $0.1694$ \\
  $0.0005$ & $0.2498$ & $0.3081$ & $0.2938$ & $0.2914$ & $0.2915$ & $0.2922$ & $0.2937$ \\
  $0.0022$ & $0.0009$ & $0.0025$ & $0.0137$ & $0.1566$ & $0.3383$ & $0.3243$ & $0.3153$ \\
  $0.0100$ & $0.0246$ & $0.0110$ & $0.0032$ & $0.0046$ & $0.1709$ & $0.3506$ & $0.3562$ \\
  $0.0464$ & $0.0123$ & $0.0116$ & $0.0043$ & $0.0088$ & $0.2235$ & $0.3561$ & $0.3601$ \\
  $0.2154$ & $0.0163$ & $0.0143$ & $0.0052$ & $0.0711$ & $0.3256$ & $0.3417$ & $0.3494$ \\
  $1.0000$ & $0.0177$ & $0.0876$ & $0.2296$ & $0.2399$ & $0.2541$ & $0.2610$ & $0.2593$ \\
  $4.6416$ & $0.2043$ & $0.1903$ & $0.1763$ & $0.1776$ & $0.1782$ & $0.2151$ & $0.2005$ \\
  $21.5443$ & $0.8484$ & $0.2484$ & $0.3155$ & $0.2056$ & $0.2213$ & $0.2117$ & $0.2150$ \\
  $100.0000$ & $0.6223$ & $2.0629$ & $1.1401$ & $0.5936$ & $0.1712$ & $0.9108$ & $0.2254$ \\
  \bottomrule
  \end{tabular}}
  \end{subtable}
  \vspace{10pt}
  \caption{Ablation study on kernel bandwidth $\sigma$ and neighborhood size $k_n$ for Gaussian and Laplace kernels. Rows are bandwidth values; columns are number of nearest neighbors.}
  \label{tab:ablation}
  \end{table*}
\textcolor{black}{
In Table \ref{tab:ablation} we report these values for Gaussian and Laplace kernels with kernel bandwidth $\sigma\in (10^{-4}, 10^2)$ and $k_n\in \{5, 15, 30, 45, 60, 75, 90\}$ many nearest neighbors, used for computing the loss. For this experiment we choose a training sample size of $5000$ and report the (estimated) MSE values by averaging over $100$ repetitions. Very large bandwidths are consistently worse, which is expected as they tend to wash out local information. For the MSE of conditional mean, small-to-moderate neighborhoods perform best, whereas for the MSE of conditional standard deviation, moderate to moderately large neighborhoods yield more stable estimates. For the bandwidth, the extremes consistently underperform while the middle range is almost always preferable. Taken together, a moderate bandwidth combined with a moderate neighborhood size offers a reliable default.}

{\revise
\paragraph{Choice of batch size.} In the experimental setting of Section~\ref{sec:synth_expt}, we examine how batch size affects the quality of generated samples. At noise level $\sigma = 0.2$, in the top row of Figure~\ref{fig:batch_size_comparison}, we present the scatterplots of generated (by \modelname{CGMMD}) samples $(\bY_1, \bY_2)$ conditional on $\bX = 1$ at batch sizes $\{200, 400, 600, 800\}$ along with the conditional samples from true conditional distribution. In the second and third rows of Figure~\ref{fig:batch_size_comparison}, we further present the scatterplots restricted to the regions $\bY_1 \le -0.5$ and $\bY_2 \ge 3$, corresponding to low-mass tail areas.
\begin{figure}[!h]
    \centering
    \includegraphics[width=\linewidth]{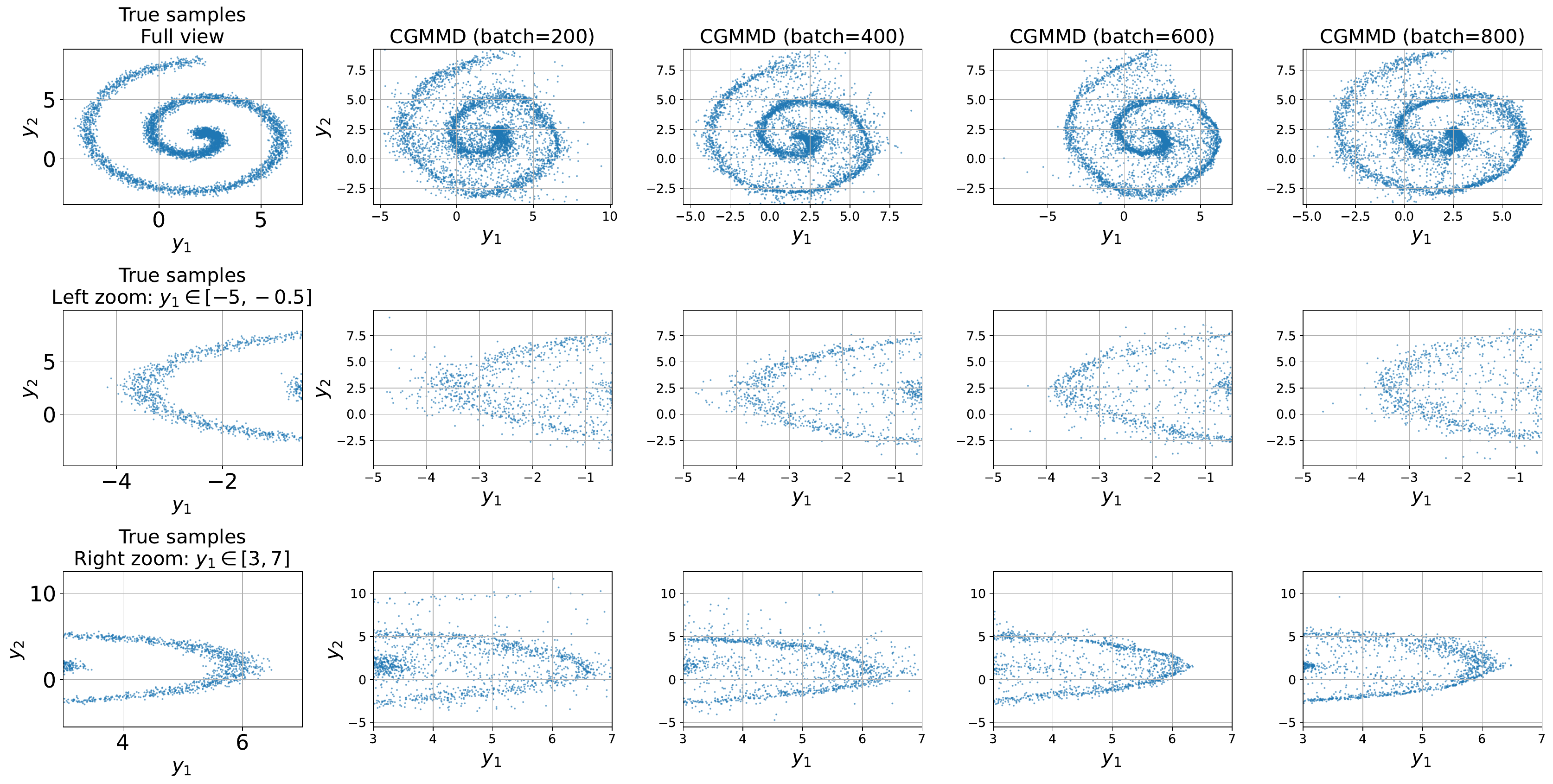}
    \caption{Effect of batch size on the generation quality of \modelname{CGMMD} in the simulation setting of Section~\ref{sec:synth_expt}.}
    \label{fig:batch_size_comparison}
\end{figure}

We observe that as the batch size increases, the overall scatter decreases and the proportion of outliers in the tail regions becomes smaller, resulting in a closer match to the true helix structure. However, larger batch sizes come with additional computational cost, and across all our experiments, we have found that using a batch size of a few hundred typically provides a good balance between performance and efficiency.

}

\paragraph{Refinement for Discrete Supports.} The estimator $\hat\bg$ based on $\widehat\ECMMD$ in \eqref{eqn:G_hat} is well-defined for both continuous and discrete $P_{\bm X}$. However, for discrete supports, nearest neighbor estimates may introduce redundancy or omit relevant structure depending on $k_n$. To mitigate this, when $P_{\bX}$ has discrete support we refine the empirical objective as:
\begin{equation*}
    \hat\cL_{D}(\bg) := \frac{1}{n} \sum_{i=1}^{n} \frac{1}{|\{j : \bX_j = \bX_i\}|} \sum_{j : \bX_j = \bX_i} \sfH(\bW_{i,\bg}, \bW_{j,\bg}),
\end{equation*}
and obtain the generator via $\min_{\bg \in \bcG} \hat\cL_D(\bg)$. Such refinements for discrete supports are also discussed in prior work on nearest neighbor methods~\citep{deb2020measuring, huang2022kernel}. We apply the proposed objective to generate digit images conditioned on class labels using the MNIST dataset. Figure~\ref{fig:mnist_digit_samples} shows the average of the generated samples for each digit class, indicating that the outputs are consistent, with non-trivial variation across individual samples.

\begin{figure}[!h]
    \centering
    \includegraphics[width=0.9\linewidth]{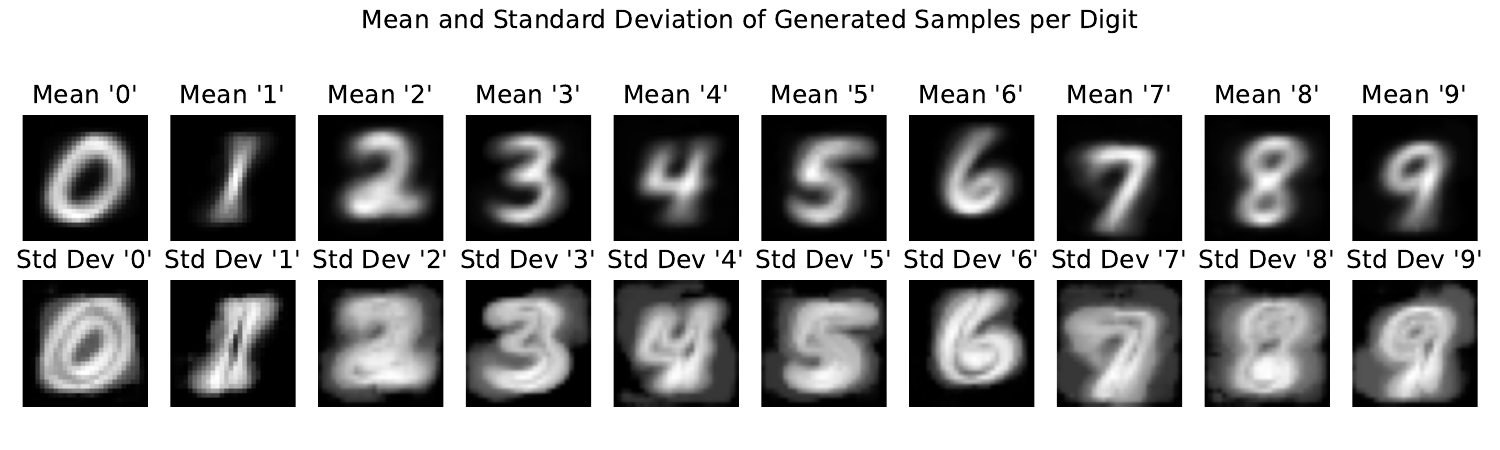}
    \caption{Mean and standard deviation of generated digit images.}
    \label{fig:mnist_digit_samples}
\end{figure}

\paragraph{Computational Complexity.} For \(k_n = O(1)\), the estimator in \eqref{eqn:ECMMD_empirical_objective} can be computed in near-linear time \(O(n \log n)\) by first constructing the \(k\)-NN graph in \(O(n \log n)\) time~\citep{friedman1977algorithm}, followed by an \(O(n)\) summation. This is substantially more efficient than standard MMD objectives, which require \(O(n^2)\) time. {\revise Moreover, it may be insightful to leverage approximate nearest neighbor methods \citep{douze2024faiss, malkov2018efficient} to accelerate training. In our experiments, however, we implement a helper function that computes nearest neighbors via brute-force search, which incurs a computational cost of $O(B^2)$ where $B$ denotes the batch size. This can be improved to $O(B\log B)$ by implementing efficient nearest-neighbor search or approximate nearest neighbor methods.} While our focus is on conditional generation, the same objective can be applied to unconditional generation by taking \(\bX\) independent of \(\bY\) and solving the corresponding optimization problem. Although outside the scope of this work, this approach may offer improved computational efficiency at the cost of sample quality.

\subsection{Derandomized \modelname{CGMMD}}

Recall the ECMMD-based objective for \modelname{CGMMD} from Section~\ref{sec:objective_ECMMD}. In the empirical objective from~\eqref{eqn:ECMMD_empirical_objective}, we introduce additional noise variables $\bmeta_1, \ldots, \bmeta_n \sim P_{\bmeta}$ to train the generative model $\bg$. However, this introduces an extra source of randomness in the training procedure. As a result, different runs of the same algorithm on the same observed dataset may produce different conditional samplers, thereby introducing inconsistencies in the learned model due to finite-sample variability.

To mitigate this issue, in this section we introduce a derandomization procedure, albeit at the cost of additional computational overhead.

Note that the noise variables are sampled from a known distribution $P_{\bmeta}$, which is typically chosen to be either Gaussian or Uniform. Leveraging this, we propose the following algorithm to modify the empirical loss $\hat{\mathcal{L}}$ accordingly.

\begin{enumerate}
    \item Fix $M_n\geq 1$. Then generate i.i.d. samples $\{\bmeta_{i,1},\ldots, \bmeta_{i,M_n}:1\leq i\leq n\}\sim P_{\bmeta}$.
    \item Let $\bW_{i,m,\bg} = \left(\bY_i, \bg\left(\bmeta_{i,m},\bX_i\right)\right)$, for all $1\leq i\leq n$ and $1\leq m\leq M_n$. Now define,
    \begin{align*}
        \hat{\cL}_{\rm DR}(\bg) := \frac{1}{nk_n}\sum_{i=1}^{n}\sum_{j\in N_{G(\sX_n)}(i)}\frac{1}{M_n}\sum_{m=1}^{M_n} \sfH\lrf{\bW_{i,m,\bg},\bW_{j,m,\bg}}.
    \end{align*}
    \item Approximate the conditional sampler by solving $\hat\bg_{\rm DR} = \arg\min_{\bg\in\bcG}\hat\cL_{\rm DR}(\bg)$.
\end{enumerate}

Note that for $M_n = 1$, the derandomized objective $\hat{\mathcal{L}}_{\mathrm{DR}}$ reduces to the original empirical loss $\hat{\mathcal{L}}$ from~\eqref{eqn:ECMMD_empirical_objective}. The inner averaging over the generated noise variables is expected to reduce the variance introduced by the stochasticity of the noise, thereby mitigating the additional randomness in the training procedure.

Moreover, Theorem~5.2 from~\citet{chatterjee2024kernel} shows that, under mild conditions (in fact, without imposing any restrictions on the choice of $M_n$), the derandomized loss $\hat{\mathcal{L}}_{\mathrm{DR}}$ converges to the true ECMMD objective. Therefore, we can expect similar convergence guarantees as those established in Theorem~\ref{thm:bdd_empirical_sampler_main} to hold in this setting as well.

\newpage
\section{Convergence of the Empirical Sampler}\label{appendix:convg}
In this section we establish convergence of the empirical sampler from \eqref{eqn:G_hat} under more general settings. For the reader's convenience we briefly recall the notations, assumptions and details about the class of neural networks from Section \ref{sec:analysis_convg}. 

Recall that we observe samples $\lrs{\lrf{\bY_i, \bX_i}: 1\leq i\leq n}$ from a joint distribution $P_{\bY\bX}$ on $\R^p\times \R^d$ such that the regular conditional distribution $P_{\bY\mid\bX}$ exists. Our aim is to generate samples from this conditional distribution. Towards that, by the \textit{noise outsourcing lemma} (see Theorem 5.10 from \citet{kallenberg1997foundations} and Lemma 2.1 from \citet{zhou2023deep}) we know there exists a measurable function $\bbg$ such that $P_{\bbg\lrf{\bmeta,\bX}\mid\bX} = P_{\bY\mid\bX}$ for $\bmeta$ generated independently from $\mathrm{N}_{m}\lrf{\bm 0, \bI_{m}}$ for any $m\geq 1$. From Section \ref{sec:objective_ECMMD} recall that to estimate the conditional sampler $\bbg$, we consider the ECMMD from \citet{chatterjee2024kernel} as a discrepancy measure. In particular we take a kernel $\sfK$ satisfying the following.
\begin{assumption}\label{assumption:K}
    The kernel $\sfK:\R^p\times\R^p\ra\R$ is positive definite and satisfies the following:
    \begin{enumerate}
        \item [1.] The kernel $\sfK$ is uniformly bounded, that is $\|\sfK\|_{\infty}<K$ for some $K>0$ and Lipschitz continuous with Lipschitz constant $L_\sfK$.
        \item [2.] The kernel mean embedding $\mu:\mathcal{P}(\mathcal{Y})\rightarrow\mathcal{H}$ is a one-to-one (injective) function. This is also known as the \textit{characteristic kernel} property \citep{sriperumbudur2011universality}.
    \end{enumerate}
\end{assumption}
Now fix $m\geq 1$, generate independent samples $\bmeta_1,\bmeta_2,\ldots,\bmeta_n$ from $\mathrm{N}_{m}\lrf{\bm 0, \bI_{m}}$ and take a class of neural networks $\bcG$ (defined below). Next, we construct the $k_n$-nearest neighbor graph $G\lrf{\sX_n}$ on the samples $\sX_n:=\lrs{\bX_1,\ldots, \bX_n}$ with respect to the $\lrn{\cdot}_2$. For any $\bg\in\bcG$ let $\bW_{i,\bg} = \lrf{\bY_i,\bg\lrf{\bmeta_i,\bX_i}}$ for all $i\in [n]$ and define,
\begin{align*}
    \sfH\lrf{\bW_{i,\bg},\bW_{j,\bg}} := \sfK\lrf{\bY_i,\bY_j} - \sfK\lrf{\bY_i,\bg\lrf{\bmeta_j,\bX_j}} - \sfK\lrf{\bg\lrf{\bmeta_i,\bX_i},\bY_j} + \sfK\lrf{\bg\lrf{\bmeta_i,\bX_i},\bg\lrf{\bmeta_j,\bX_j}}
\end{align*}
\normalsize
for all $1\leq i\neq j\leq n$ and for any $\bg\in\bcG$ take,
\begin{align*}
    \hat\cL\lrf{\bg} := \frac{1}{nk_n}\sum_{i=1}^{n}\sum_{j\in N_{G\lrf{\sX_n}}(i)}\sfH\lrf{\bW_{i,\bg},\bW_{j,\bg}}.
\end{align*}
With the above definition, we estimate the true function $\bbg$ as,
\begin{align*}
    \hat\bg := \arg\min_{\bg\in\bcG}\cL\lrf{\bg}
\end{align*}

For establishing convergence guarantees for the estimated conditional sampler $\hat\bg$ we make the following technical assumptions.
\begin{assumption}\label{assumption:bias_convergence}
The following conditions on $P_{\bY\bX}$, the kernel $\sfK$, the true conditional sampler $\bbg$ and the class $\bcG$ holds.
    \begin{enumerate}
        \item $P_{\bX}$ is supported on $\cX\subseteq\R^d$ for some $d>0$ and $\left\|\bX_1 - \bX_2\right\|_2$ has a continuous distribution for $\bX_1,\bX_2\sim P_{\bX}$.
        \item There exists $\alpha, C_1,C_2>0$ such that for $\bX\sim P_{\bX}$,
        \begin{align}\label{eq:assumption_tail}
            \P\left(\left\|\bX\right\|_2>t\right)\leq C_1\exp\left(-C_2t^\alpha\right),\quad \forall t>0.
        \end{align}
        \item The target conditional sampler $\bbg:\R^m\times\R^d\ra\R^p$ is continuous with $\lrn{\bbg}_{\infty}\leq C_0$ for some constant $C_0>0$.
        \item For any $\bg\in \bcG$ consider $h_{\bg}(x) = \E\left[\sfK(\bY,\cdot) - \sfK\left(\bg\left(\bmeta,\bX\right), \cdot\right)\middle|\bX = x\right]$ and assume that there exists $\beta_1,\beta_2>0$ such that,
        \begin{align}\label{eq:assumption_lipschitz}
            \left|\langle h_{\bg}(\bx), h_{\bg}(\bx_1) - h_{\bg}(\bx_2)\rangle\right|\leq C_3\left(1+\|\bx\|_2^{\beta_1}+\|\bx_1\|_2^{\beta_1}+\|\bx_2\|_2^{\beta_1}\right)\|\bx_1-\bx_2\|_2^{\beta_2},
        \end{align}
        \normalsize
        for all $\bx, \bx_1, \bx_2\in \cX$ where $C_3$ is a constant independent of $\bg$.
    \end{enumerate}
\end{assumption}

We take $\bcG$ to be a class of neural networks with the following details.
\paragraph*{Details of $\bcG$:} Let $\bcG = \bcG_{\cH,\cW,\cS,\cB}$ be the set of ReLU neural networks $\bg:\R^m\times\R^d\ra\R^p$ with depth $\cH$, width $\cW$, size $\cS$ and $\lrn{\bg}_\infty\leq \cB$. In particular, $\cH$ denotes the number of hidden layers and $\lrf{w_0,w_2,\ldots,w_{\cH}}$ denotes the width of each layer where $w_0 = d+m$ and $w_{\cH} = p$ denotes the input and output dimension respectively. We take $\cW = \max\lrs{w_0,w_1,\ldots, w_{\cH}}$. Finally size $\cS = \sum_{i=1}^{\cH}w_{i}\lrf{w_{i-1}+1}$ refers to the total number of parameters of the network.

Moreover, we make the following assumptions about the parameters of the class $\bcG$.
\begin{assumption}\label{assumption:network_param}
    The network parameters of $\bcG$ satisfies $\cH,\cW\ra\infty$ such that,
    \begin{align*}
        \cH\cW\ra\infty\text{ and }\frac{\cB^2\cH\cS\log \cS\log n}{n}\ra 0
    \end{align*}
    as $n\ra\infty$. Additionally $\cB\geq C_0$ where $C_0$ is defined in Assumption \ref{assumption:bias_convergence}.
\end{assumption}

Before stating our main result, for a function $f$, uniformly continuous on a set $E$, define the optimal modulus of continuity on the set $E$ as,
\begin{align*}
    \omega_{f}^{E}(r) := \sup\left\{\left\|f(\bm x) - f(\bm y)\right\|: \|\bm x-\bm y\|\leq r,\bm x,\bm y\in E\right\}.
\end{align*}

We are now ready to state our result on convergence of the empirical sampler. 

\begin{theorem}\label{thm:convergence_general}
    Adopt Assumption \ref{assumption:K}, Assumption \ref{assumption:network_param} and Assumption \ref{assumption:bias_convergence}. Take $\vep_n = \left(\frac{k_n\log n}{n}\right)^{1/d}(\log n)^{1/\alpha}$ and,
    \begin{align*}
        \nu_n = 
        \begin{cases}
            \frac{k_n\log n}{n}(\log n)^{2\beta_2/\alpha} & \text{ if }d < 2\beta_2\\
            \frac{k_n\log n}{n}(\log n)^{1+d/\alpha} & \text{ if }d = 2\beta_2\\
            \left(\frac{k_n\log n}{n}\right)^{2\beta_2/d}(\log n)^{2\beta_2/\alpha} & \text{ if }d > 2\beta_2.
        \end{cases}
    \end{align*}
    Let $k_n = o\lrf{\sqrt{n}}$, then for any $\delta>0$ with $E = [-R,R]^{d+m}$,
    \begin{align*}
        \cL\lrf{\hat\bg} 
        &\lesssim_{\bm\theta}\frac{1}{\sqrt{n}} + \sqrt{\frac{\cB^2\cH\cS\log\cS\log n}{n}}+ \vep_n^{\beta_2} + \sqrt{\nu_n}\\
        &+ 1-\Phi\lrf{R}^m\lrf{1-C_1\exp\lrf{-C_2R^\alpha}} + \sqrt{d+m}\omega_{\bbg}^{E}\lrf{2R\lrf{\cH\cW}^{-\frac{1}{d+m}}} + \sqrt{\frac{\log\lrf{1/\delta}}{n}}
    \end{align*}
    for all $R>0$ with probability atleast $1-\delta$.
\end{theorem}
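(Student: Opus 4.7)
The plan is to apply the standard oracle decomposition used in statistical learning, splitting the error into a stochastic piece (from replacing population ECMMD by its nearest-neighbor empirical analogue, uniformly over $\bcG$) and an approximation piece (how well the ReLU class $\bcG$ can mimic the true sampler $\bbg$). Picking an oracle $\bg^{\star}\in\bcG$, the optimality of $\hat\bg$ gives
\begin{align*}
    \cL(\hat\bg) \;\leq\; 2\sup_{\bg\in\bcG}\lrm{\hat\cL(\bg)-\cL(\bg)} \;+\; \cL(\bg^{\star}),
\end{align*}
and the various terms in the conclusion should emerge from bounding these two pieces separately and combining with McDiarmid/Talagrand to produce the $\sqrt{\log(1/\delta)/n}$ confidence tail.

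\textbf{Bounding the stochastic term.} I would further decompose $\hat\cL(\bg)-\cL(\bg)$ into (i) a nearest-neighbor smoothing bias, comparing the NN-averaged summand $\frac{1}{k_n}\sum_{j\in N(i)}\sfH(\bW_{i,\bg},\bW_{j,\bg})$ against its would-be conditional analogue at $\bX=\bX_i$, and (ii) a stochastic fluctuation around the conditional mean. For (i), condition on the event that the maximal $k$-NN radius is of order $\vep_n=(k_n\log n/n)^{1/d}(\log n)^{1/\alpha}$ (the standard uniform NN distance bound under the subexponential tail \eqref{eq:assumption_tail} and the continuity assumption avoiding ties); then the Hölder-type inequality \eqref{eq:assumption_lipschitz} controls the bias by $\vep_n^{\beta_2}$ times a polynomial envelope in $\|\bX\|_2$. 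The three regimes $d<2\beta_2,\;d=2\beta_2,\;d>2\beta_2$ for $\nu_n$ arise from averaging this envelope against the subexponential tail of $\|\bX\|_2$. For (ii), I would apply a uniform empirical-process bound over $\bcG$: use the pseudo-dimension estimate of Bartlett et al.\ (2019) for ReLU networks to obtain a covering number of order $\cS\log\cS$, then chain (Dudley) or apply Rademacher with bounded-differences — each sample perturbs $\hat\cL$ by at most $O(1/n)$ because the in-degree of a $k$-NN graph in $\R^d$ is uniformly bounded by a constant depending only on $d$. The result is the $\sqrt{\cB^2\cH\cS\log\cS\log n/n}$ piece. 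This is exactly where the general uniform concentration result promised in Appendix~\ref{appendix:unif_conc} plugs in.

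\textbf{Bounding the approximation term.} Since $P_{\bbg(\bmeta,\bX)\mid\bX}=P_{\bY\mid\bX}$ by Lemma~\ref{lemma:noise_outsourcing}, we have $\cL(\bbg)=0$, so $\cL(\bg^{\star})$ measures only how close $\bg^{\star}$ is to $\bbg$. The Lipschitz property of $\sfK$ lets me bound $\cL(\bg^{\star})\lesssim\E\lrt{\lrn{\bbg(\bmeta,\bX)-\bg^{\star}(\bmeta,\bX)}_2\wedge 1}$ via the triangle-inequality expansion of the squared norm of the mean embedding. Splitting the expectation over $E=[-R,R]^{d+m}$ and its complement: inside $E$, pick $\bg^{\star}$ using the ReLU approximation theorem for continuous functions of Shen (2019) / Zhang (2022), giving $\lrn{\bbg-\bg^{\star}}_{\infty,E}\lesssim\sqrt{d+m}\,\omega_{\bbg}^{E}(2R(\cH\cW)^{-1/(d+m)})$; outside $E$, use boundedness of $\sfK$ and the tail assumption on $\bX$ together with the Gaussian tail of $\bmeta$ to get the $1-\Phi(R)^m(1-C_1\exp(-C_2R^\alpha))$ contribution. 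The $C_0\leq\cB$ part of Assumption~\ref{assumption:network_param} ensures the approximator can be chosen inside $\bcG$. The $1/\sqrt{n}$ leftover absorbs a residual bounded-differences step.

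\textbf{Main obstacle.} The hard part will be Step 2: establishing uniform concentration of a nearest-neighbor-weighted sum over the rich neural-network class $\bcG$. Classical U-statistic machinery does not apply directly because the NN graph induces non-exchangeable dependencies among indices, and the summand $\sfH(\bW_{i,\bg},\bW_{j,\bg})$ depends nonlinearly on $\bg$. One needs to simultaneously exploit (a) the bounded-in-degree property of $k$-NN graphs in $\R^d$, (b) the Hölder-type regularity of $\bx\mapsto h_{\bg}(\bx)$ uniform in $\bg\in\bcG$, and (c) entropy/pseudo-dimension estimates for $\bcG$. Threading these through a single chaining/Rademacher argument — presumably the general functional-concentration theorem the authors advertise as independently interesting — is the decisive technical ingredient; once in hand, the rest of the proof is a careful bookkeeping exercise combining the above estimates and optimizing the truncation radius $R$ in a final corollary-style step (as in the proof of Theorem~\ref{thm:bdd_empirical_sampler_main}, where $R=\sqrt{\log n}$).
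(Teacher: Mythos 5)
Your proposal follows essentially the same route as the paper: the same oracle decomposition $\cL(\hat\bg)\leq 2\sup_{\bg\in\bcG}|\hat\cL(\bg)-\cL(\bg)|+\inf_{\tilde\bg\in\bcG}\cL(\tilde\bg)$ (using $\cL(\bbg)=0$), the same treatment of the approximation term (truncation to $E=[-R,R]^{d+m}$, kernel Lipschitzness, the Shen-type ReLU approximation theorem for continuous functions, and the tail bounds producing $1-\Phi(R)^m(1-C_1\exp(-C_2R^\alpha))$), and the same engine for the stochastic term (the nearest-neighbor uniform concentration theorem of Appendix~\ref{appendix:unif_conc}, pseudo-dimension/covering bounds of Bartlett et al., and the H\"older condition \eqref{eq:assumption_lipschitz} yielding $\vep_n^{\beta_2}$ and $\nu_n$), with $R$ optimized only later as in Theorem~\ref{thm:bdd_empirical_sampler_main}.

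One piece of bookkeeping is missing from your decomposition of the stochastic term. Splitting only into (i) the NN smoothing bias and (ii) the fluctuation of $\hat\cL(\bg)$ around its conditional mean given $\sX_n$ leaves you with $\frac{1}{n}\sum_{i=1}^{n}\lrn{h_{\bg}(\bX_i)}_{\cK}^2$, which is still a random average over the observed $\bX_i$, not the population quantity $\cL(\bg)=\E\lrn{h_{\bg}(\bX)}_{\cK}^2$. A uniform-in-$\bg$ law of large numbers for this average is a separate step (the paper's $T_{1,3}$, Lemma~\ref{lemma:bdd_T13}), handled by symmetrization, Khintchine's inequality, kernel Lipschitzness, and the same covering-number bounds; it is the source of the $1/\sqrt{n}$ term and contributes a second $\sqrt{\cB^2\cH\cS\log\cS\log n/n}$ piece. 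Since you already invoke exactly these tools, this is a fixable omission rather than a wrong approach, but the $1/\sqrt{n}$ does not come from a ``residual bounded-differences step'' in the approximation part as you suggest. Also, a minor imprecision: the in-degree of the $k_n$-NN graph is bounded by $c_d k_n$, not by a dimension-only constant; your $O(1/n)$ bounded-difference claim survives only because of the $1/(nk_n)$ normalization.
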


The above theorem provides finite sample bounds on the loss incurred by using the estimated conditional sampler $\hat\bg$. We can use the explicit bound from Theorem \ref{thm:convergence_general} to confirm that the conditional distribution induced by the empirical sampler indeed converge to the true conditional distribution.

\begin{corollary}\label{cor:EMMD_convg_0}
    Adopt Assumption \ref{assumption:K}, Assumption \ref{assumption:network_param} and Assumption \ref{assumption:bias_convergence}. Then for $k_n = o\lrf{\sqrt{n}}$,
    \begin{align*}
        \E\left[\mathrm{MMD}^2\left[\cF, P_{\hat\bg(\bmeta,\bX)\mid\bX}, P_{\bbg(\bmeta,\bX)\mid\bX}\right]\mid \hat\bg\right]\ra 0\text{ a.s.}.
    \end{align*}
\end{corollary}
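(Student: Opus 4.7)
The conditional-on-$\hat\bg$ expectation $\E\lrt{\mathrm{MMD}^2[\cF, P_{\hat\bg(\bmeta,\bX)\mid\bX}, P_{\bbg(\bmeta,\bX)\mid\bX}]\mid \hat\bg}$ equals $\cL(\hat\bg)$ via the kernel-embedding identity $\mathrm{MMD}^2[\cF,P,Q]=\|\mu_P-\mu_Q\|_\cK^2$ together with the definition in \eqref{eq:L_g_hat_def}. Hence it suffices to show $\cL(\hat\bg)\to 0$ almost surely. The plan is to upgrade the finite-sample high-probability bound of Theorem~\ref{thm:convergence_general} to an almost-sure statement via the Borel-Cantelli lemma with a summable confidence sequence $\delta_n$, while handling the truncation radius $R$ (which enters both the Gaussian-tail piece and the approximation/modulus piece) by a diagonal fix-then-grow argument.

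Fix $\epsilon>0$. First choose $R=R_\epsilon$ independent of $n$ large enough that the $R$-dependent tail piece $1-\Phi(R_\epsilon)^m\bigl(1-C_1\exp(-C_2 R_\epsilon^\alpha)\bigr)\le \epsilon$; this is possible because the expression tends to $0$ as $R\to\infty$. Now apply Theorem~\ref{thm:convergence_general} with this $R_\epsilon$ and $\delta=\delta_n=n^{-2}$. On the resulting event, which has probability at least $1-n^{-2}$, every remaining summand is $o_n(1)$: the stochastic pieces $n^{-1/2}$, $\sqrt{\cB^2\cH\cS\log\cS\log n/n}$, and $\sqrt{2\log n/n}$ vanish by Assumption~\ref{assumption:network_param}; the nearest-neighbor pieces $\vep_n^{\beta_2}$ and $\sqrt{\nu_n}$ vanish under $k_n=o(n^\gamma)$ with $\gamma<1$, since then $k_n\log n/n\to 0$ and a short case check covers the three regimes of $\nu_n$; finally the approximation piece $\sqrt{d+m}\,\omega_{\bbg}^{[-R_\epsilon,R_\epsilon]^{d+m}}\!\bigl(2R_\epsilon(\cH\cW)^{-1/(d+m)}\bigr)$ vanishes because $\bbg$ is automatically uniformly continuous on the fixed compact cube $[-R_\epsilon,R_\epsilon]^{d+m}$ and the argument tends to $0$ as $\cH\cW\to\infty$. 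Hence for all $n$ large (depending on $\epsilon$), $\P\bigl(\cL(\hat\bg)>2\epsilon\bigr)\le n^{-2}$. Summability $\sum n^{-2}<\infty$ and Borel-Cantelli give $\cL(\hat\bg)\le 2\epsilon$ eventually almost surely; intersecting over $\epsilon_k=1/k$ yields $\cL(\hat\bg)\to 0$ almost surely.

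The main obstacle is the modulus-of-continuity term, since its supremum is taken over a cube $[-R,R]^{d+m}$ whose size grows with $R$. Trying to let $R=R_n\to\infty$ with $n$ would demand a quantitative continuity rate for $\bbg$ on unbounded domains, which the generalized Assumption~\ref{assumption:bias_convergence} does not provide; by contrast, the sub-Gaussian setting of Theorem~\ref{thm:bdd_empirical_sampler_main} assumed $\bbg$ is globally uniformly continuous, which permits the choice $R_n=\sqrt{\log n}$. The diagonal trick circumvents this: once $R=R_\epsilon$ is fixed via the tail term, uniform continuity of $\bbg$ on the compact cube is automatic from continuity, and the divergence $\cH\cW\to\infty$ alone is enough to drive the modulus argument to $0$.
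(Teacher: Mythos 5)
Your proposal is correct and follows essentially the same route as the paper's own proof: identify the conditional expectation with $\cL(\hat\bg)$, fix $\vep$ and choose $R_\vep$ so the Gaussian-tail term is small, use that continuity of $\bbg$ gives uniform continuity on the fixed compact cube $[-R_\vep,R_\vep]^{d+m}$ so the modulus term vanishes as $\cH\cW\to\infty$, absorb the remaining terms via Assumption~\ref{assumption:network_param} and $k_n=o(n^\gamma)$, and conclude by Borel--Cantelli with a summable confidence sequence. The only cosmetic difference is your choice $\delta_n=n^{-2}$ versus the paper's $\delta=\exp(-n\vep^2/16)$, which changes nothing material.
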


Finally to complete this section on convergence guarantees for the empirical sampler, using DCT the result from Corollary \ref{cor:EMMD_convg_0} can be relaxed to claim,
\begin{align*}
    \E\left[\mathrm{MMD}^2\left[\cF, P_{\hat\bg(\bmeta,\bX)\mid\bX}, P_{\bbg(\bmeta,\bX)\mid\bX}\right]\right]\ra 0.
\end{align*}

\subsection{Proof of Theorem \ref{thm:convergence_general}}
For simplicity we will assume that $p = 1$. The proof for general $p>1$ is similar but with additional notational complexities. To begin with by Proposition 2.3 from \citet{chatterjee2024kernel} we know that $\cL\lrf{\bbg} = 0$ for the true conditional sampler $\bbg$. Then we get the decomposition,
\begin{align*}
    \cL\lrf{\hat\bg} = \cL\lrf{\hat\bg} - \cL\lrf{\bbg}\leq \sup_{\bg\in \bcG}\left|\hat \cL(\bg) - \cL(\bg)\right| + \left|\hat \cL(\tilde \bg) - \cL(\tilde \bg)\right| + \left|\cL(\tilde \bg) - \cL(\bbg)\right|
\end{align*}
for any $\tilde\bg$ in $\bcG$. We can now relax the upper bound to get,
\begin{align}\label{eq:Lhatupbdd}
    \cL\left(\hat\bg\right) \leq \underbrace{2\sup_{\bg\in\bcG}\left|\hat \cL(\bg) - \cL(\bg)\right|}_{T_1} + \underbrace{\inf_{\tilde \bg\in\bcG} \left|\cL(\tilde \bg) - \cL(\bbg)\right|}_{T_2}
\end{align}
We will bound terms $T_1$ and $T_2$ individually. We first start with $T_2$. 
\begin{lemma}\label{lemma:T2_upbdd}
    Adopt the conditions and notations of Theorem \ref{thm:convergence_general} and recall $T_2$ from \eqref{eq:Lhatupbdd}. Then for any $R>0$,
    \begin{align*}
        T_2\lesssim_\sfK 1-\Phi\lrf{R}^m\lrf{1-C_1\exp\lrf{-C_2R^\alpha}} + \sqrt{d+m}\omega_{\bg}^{E}\lrf{2R\lrf{\cH\cW}^{-\frac{1}{d+m}}}
    \end{align*}
    where $\omega_{\bbg}^{E}\lrf{\cdot}$ is the optimal modulus of continuity of $\bbg$ on the subset $E = \lrt{-R,R}^{d+m}$.
\end{lemma}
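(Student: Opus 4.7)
Since $\bbg$ is the true conditional sampler, Proposition~2.3 of \cite{chatterjee2024kernel} combined with the noise outsourcing identity $P_{\bY\mid\bX}=P_{\bbg(\bmeta,\bX)\mid\bX}$ yields $\cL(\bbg)=0$, so $T_2=\inf_{\tilde\bg\in\bcG}\cL(\tilde\bg)$. The plan is: (i) reduce $\cL(\tilde\bg)$ to an expected $L_2$ distance between $\tilde\bg(\bmeta,\bX)$ and $\bbg(\bmeta,\bX)$ via a coupling argument using the Lipschitz property of $\sfK$; (ii) split the resulting expectation over $E=[-R,R]^{d+m}$ and its complement, controlling the $E^c$ contribution through Gaussianity of $\bmeta$ and the sub-Weibull tail of $\bX$ from Assumption~\ref{assumption:bias_convergence}.2; and (iii) on $E$, invoke an off-the-shelf ReLU approximation theorem.

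For step (i), fix $\bx\in\cX$ and couple $\bbg(\bmeta,\bx)$ with $\tilde\bg(\bmeta,\bx)$ through the shared noise $\bmeta$. With $(\bmeta',\bbg',\tilde\bg')$ an independent copy and $\bbg,\tilde\bg$ evaluated at $(\bmeta,\bx)$, expand the squared RKHS norm in the standard way,
\begin{align*}
\lrn{\mu_{P_{\bbg(\bmeta,\bx)}}-\mu_{P_{\tilde\bg(\bmeta,\bx)}}}_\cK^2 = \E\bigl[\sfK(\bbg,\bbg')-\sfK(\tilde\bg,\bbg')+\sfK(\tilde\bg,\tilde\bg')-\sfK(\bbg,\tilde\bg')\bigr].
\end{align*}
Grouping the two differences so that each pair shares a common second argument and applying the Lipschitz property of $\sfK$ (Assumption~\ref{assumption:K}.1) yields the pointwise inequality $\lrn{\mu_{P_{\bbg(\bmeta,\bx)}}-\mu_{P_{\tilde\bg(\bmeta,\bx)}}}_\cK^2\le 2L_\sfK\,\E_\bmeta\lrn{\bbg(\bmeta,\bx)-\tilde\bg(\bmeta,\bx)}_2$. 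Integrating over $\bX\sim P_\bX$ and using $\bmeta\perp\bX$ gives $\cL(\tilde\bg)\le 2L_\sfK\,\E\lrn{\bbg(\bmeta,\bX)-\tilde\bg(\bmeta,\bX)}_2$.

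For step (ii), decompose this expectation via $\mathbbm{1}_E+\mathbbm{1}_{E^c}$. On $E^c$, the uniform bounds $\lrn{\bbg}_\infty\le C_0\le\cB$ and $\lrn{\tilde\bg}_\infty\le\cB$ give $\lrn{\bbg-\tilde\bg}_2\le 2\cB$, so the tail contribution is of order $\cB\cdot\P((\bmeta,\bX)\notin E)$. Since $\bmeta\perp\bX$, the latter factors as $1-\P(\bmeta\in[-R,R]^m)\,\P(\bX\in[-R,R]^d)$; Gaussian concentration of $\bmeta$ produces the factor $\Phi(R)^m$ in the convention of the statement, while Assumption~\ref{assumption:bias_convergence}.2 gives $\P(\bX\in[-R,R]^d)\ge 1-C_1\exp(-C_2R^\alpha)$, producing the first term in the claimed bound. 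For step (iii), I choose $\tilde\bg\in\bcG$ via a standard ReLU approximation theorem (e.g.\ \cite{shen2019deep,zhang2022deep}) applied coordinatewise to $\bbg$ on the rescaled cube $E$; up to an $\mathcal{O}(1)$-depth truncation head enforcing $\lrn{\tilde\bg}_\infty\le\cB$, the construction stays within $\bcG_{\cH,\cW,\cS,\cB}$ and achieves $\sup_{(\bmeta,\bx)\in E}\lrn{\bbg(\bmeta,\bx)-\tilde\bg(\bmeta,\bx)}_2\le\sqrt{d+m}\,\omega_{\bbg}^E\bigl(2R(\cH\cW)^{-1/(d+m)}\bigr)$.

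The main obstacle is the bookkeeping in step (iii): verifying that the approximation construction can be placed inside $\bcG_{\cH,\cW,\cS,\cB}$ while simultaneously respecting all four parameters (depth, width, size, and sup-norm bound) and yielding the modulus-of-continuity rate $\omega_{\bbg}^E(2R(\cH\cW)^{-1/(d+m)})$ after rescaling $E$ to the unit cube. Steps (i) and (ii) are classical once the coupling identity and the tail assumption are in hand; combining all three and absorbing $L_\sfK$ together with the tame factor from the $\cB$-truncation into $\lesssim_\sfK$ yields the lemma.
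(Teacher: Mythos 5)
Your overall route is the same as the paper's: use $\cL(\bbg)=0$ so $T_2=\inf_{\tilde\bg\in\bcG}\cL(\tilde\bg)$, expand the squared ECMMD via the kernel trick with the shared-noise coupling, split on $E=[-R,R]^{d+m}$ versus $E^c$, bound the tail probability through independence of $\bmeta$ and $\bX$ together with \eqref{eq:assumption_tail}, and pick $\tilde\bg$ from a Shen-type ReLU approximation theorem to get the $\sqrt{d+m}\,\omega_{\bbg}^E\bigl(2R(\cH\cW)^{-1/(d+m)}\bigr)$ term. Steps (i) and (iii) are in substance what the paper does.

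There is, however, one concrete flaw in step (ii): because you apply the Lipschitz bound globally \emph{before} splitting, your $E^c$ contribution is controlled by $L_\sfK\,\E\bigl[\lrn{\bbg-\tilde\bg}_2\one_{E^c}\bigr]\lesssim \cB\,\P\bigl((\bmeta,\bX)\in E^c\bigr)$, and you then propose to absorb this "$\cB$-truncation factor" into $\lesssim_\sfK$. That is not legitimate: $\cB$ is a network parameter that Assumption~\ref{assumption:network_param} allows to diverge with $n$ (only $\cB^2\cH\cS\log\cS\log n/n\to0$ is required), so a constant depending on $\sfK$ cannot swallow it, and your argument only yields $T_2\lesssim_\sfK \cB\bigl(1-\Phi(R)^m(1-C_1e^{-C_2R^\alpha})\bigr)+\cdots$, which is strictly weaker than the lemma as stated. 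The fix is either of the two moves the paper makes: (a) split into $E$ and $E^c$ \emph{before} linearizing, and on $E^c$ use only the uniform bound $\lrn{\sfK}_\infty<K$, so the tail term is a purely $\sfK$-dependent constant times $\P(E^c)$, reserving the Lipschitz property for the $E$ part; or (b) choose the approximant with $\lrn{\tilde\bg}_\infty\le C_0$ rather than $\cB$ (possible since $\lrn{\bbg}_\infty\le C_0\le\cB$, and the Shen et al.\ construction used in the paper already satisfies this), so that $\lrn{\bbg-\tilde\bg}_2$ is bounded by a fixed constant on $E^c$. With either correction your argument goes through and matches the paper; as written, the stated $\lesssim_\sfK$ bound does not follow.
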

Next we bound the term $T_1$ from \eqref{eq:Lhatupbdd}. To that end we start by decomposing $T_1$. Note that,
\begin{align}\label{eq:T1_decomp}
    T_{1}
    & \leq \underbrace{\sup_{\bg\in\bcG}\lrm{\hat\cL\lrf{\bg} - \E\lrt{\hat\cL\lrf{\bg}\mid \sX_n}}}_{T_{1,1}} + \underbrace{\sup_{\bg\in \bcG}\lrm{\E\lrt{\hat\cL\lrf{\bg}\mid \sX_n} - \frac{1}{n}\sum_{i=1}^{n}\lrn{h_{\bg}\lrf{\bX_i}}_{\cK}^2}}_{T_{1,2}}\nonumber\\
    & + \underbrace{\sup_{\bg\in\bcG}\lrm{\frac{1}{n}\sum_{i=1}^{n}\lrn{h_{\bg}\lrf{\bX_i}}_{\cK}^2 - \cL\lrf{\bg}}}_{T_{1,3}}.
\end{align}
In the following we bound each of the terms $T_{1,1}, T_{1,2}$ and $T_{1,3}$ separately. First we bound the term $T_{1,1}$.
\begin{lemma}\label{lemma:T11_bdd}
    Adopt the conditions and notations of Theorem \ref{thm:convergence_general} and recall $T_{1,1}$ from \eqref{eq:T1_decomp}. Then for any $\delta>0$, with probability at least $1-\delta$,
    \begin{align*}
        T_{1,1}\lesssim_{\sfK,d} \frac{1}{\sqrt{n}} + \sqrt{\frac{\cB^2\cH\cS\log\cS\log n}{n}} + \sqrt{\frac{\log \lrf{2/\delta}}{n}}.
    \end{align*}
\end{lemma}
Next we bound the term $T_{1,2}$. 
\begin{lemma}\label{lemma:T12_bdd}
    Adopt the conditions and notations of Theorem \ref{thm:convergence_general} and recall $T_{1,2}$ from \eqref{eq:T1_decomp}. Recall $\vep_n = \left(\frac{k_n\log n}{n}\right)^{1/d}(\log n)^{1/\alpha}$ and,
    \begin{align*}
        \nu_n = 
        \begin{cases}
            \frac{k_n\log n}{n}(\log n)^{2\beta_2/\alpha} & \text{ if }d < 2\beta_2\\
            \frac{k_n\log n}{n}(\log n)^{1+d/\alpha} & \text{ if }d = 2\beta_2\\
            \left(\frac{k_n\log n}{n}\right)^{2\beta_2/d}(\log n)^{2\beta_2/\alpha} & \text{ if }d > 2\beta_2.
        \end{cases}
    \end{align*}
    Then for $k_n = o\lrf{\sqrt{n}}$ and any $\delta>0$, with probability $1-\delta$,
    \begin{align*}
        T_{1,2}\lesssim_{d,\sfK}\frac{1}{n^2} + \vep_n^{\beta_2} + \sqrt{\nu_n} + \sqrt{\frac{\log\lrf{1/\delta}}{n}}.
    \end{align*}
\end{lemma}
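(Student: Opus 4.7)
The plan is to reduce $T_{1,2}$ to a $\bg$-independent data functional via the reproducing property of $\sfK$, and then treat that functional as a concentration problem for $k$-nearest-neighbor distance statistics. Since $(\bY_i,\bmeta_i)$ are independent across $i$ and $\bmeta_i$ is independent of $\bX_i$, a direct kernel-trick computation gives $\E[\sfH(\bW_{i,\bg},\bW_{j,\bg})\mid\bX_i,\bX_j]=\langle h_{\bg}(\bX_i),h_{\bg}(\bX_j)\rangle_{\cK}$ for $i\neq j$. Subtracting $n^{-1}\sum_i\lrn{h_{\bg}(\bX_i)}_{\cK}^2$ then rewrites the difference inside $T_{1,2}$ as
\[
\frac{1}{nk_n}\sum_{i=1}^n\sum_{j\in N_{G(\sX_n)}(i)}\langle h_{\bg}(\bX_i),\,h_{\bg}(\bX_j)-h_{\bg}(\bX_i)\rangle_{\cK}.
\]
Applying Assumption \ref{assumption:bias_convergence}.4 with $\bx=\bx_1=\bX_i$ and $\bx_2=\bX_j$ dominates the absolute value of each summand by $C_3(1+2\|\bX_i\|_2^{\beta_1}+\|\bX_j\|_2^{\beta_1})\|\bX_i-\bX_j\|_2^{\beta_2}$, with $C_3$ independent of $\bg$, so the $\sup_{\bg\in\bcG}$ collapses and $T_{1,2}$ is controlled by a purely $\sX_n$-measurable statistic.

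Next, I would truncate on the sub-Weibull good event $\mathcal{E}_n=\{\max_i\|\bX_i\|_2\leq R_n\}$ with $R_n\asymp(\log n)^{1/\alpha}$; the tail condition \eqref{eq:assumption_tail} and a union bound give $\P(\mathcal{E}_n^c)\lesssim 1/n^2$ for an appropriate constant, accounting for the $1/n^2$ term in the bound. On $\mathcal{E}_n$ the norm factors collapse into an $O((\log n)^{\beta_1/\alpha})$ polylog prefactor absorbed into the $(d,\sfK)$-dependent constant, leaving me to control
\[
S_n:=\frac{1}{nk_n}\sum_{i=1}^n\sum_{j\in N_{G(\sX_n)}(i)}\|\bX_i-\bX_j\|_2^{\beta_2}.
\]
For the expectation, a covering of $B(0,R_n)$ at mesh $\vep_n$ combined with binomial concentration of the counts in each cell shows that every sample point has its $k_n$ nearest neighbors within distance $O(\vep_n)$ with high probability, yielding $\E[S_n\one_{\mathcal{E}_n}]\lesssim\vep_n^{\beta_2}$, which is the $\vep_n^{\beta_2}$ term.

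The main technical obstacle is the fluctuation $S_n-\E[S_n\one_{\mathcal{E}_n}]$. I would apply a Bernstein-type inequality (or a McDiarmid variant adapted to the $k$-NN graph) by first noting that swapping a single $\bX_i$ perturbs $S_n$ by at most $O(R_n^{\beta_2}/n)$: the geometric in-degree bound for the directed $k_n$-NN graph in $\R^d$ ensures each vertex lies in at most $O(k_n)$ edges, each of length at most $2R_n$ on $\mathcal{E}_n$. The variance proxy is driven by the integral $\int_0^{R_n}r^{2\beta_2-1}\P(\|\bX-\bX_{(k_n)}\|_2>r)\,\mathrm{d}r$, whose integrand transitions between a small-$r$ Poisson-count regime contributing a $(k_n/n)^{2\beta_2/d}$ term and a large-$r$ sub-Weibull-tail regime contributing a $(k_n\log n/n)R_n^{2\beta_2-d}$ term, with a logarithmic boundary correction exactly at $d=2\beta_2$. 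The sign of $d-2\beta_2$ determines which regime dominates, generating the three cases in the definition of $\nu_n$. Combining the truncation cost $1/n^2$, the bias $\vep_n^{\beta_2}$, and the deviation $\sqrt{\nu_n}+\sqrt{\log(1/\delta)/n}$ gives the claim.
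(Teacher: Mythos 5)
Your opening reduction is the same as the paper's: conditioning on $\sX_n$ turns the inner expectation into $\langle h_{\bg}(\bX_i),h_{\bg}(\bX_j)\rangle_{\cK}$, Assumption \ref{assumption:bias_convergence}.4 collapses the supremum over $\bg$, and the problem becomes controlling a $k_n$-NN distance statistic after truncating at radius $\asymp(\log n)^{1/\alpha}$. From that point on, however, there are two genuine gaps. First, the claim $\E[S_n\one_{\mathcal{E}_n}]\lesssim\vep_n^{\beta_2}$, justified by ``every sample point has its $k_n$ nearest neighbors within distance $O(\vep_n)$ with high probability,'' is false without a density lower bound: covering $\cB(M_n)$ at scale $\vep_n$ uses $\cN(\vep_n)\asymp (\log n)^{d/\alpha}/\vep_n^d \asymp n/(k_n\log n)$ cells, so the cells of mass below $Ck_n\log n/n$ have \emph{aggregate} probability of order one, and the binomial/union argument is vacuous exactly at scale $\vep_n$. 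The term $\nu_n$ exists precisely to pay for points landing in such low-mass cells; in the paper it arises from the \emph{expectation}, via the tail-integral-plus-covering bound $\E\bigl[\lrn{\bX_1-\bX_{\sfN(1)}}_2^{2\beta_2}\one\{E_n\}\bigr]\lesssim\vep_n^{2\beta_2}+\nu_n$ followed by Cauchy--Schwarz, not from a variance proxy in a Bernstein-type deviation bound for $S_n$. Your ``variance'' integral $\int_0^{R_n}r^{2\beta_2-1}\P(\cdot>r)\,\mathrm{d}r$ is in fact this same second moment, so your case analysis reproduces the right formulas, but the concentration framing is unsubstantiated: the summands of $S_n$ are dependent through the NN graph, and no inequality is identified that would convert that quantity into a deviation of size $\sqrt{\nu_n}$.

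Second, the polylogarithmic factors you discard are not absorbable into a $(d,\sfK)$-dependent constant: bounding the norm factors on $\mathcal{E}_n$ by $R_n^{\beta_1}$ costs $(\log n)^{\beta_1/\alpha}$ multiplying $\vep_n^{\beta_2}+\sqrt{\nu_n}$, and McDiarmid applied to $S_n$ with bounded differences $O(R_n^{\beta_2}/n)$ yields $(\log n)^{\beta_2/\alpha}\sqrt{\log(1/\delta)/n}$ rather than $\sqrt{\log(1/\delta)/n}$; since $\beta_1,\beta_2$ are arbitrary positive parameters, your route proves the lemma only up to extra polylog factors, not as stated. The paper avoids both penalties: Cauchy--Schwarz splits the polynomial norm factor (whose second moment is $O(1)$ by the sub-Weibull tail and Lemma D.2 of \citet{deb2020measuring}) from the distance moment, and McDiarmid is applied to $T_{1,2}$ itself, whose summands are bounded by a constant because $\sfK$ is bounded and each vertex has degree $O_d(k_n)$ by \eqref{eq:jaffe_d_kn_bdd}, giving bounded differences $O(1/n)$ and hence exactly the $\sqrt{\log(1/\delta)/n}$ term. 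Replacing your truncation-and-absorb step by this Cauchy--Schwarz split, and your Bernstein step by McDiarmid on the kernel-bounded statistic, would close the gaps.
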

Finally we bound the remaining term $T_{1,3}$.
\begin{lemma}\label{lemma:bdd_T13}
    Adopt the conditions and notations of Theorem \ref{thm:convergence_general} and recall $T_{1,3}$ from \eqref{eq:T1_decomp}. Then for any $\delta>0$, with probability at least $1-\delta$,
    \begin{align*}
        T_{1,3}\lesssim_\sfK \frac{1}{\sqrt{n}} + \sqrt{\frac{\cB^2\cH\cS\log\cS\log n}{n}} + \sqrt{\frac{\log \lrf{1/\delta}}{n}}.
    \end{align*}
\end{lemma}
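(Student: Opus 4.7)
\textbf{Proof sketch for Lemma \ref{lemma:bdd_T13}.} The plan is to view $T_{1,3}$ as a uniform deviation of i.i.d.\ averages. Since $\|h_{\bg}(\bx)\|_\cK^2 = \|\mu_{P_{\bY\mid\bX=\bx}} - \mu_{P_{\bg(\bmeta,\bX)\mid\bX=\bx}}\|_\cK^2$ and $\cL(\bg) = \E\|h_{\bg}(\bX)\|_\cK^2$, I would rewrite
\[
T_{1,3} = \sup_{\bg\in\bcG}\left|\frac{1}{n}\sum_{i=1}^n f_{\bg}(\bX_i) - \E f_{\bg}(\bX)\right|, \qquad f_{\bg}(\bx) := \|h_{\bg}(\bx)\|_\cK^2.
\]
Assumption \ref{assumption:K} makes each $f_{\bg}$ bounded by $4K$ uniformly in $\bg$ and $\bx$, so for a single $\bg$ Hoeffding's/McDiarmid's inequality already delivers the $\sqrt{\log(1/\delta)/n}$ term. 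The task is to upgrade this pointwise bound to a supremum over $\bcG$.

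The upgrade follows a standard $\epsilon$-net plus union-bound argument, paralleling the treatment of $T_{1,1}$ in Lemma \ref{lemma:T11_bdd}. First, I would show that $\bg \mapsto f_{\bg}$ is Lipschitz in the sup norm: writing $f_{\bg_1}(\bx) - f_{\bg_2}(\bx) = \langle h_{\bg_1}(\bx) - h_{\bg_2}(\bx),\, h_{\bg_1}(\bx) + h_{\bg_2}(\bx)\rangle_\cK$ and combining $\|h_{\bg_j}(\bx)\|_\cK \le 2\sqrt K$ with the Lipschitz property of $\sfK$ yields $|f_{\bg_1}(\bx) - f_{\bg_2}(\bx)| \lesssim_\sfK \|\bg_1 - \bg_2\|_\infty$ uniformly in $\bx$. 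Second, I would invoke sup-norm covering-number bounds for the ReLU class $\bcG$ via the pseudo-dimension estimates of \citet{bartlett2019nearly}, giving $\log|\bcG_\epsilon| \lesssim \cS\cH\log\cS \cdot \log(\cB n/\epsilon)$. Taking $\epsilon = 1/n$, applying Hoeffding's inequality at each element of the net, taking a union bound over the net, and handling the within-cell approximation error via the Lipschitz step yields, with probability at least $1-\delta$,
\[
T_{1,3} \lesssim_\sfK \frac{1}{n} + \sqrt{\frac{\cB^2\cH\cS\log\cS\log n}{n}} + \sqrt{\frac{\log(1/\delta)}{n}},
\]
where the $\cB^2$ factor inside the middle term absorbs a $\cB$ arising from the $\log(\cB n)$ of the covering number, matching the rate already obtained for $T_{1,1}$.

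The main technical obstacle is the Lipschitz-in-$\bg$ step, which requires the feature map $y \mapsto \sfK(y,\cdot) \in \cK$ to be Lipschitz in $y$. This is a slight strengthening of Assumption \ref{assumption:K}.1 but follows immediately from it via the identity $\|\sfK(y_1,\cdot) - \sfK(y_2,\cdot)\|_\cK^2 = \sfK(y_1,y_1) - 2\sfK(y_1,y_2) + \sfK(y_2,y_2)$ combined with Lipschitzness of $\sfK$ in each coordinate. Should only a H\"older bound be available, the covering-number exponent changes but a Dudley chaining argument recovers the same $\sqrt{\cdot/n}$ rate stated in the lemma. Everything else in the argument is routine empirical-process theory, and in fact $T_{1,3}$ is notably simpler than $T_{1,1}$ because it involves only i.i.d.\ averages and avoids the nearest-neighbor graph.
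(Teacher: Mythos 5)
Your framing of $T_{1,3}$ as a uniform deviation of bounded i.i.d.\ averages of $f_{\bg}(\bx)=\lrn{h_{\bg}(\bx)}_{\cK}^2$ is correct, and the final assembly has the right shape, but the central step is not justified as stated. Your plan needs a \emph{data-independent} $\epsilon$-net for $\{f_{\bg}:\bg\in\bcG\}$ (otherwise pointwise Hoeffding plus a union bound over the net is not legitimate), and since $f_{\bg}(\bX_i)$ integrates $\bg(\cdot,\bX_i)$ over the continuous law of $\bmeta$, the value of $f_{\bg}$ is not determined by $\bg$ at any finite set of sample points; you therefore need a cover of $\bcG$ in the sup norm over the whole input domain $\R^m\times\cX$ (unbounded in $\bmeta$). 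The pseudo-dimension estimates of \citet{bartlett2019nearly} together with Theorem 12.2 of \citet{anthony2009neural} only control \emph{empirical} $\ell_\infty$ covering numbers on finite point sets, not sup-norm covering numbers over a continuum, and the class $\bcG_{\cH,\cW,\cS,\cB}$ carries no weight-magnitude or input-Lipschitz constraint that would let you pass from a finite grid to the continuum. So the claimed bound $\log|\bcG_\epsilon|\lesssim \cH\cS\log\cS\,\log(\cB n/\epsilon)$ for a sup-norm net is a genuine gap, not a routine step.

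The paper's proof is built precisely to avoid this: it first symmetrizes ($\E[T_{1,3}]\lesssim$ a Rademacher average of $\lrn{h_{\bg}(\bX_i)}_\cK^2$), then expands each squared RKHS norm via the kernel trick with ghost copies $\bY_i'$ and fresh noise $\bmeta_i,\bmeta_i'$, so that after Jensen the supremum over $\bcG$ depends on $\bg$ only through its values at the $2n$ points $(\bmeta_i,\bX_i),(\bmeta_i',\bX_i)$. Conditional on the data, the empirical covering-number bound from pseudo-dimension then applies, combined with a maximal inequality (Lemma B.4 of \citealp{zhou2023deep}) and the Lipschitz property of $\sfK$; the $\bg$-free term $\sfK(\bY_i,\bY_i')$ is handled by Khintchine's inequality (this is where the $1/\sqrt n$ comes from), and McDiarmid's inequality converts the expectation bound into the stated high-probability bound. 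If you want to salvage your route, you would have to insert an analogous ghost-sample/symmetrization step to reduce to finitely many evaluations of $\bg$; without it the argument does not go through. (Your secondary concern about the feature map being only H\"older-$1/2$ under Assumption~\ref{assumption:K} is real but harmless—taking the net at scale $1/n^2$ or chaining fixes it—whereas the covering-number issue is the substantive obstruction.)
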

Now to complete the proof of Theorem \ref{thm:convergence_general} we combine the bound from \eqref{eq:Lhatupbdd} and the bounds from Lemma \ref{lemma:T2_upbdd}, Lemma \ref{lemma:T11_bdd}, Lemma \ref{lemma:T12_bdd} and Lemma \ref{lemma:bdd_T13} to conclude,

\begin{align*}
    \cL\lrf{\hat\bg} \lesssim_{d,\sfK}
    & \frac{1}{\sqrt{n}} + \sqrt{\frac{\cB^2\cH\cS\log\cS\log n}{n}}+ \vep_n^{2\beta_2} + \sqrt{\nu_n}\\
    &+ 1-\Phi\lrf{R}^m\lrf{1-C_1\exp\lrf{-C_2R^\alpha}} + \sqrt{d+m}\omega_{\bbg}^{E}\lrf{2R\cH^{-\frac{1}{d+m}}\cW^{-\frac{1}{d+m}}} + \sqrt{\frac{\log\lrf{1/\delta}}{n}}
\end{align*}
\normalsize
for any $R>0$ with probability atleast $1-\delta$.

\subsubsection{Proof of Lemma \ref{lemma:T2_upbdd}}

Recalling the definition of $\cL$ from \eqref{eq:ECMMD_tractable}, for any $\tilde\bg\in\bcG$ we get,
\begin{align*}
    \lrm{\cL\lrf{\tilde\bg} - \cL\lrf{\bbg}}
    & \lesssim \E\lrt{\lrm{\sfK\lrf{\bY,\bbg\lrf{\bmeta,\bX}}-\sfK\lrf{\bY, \tilde\bg\lrf{\bmeta,\bX}}}} \\
    & + \E\lrt{\lrm{\sfK\lrf{\bbg\lrf{\bmeta,\bX},\bbg\lrf{\bmeta',\bX}}-\sfK\lrf{\tilde\bg\lrf{\bmeta,\bX},\tilde\bg\lrf{\bmeta',\bX}}}}
\end{align*}
where $\bmeta,\bmeta'\sim\rmN_m\lrf{\bm 0,\bI_m}$ are generated independent of $\bX$. Now take $E = \lrt{-R,R}^{d+m}$ for any $R>0$. Then recalling the bound on $\sfK$ from Assumption \ref{assumption:K} we can now relax the above upper bound as,
\begin{align*}
    \bigg|\cL\lrf{\tilde\bg} 
    & - \cL\lrf{\bbg}\bigg| \lesssim \P\lrf{\lrf{\bmeta,\bX}\in E^c}\\
    & + \E\lrt{\lrm{\sfK\lrf{\bY,\bbg\lrf{\bmeta,\bX}}-\sfK\lrf{\bY, \tilde\bg\lrf{\bmeta,\bX}}}\one\lrs{\lrf{\bmeta,\bX}\in E}} \\
    & + \E\lrt{\lrm{\sfK\lrf{\bbg\lrf{\bmeta,\bX},\bbg\lrf{\bmeta',\bX}}-\sfK\lrf{\tilde\bg\lrf{\bmeta,\bX},\tilde\bg\lrf{\bmeta',\bX}}}\one\lrs{\lrf{\bmeta,\bX},\lrf{\bmeta',\bX}\in E}}
\end{align*}
Next we use the Lipschitz property of $\sfK$ from Assumption \ref{assumption:K} to further relax the above bound as,
\begin{align}
    \bigg|\cL\lrf{\tilde\bg} - \cL\lrf{\bbg}\bigg| 
    & \lesssim_\sfK \P\lrf{\lrf{\bmeta,\bX}\in E^c} + \E\lrt{\lrn{\bbg\lrf{\bmeta,\bX} - \tilde\bg\lrf{\bmeta,\bX}}_2\one\lrs{\lrf{\bmeta,\bX}\in E}}\nonumber\\
    & \lesssim_\sfK \P\lrf{\lrf{\bmeta,\bX}\in E^c} + \lrn{\lrf{\tilde\bg - \bbg}\one_E}_{\infty}\label{eq:P_and_gg_bd}
\end{align}
Now by \eqref{eq:assumption_tail} and recalling that $\bmeta$ is independent of $\bX$ we know,
\begin{align}\label{eq:bddE}
    \P\lrf{\lrf{\bmeta,\bX}\in E^c}\leq 1-\Phi\lrf{R}^m\lrf{1-C_1\exp\lrf{-C_2R^\alpha}}.
\end{align}
Hence continuing the trail of inequalities from \eqref{eq:P_and_gg_bd} and recalling that the choice of $\tilde\bg\in \bcG$ was arbitrary we can show,
\begin{align*}
    \inf_{\tilde\bg\in\bcG}\bigg|\cL\lrf{\tilde\bg} - \cL\lrf{\bg}\bigg| 
    & \lesssim_\sfK 1-\Phi\lrf{R}^m\lrf{1-C_1\exp\lrf{-C_2R^\alpha}} + \inf_{\tilde\bg\in\bcG}\lrn{\lrf{\tilde\bg - \bbg}\one_E}_{\infty}
\end{align*}
Now by Assumption \ref{assumption:bias_convergence} recall that the target conditional sampler $\bbg$ is continuous and $\lrn{\bbg}_{\infty}\leq C_0$. Now for all $n$ large enough, take $L = \lfloor \sqrt{\cH}\rfloor$ and $N = \lfloor \sqrt{\cW}\rfloor$. Then by Theorem 4.3 from \citet{shen2019deep} there exists a ReLU network $\tilde\bg_0$ with depth $12L + 14 + 2\lrf{d+m}$, maximum width $3^{d+m+3}\max\lrs{\lrf{d+m}\left\lfloor N^{\frac{1}{d+m}}\right\rfloor, N + 1}$ and $\lrn{\tilde\bg_0}_\infty\leq C_0$ such that,
\begin{align*}
    \lrn{\lrf{\tilde\bg_0 - \bbg}\one_E}_{\infty}\lesssim\sqrt{d+m}\omega_{\bg}^{E}\lrf{2RN^{-\frac{2}{d+m}}L^{-\frac{2}{d+m}}}
\end{align*}
where $\omega_{\bbg}^{E}(\cdot)$ is the optimal modulus of continuity of $\bbg$ on the set $E$ (note that this is well defined since $\bbg$ is uniformly continuous on E). Now note that by definition of $L$ and $N$, we can easily extend $\tilde\bg_0$ to a ReLU network $\tilde\bg\in \cG$ such that $\tilde\bg_0 = \tilde\bg$. Hence,
\begin{align*}
    \inf_{\tilde\bg\in \bcG}\lrn{\lrf{\tilde\bg - \bbg}\one_E}_{\infty}\leq \lrn{\lrf{\tilde\bg_0 - \bbg}\one_E}_{\infty}\lesssim\sqrt{d+m}\omega_{\bbg}^{E}\lrf{2R\cH^{-\frac{1}{d+m}}\cW^{-\frac{1}{d+m}}}.
\end{align*}

%Now take $L_n = \log n$ and $N_n = n^{\frac{d+m}{2\lrf{2 + d+ m}}}/\log n$. Then by Theorem 4.3 from \cite{shen2019deep} we know there exists a ReLU network $\tilde\bg\in \bcG$ with depth $\cH:= 12L_n + 14 + 2\lrf{d+m}$, width $\cW:= 3^{d+m+3}\max\lrs{\lrf{d+m}\left\lfloor N^{\frac{1}{d+m}}\right\rfloor, N + 1}$, $\cS\leq 2\cH\cW^2$ and $\lrn{\tilde\bg}_{\infty}\leq C_0 = \cB$ such that, 
%\begin{align*}
%    \lrn{\lrf{\tilde\bg - \bg}\one_E}_{\infty}\lesssim\sqrt{d+m}\omega_{\bg}^{E}\lrf{2RN^{-\frac{2}{d+m}}L^{-\frac{2}{d+m}}}.
%\end{align*}
%where $\omega_{\bg}^{E}(\cdot)$ is the modulus of continuity of $\bg$ on the set $E$. The proof is now completed by noticing that $\omega_{\bg}^E$ is an increasing function.

\subsubsection{Proof of Lemma \ref{lemma:T11_bdd}}
From Assumption \ref{assumption:K} recall $\sfK$ is bounded and Lipschitz. Hence applying Corollary \ref{cor:abs_uniform_concentration}, we get that,
\begin{align*}
    \P\lrt{T_{1,1}\lesssim_{\sfK} \frac{1}{n}\E\lrt{\sup_{\bg\in\bcG}\sum_{i=1}^{n}\sqrt{1+\frac{d_i\lrf{\sX_n}}{k_n}}Z_i\bg\lrf{\bmeta_i,\bX_i}\mid\sX_n} + \sqrt{\frac{\log\lrf{2/\delta}}{n}}\mid\sX_n}\geq 1-\delta
\end{align*}
where $Z_1,\ldots, Z_n$ are generated independently from $\rmN(0,1)$ and $d_i\lrf{\sX_n}$ is the degree (in-degree + out-degree) of $\bX_i$ in $G\lrf{\sX_n}$ for all $i\in [n]$. A simple application of tower property of conditional expectation shows that with probability at least $1-\delta$,
\begin{align}\label{eq:T_11_high_prob_bdd}
    T_{1,1}\lesssim_{\sfK} \frac{1}{n}\E\lrt{\sup_{\bg\in\bcG}\sum_{i=1}^{n}\sqrt{1+\frac{d_i\lrf{\sX_n}}{k_n}}Z_i\bg\lrf{\bmeta_i,\bX_i}\mid\sX_n} + \sqrt{\frac{\log\lrf{2/\delta}}{n}}.
\end{align}
Now consider the set,
\begin{align*}
    \bcG_n:=\lrs{\lrf{\bg\lrf{\bmeta_1,\bX_1},\ldots, \bg\lrf{\bmeta_n,\bX_n}}:\bg\in\bcG}
\end{align*}
and for any $\bv_1 = \lrf{v_{1,1},\ldots, v_{n,1}}$ and $\bv_2 = \lrf{v_{1,2},\ldots, v_{n,2}}$ consider the empirical distance,
\begin{align}\label{eq:def_inf_dist}
    d_{n,\infty}\lrf{\bv_1,\bv_2} := \max_{i=1}^{n}\lrm{v_{i,1} - v_{i,2}}.
\end{align}
Fix $\vep>0$ and take $\cC_{n,\vep}$ to be the covering number of $\bcG_{n}$ at scale $\vep$ with respect to the empirical distance $d_{n,\infty}$ and let $\bcG_{n,\vep}$ to be one such covering set. By Lemma 2.1 from \citet{jaffe2020randomized} we know that,
\begin{align}\label{eq:jaffe_d_kn_bdd}
    d_i\lrf{\sX_n}\lesssim_d k_n\text{ for all }i\in[n].
\end{align}
Then by considering elements in $\bcG_{n,\vep}$ we can now easily show,
\begin{align}\label{eq:bdd_full_by_cover}
    \frac{1}{n}\E\bigg[\sup_{\bg\in\bcG}\sum_{i=1}^{n}\sqrt{1+\frac{d_i\lrf{\sX_n}}{k_n}}
    &Z_i\bg\lrf{\bmeta_i,\bX_i}\mid\sX_n\bigg]\nonumber\\
    &\lesssim_{d}\vep + \frac{1}{n}\E\lrt{\sup_{\bv_{\bg}\in\bcG_{n,\vep}}\sum_{i=1}^{n}\sqrt{1+\frac{d_i\lrf{\sX_n}}{k_n}}Z_i\bv_{\bg,i}\mid\sX_n}
\end{align}
where $\bv_{\bg} = \lrf{\bv_{\bg,1},\ldots,\bv_{\bg,n}}$ with $\bv_{\bg,i} = \bg\lrf{\bmeta_i,\bX_i}$ for all $i\in[n]$ and $g\in \bcG$. Now by applying Lemma \ref{lemma:gauss_complex_bdd} and once again using the bound from \eqref{eq:jaffe_d_kn_bdd} we get,

\begin{align}\label{eq:bd_on_cover_vE}
    \frac{1}{n}\E\lrt{\sup_{\bv_{\bg}\in\bcG_{n,\vep}}\sum_{i=1}^{n}\sqrt{1+\frac{d_i\lrf{\sX_n}}{k_n}}Z_i\bv_{\bg,i}\mid\bar{\bm\eta}_n, \sX_n}
    &\lesssim \frac{\sqrt{\log \cC_{n,\vep}}}{n}\sup_{\bv_{\bg}\in\bcG_{n,\vep}}\sqrt{\sum_{i=1}^{n}\lrf{1+\frac{d_i\lrf{\sX_n}}{k_n}}\lrm{\bv_{\bg,i}}^2}\nonumber\\
    &\lesssim_d\frac{\sqrt{\log \cC_{n,\vep}}}{n}\sup_{\bv_{\bg}\in\bcG_{n,\vep}}\sqrt{\sum_{i=1}^{n}\lrm{\bv_{\bg,i}}^2}\nonumber\\
    &\lesssim \cB\sqrt{\frac{\log \cC_{n,\vep}}{n}}
\end{align}
\normalsize
where $\bar\bmeta = \lrf{\bmeta_1,\ldots,\bmeta_n}$ and the final bound follows by recalling that $\lrn{\bg}_{\infty}\leq \cB$ for all $\bg\in\bcG$. Now take $p_{\dim}\lrf{\bcG}$ to be the pseudo-dimension of the class $\bcG$. Then by Theorem 12.2 from \citet{anthony2009neural} we know that for large enough $n$,
\begin{align*}
    \log \cC_{n,\vep}\leq p_{\dim}\lrf{\bcG}\log\lrf{\frac{2e\cB n}{\vep p_{\dim}\lrf{\bcG}}}\leq p_{\dim}\lrf{\bcG}\log\lrf{\frac{2e\cB n}{\vep }}
\end{align*}
Now substituting bounds on $p_{\dim}\lrf{\bcG}$ from \citet{bartlett2019nearly} we get,
\begin{align}\label{eq:bdd_on_log_cover}
    \log \cC_{n,\vep}\lesssim \cH\cS\log\cS\log\frac{2e\cB n}{\vep}
\end{align}
Choosing $\vep = 1/\sqrt{n}$ and combining \eqref{eq:T_11_high_prob_bdd}, \eqref{eq:bdd_full_by_cover}, \eqref{eq:bd_on_cover_vE} and \eqref{eq:bdd_on_log_cover} we get,
\begin{align}\label{eq:T11_original_bdd}
    T_{1,1}\lesssim_{\sfK,d} \frac{1}{\sqrt{n}} + \sqrt{\frac{\cB^2\cH\cS\log\cS\log\lrf{2e\cB n^{3/2}}}{n}} + \sqrt{\frac{\log\lrf{1/\delta}}{n}}
\end{align}
with probability at least $1-\delta$. Now to further simplify the upper bound note that, by definition $\cH\geq 1$ and hence,
\begin{align*}
    \frac{\cB^2\cH\cS\log\cS\log\lrf{2e\cB n^2}}{n}
    & \lesssim \frac{\cB^2\cH\cS\log\cS\log n}{n} + \frac{\cB^2\cH\cS\log\cS\log\cB}{n}.
\end{align*}
By definition note that $w_0 = d + m\geq 2$ and $w_i\geq1$ for all $1\leq i\leq \cH$. Then $\cS\geq 4$ and hence recalling Assumption \ref{assumption:network_param} we get $\cB^2 = o\lrf{n/\log n}$, implying $\log \cB = O\lrf{\log n}$. Hence we can simplify the upper bound as,
\begin{align}\label{eq:term_param_upbdd}
    \frac{\cB^2\cH\cS\log\cS\log\lrf{2e\cB n^2}}{n}
    & \lesssim \frac{\cB^2\cH\cS\log\cS\log n}{n}.
\end{align}
Now substituting in \eqref{eq:T11_original_bdd} we conclude,
\begin{align*}
    T_{1,1}\lesssim_{\sfK,d} \frac{1}{\sqrt{n}} + \sqrt{\frac{\cB^2\cH\cS\log\cS\log n}{n}} + \sqrt{\frac{\log \lrf{1/\delta}}{n}}
\end{align*}
with probability at least $1-\delta$.

\subsubsection{Proof of Lemma \ref{lemma:T12_bdd}}
Recall the function $h_{\bg}$ from \eqref{eq:assumption_lipschitz}. Then note that,
\begin{align*}
    T_{1,2} = \sup_{\bg\in\bcG}\lrm{\frac{1}{nk_n}\sum_{i=1}^{n}\sum_{j\in N_{G\lrf{\sX_n}}(i)}\left\langle h_{\bg}\lrf{\bX_i},h_{\bg}\lrf{\bX_i} - h_{\bg}\lrf{\bX_j}\right\rangle_{\cK}}.
\end{align*}
Now by Assumption \ref{assumption:bias_convergence} we get,
\begin{align}
    \E\lrt{T_{1,2}}
    & \lesssim \E\lrt{\frac{1}{nk_n}\sum_{i=1}^{n}\sum_{j\in N_{G\lrf{\sX_n}}(i)}\lrf{1+\lrn{\bX_i}_2^{\beta_1} + \lrn{\bX_j}_2^{\beta_1}}\lrn{\bX_i - \bX_j}_2^{\beta_2}}\nonumber\\
    & = \E\lrt{\frac{1}{k_n}\sum_{j\in N_{G\lrf{\sX_n}}(1)}\lrf{1+\lrn{\bX_1}_2^{\beta_1} + \lrn{\bX_j}_2^{\beta_1}}\lrn{\bX_1 - \bX_j}_2^{\beta_2}}\nonumber\\
    & = \E\lrt{\lrf{1+\lrn{\bX_1}_2^{\beta_1} + \lrn{\bX_{\sfN(1)}}_2^{\beta_1}}\lrn{\bX_1 - \bX_{\sfN(1)}}_2^{\beta_2}},\label{eq:T12_bdd}
\end{align}
where the first equality follows by exchangeability and the second follows by choosing $\sfN(1)$ to be an uniformly selected index from $N_{G\lrf{\sX_n}}(1)$, the neighbors of vertex $\bX_1$. Now take $M_n = C\lrf{\log n}^{1/\alpha}$, where $C>0$ is a universal constant, and let $E_n = \left\{\max\left\{\|\bX_1\|_2,\left\|\bX_{\sfN(1)}\right\|_2\right\}\leq M_n\right\}$. Now,
    \begin{align}
        \E\bigg[\bigg(1+\lrn{\bX_1}_2^{\beta_1} 
        & + \lrn{\bX_{\sfN(1)}}_2^{\beta_1}\bigg)\lrn{\bX_1 - \bX_{\sfN(1)}}_2^{\beta_2}\bigg]\nonumber\\
        & \lesssim \E\left[\left(1+\|\bX_1\|_2^{\beta_1}+\|\bX_{\sfN(1)}\|_2^{\beta_1}\right)\|\bX_1 - \bX_{\sfN(1)}\|_2^{\beta_2}\one\left\{E_n^c\right\}\right]\nonumber\\
        & + \E\left[\left(1+\|\bX_1\|_2^{\beta_1}+\|\bX_{\sfN(1)}\|_2^{\beta_1}\right)\|\bX_1 - \bX_{\sfN(1)}\|_2^{\beta_2}\one\left\{E_n\right\}\right]\label{eq:LhatdiffL2}
    \end{align}
Next, for the first term, by Cauchy-Schwartz inequality we find,
\begin{align*}
    \E\bigg[\bigg(1+\|\bX_1\|_2^{\beta_1}+
    &\|\bX_{\sfN(1)}\|_2^{\beta_1}\bigg)\|\bX_1 - \bX_{\sfN(1)}\|_2^{\beta_2}\one\left\{E_n^c\right\}\bigg]\\
    & \leq \sqrt{\E\lrt{\left(1+\|\bX_1\|_2^{\beta_1}+\|\bX_{\sfN(1)}\|_2^{\beta_1}\right)^2\|\bX_1 - \bX_{\sfN(1)}\|_2^{2\beta_2}}}\sqrt{\P\lrf{E_n^c}}
\end{align*}
By the tail condition from \eqref{eq:assumption_tail}, Lemma D.2 from \citet{deb2020measuring} and choosing $C$ large enough we can conclude that the first term on RHS is bounded and $\P\lrf{E_n^c} \lesssim \exp\lrf{-4\log n} = n^{-4}$. Hence, 
\begin{align*}
    \E\bigg[\left(1+\|\bX_1\|_2^{\beta_1}+\|\bX_{N(1)}\|_2^{\beta_1}\right)
    & \|\bX_1 - \bX_{N(1)}\|_2^{\beta_2}\one\left\{E_n^c\right\}\bigg]\lesssim \frac{1}{n^2}.
\end{align*}
Substituting in the bounds from \eqref{eq:LhatdiffL2} and once again using Cauchy-Schwartz inequality we get,
\begin{align}
    \E\bigg[\bigg(1+\lrn{\bX_1}_2^{\beta_1} 
        & + \lrn{\bX_{\sfN(1)}}_2^{\beta_1}\bigg)\lrn{\bX_1 - \bX_{\sfN(1)}}_2^{\beta_2}\bigg]\nonumber\\
    &\lesssim\frac{1}{n^2} + \sqrt{\E\left[\left(1+\|\bX_1\|_2^{\beta_1}+\|\bX_{\sfN(1)}\|_2^{\beta_1}\right)^2\right]}\sqrt{\E\left[\|\bX_1 - \bX_{\sfN(1)}\|_2^{2\beta_2}\one\left\{E_n\right\}\right]}\nonumber\\
    &\lesssim \frac{1}{n^2} + \sqrt{\E\left[\|\bX_1 - \bX_{\sfN(1)}\|_2^{2\beta_2}\one\left\{E_n\right\}\right]}\label{eq:exp_plus_bdd}
\end{align}
    where the final bound follows by the tail condition from \eqref{eq:assumption_tail} and Lemma D.2 from \citet{deb2020measuring}. To proceed with the second term define $\cN = \cN\left(M_n, \vep\right)$ be the covering number of the ball $\cB\left(M_n\right) = \{\bx\in \R^d: \left\|\bx\right\|_2\leq M_n\}$ with respect to the $\|\cdot\|_2$ norm, where $\vep>0$ is the diameter of the covering balls. We now begin by expressing the expectation as a tail integral,
    \begin{align}
        \E
        & \left[\left\|\bX_1 - \bX_{\sfN(1)}\right\|_2^{2\beta_2}\one\left\{\max\{\left\|\bX_1\right\|_2,\left\|\bX_{\sfN(1)}\right\|_2\}\right\}\leq M_n\right]\nonumber\\
        & \lesssim 2\beta_2\int_{0}^{2M_n}\vep^{2\beta_2-1}\P\left(\left\|\bX_1 - \bX_{\sfN(1)}\right\|_2\geq \vep, \max\{\left\|\bX_1\right\|_2,\left\|\bX_{\sfN(1)}\right\|_2\}\leq M_n\right)d \vep\nonumber\\
        & \lesssim \vep_n^{2\beta_2} + \int_{\vep_n}^{2M_n}\vep^{2\beta_2-1}\P\left(\left\|\bX_1 - \bX_{\sfN(1)}\right\|_2\geq \vep, \max\{\left\|\bX_1\right\|_2,\left\|\bX_{\sfN(1)}\right\|_2\}\leq M_n\right)d \vep\label{eq:bddexpX1XN1Mn}
    \end{align}
    where the bound follows by noticing that $\vep_n\leq M_n$ for large enough $C$. In the following we will bound the second term. Suppose $\cB_1,\ldots, \cB_{\cN}$ are the covering balls of $\cB\left(M_n\right)$ with respect to the $\|\cdot\|_2$ norm. Now define,
    \begin{align}\label{eq:set_cS}
        \cS := \left\{i: P_{\bX}\left(\cB_i\right)\leq Ck_n\log n/n\right\},
    \end{align}
    to be the collection of covering balls with probability under $P_{\bX}$ smaller than $Ck_n{\log n}/n$. Then for $\vep\in \left(\vep_n, M_n\right)$ we have the following decomposition,
    
    \begin{align}
        & \ \ \ \ \ \P\left(\left\|\bX_1 - \bX_{\sfN(1)}\right\|_2\geq \vep, \max\{\left\|\bX_1\right\|_2,\left\|\bX_{\sfN(1)}\right\|_2\}\leq M_n\right)\nonumber\\
        &\lesssim \P\left(\left\|\bX_1 - \bX_{\sfN(1)}\right\|_2\geq \vep, \max\{\left\|\bX_1\right\|_2,\left\|\bX_{\sfN(1)}\right\|_2\}\leq M_n, \bX_1,\bX_{\sfN(1)}\in \bigcup_{i\not\in \cS}\cB_i\right) + \P\left(\bX_1\in \bigcup_{i\in \cS}\cB_i\right)\nonumber\\
        & \lesssim\P\left(\left\|\bX_1 - \bX_{\sfN(1)}\right\|_2\geq \vep, \max\{\left\|\bX_1\right\|_2,\left\|\bX_{\sfN(1)}\right\|_2\}\leq M_n, \bX_1,\bX_{\sfN(1)}\in \bigcup_{i\not\in \cS}\cB_i\right) + \frac{k_n\log n}{n}\cN,\label{eq:bddX1XN1geqvep2}
    \end{align}
    \normalsize
    where the first inequality follows from Lemma D.2 in \citet{deb2020measuring} and the second inequality is a simple application of the union bound. To bound the first term note that $\left\|\bX_1 - \bX_{\sfN(1)}\right\|_2\geq \vep$ implies that for all $j$ such that $\bX_j$ is not a $k_n$ nearest neighbor of $\bX_i$, $\left\|\bX_i - \bX_j\right\|_2\geq \vep$. 
    
    Hence, it follows that
    \begin{align}
        & \P
        \left(\left\|\bX_1 - \bX_{\sfN(1)}\right\|_2\geq \vep, \max\{\left\|\bX_1\right\|_2,\left\|\bX_{\sfN(1)}\right\|_2\}\leq M_n, \bX_1,\bX_{\sfN(1)}\in \bigcup_{i\not\in \cS}\cB_i\right)\nonumber\\
        & \leq \P\left(\exists \ell, j_1, \ldots, j_{n-k_n-1}\text{ all distinct such that }\bX_\ell\in \bigcup_{i\not\in \cS}\cB_i, \min_{1\leq v\leq n-k_n-1 }\left\|\bX_\ell - \bX_{j_{v}}\right\|_2\geq \vep\right)\nonumber\\
        &\leq \sum_{\substack{\ell, j_1, \ldots, j_{n-k_n-1}\\\text{ all distinct}}}\P\left(\bX_\ell\in \bigcup_{i\not\in \cS}\cB_i, \min_{1\leq v\leq n-k_n-1 }\left\|\bX_\ell - \bX_{j_{v}}\right\|_2\geq \vep\right)\label{eq:bddX1XN1geqvep}
    \end{align}
    \normalsize
    To bound the above probability, suppose $\cB(\bX_\ell)\in\left\{\cB_i:i\not\in \cS\right\}$ denotes the covering ball where $\bX_\ell$ lies. Then for a distinct collection of indices $\ell, j_1,\ldots, j_{n-k_n-1}$, 
    \begin{align*}
        \P\left(\bX_\ell\in \bigcup_{i\not\in \cS}\cB_i, \min_{1\leq v\leq n-k_n-1 }\left\|\bX_\ell - \bX_{j_{v}}\right\|_2\geq \vep\right)
        &\leq \P\left(\bX_{j_{v}}\not\in \cB(\bX_\ell), 1\leq v\leq n-k_n-1\right)
    \end{align*}
    To further bound the above probability note that,
    \begin{align*}
        \P\left(\bX_{j_{v}}\not\in \cB(\bX_\ell), 1\leq v\leq n-k_n-1\middle|\bX_\ell\right) 
        & = \left(1-\P\left(\bX\in \cB(\bX_\ell)\middle|\bX_\ell\right)\right)^{n-k_n-1}\\
        & \leq \left(1-\frac{Ck_n\log n}{n}\right)^{n-k_n-1},
    \end{align*}
    where $\bX\sim P_{\bX}$ is generated independent of $\bX_\ell$ and the final bound follows by recalling the definition of $\cB\lrf{\bX_\ell}$ and $\cS$. Hence recalling the bound from \eqref{eq:bddX1XN1geqvep} we have,
    
    \begin{align*}
        \P\bigg(\left\|\bX_1 - \bX_{\sfN(1)}\right\|_2\geq \vep, \max\{\left\|\bX_1\right\|_2,\left\|\bX_{\sfN(1)}\right\|_2\}\leq M_n,
        &\bX_1,\bX_{\sfN(1)}\in \bigcup_{i\not\in \cS}\cB_i\bigg)\\
        &\leq n^{k_n+1}\left(1-\frac{Ck_n\log n}{n}\right)^{n-k_n-1}
    \end{align*}
    \normalsize
    Using the fact $k_n = o(n/\log n)$ and choosing $C$ large enough we get,
    \begin{align*}
        n^{k_n+1}\left(1-\frac{Ck_n\log n}{n}\right)^{n-k_n-1} \lesssim \frac{1}{n^2}.
    \end{align*}
    Hence plugging this back into \eqref{eq:bddX1XN1geqvep2} we have,
    \begin{align*}
        \P
        & \left(\left\|\bX_1 - \bX_{\sfN(1)}\right\|_2\geq \vep, \max\{\left\|\bX_1\right\|_2,\left\|\bX_{\sfN(1)}\right\|_2\}\leq M_n\right)\lesssim \frac{1}{n^2} + \frac{k_n\log n}{n}\cN.
    \end{align*}
    Recalling the definition of $\cN$ we know that,
    \begin{align*}
        \cN\lesssim_d \frac{\left(\log n\right)^{d/\alpha}}{\vep^{d}}.
    \end{align*}
    Since $\vep\in \left(\vep_n, 2M_n\right)$, then by definition of $\vep_n$ and $M_n$ notice that,
    \begin{align*}
        \frac{1}{n^2} + \frac{k_n\log n}{n}\cN\lesssim_d\frac{k_n\log n}{n}\frac{\left(\log n\right)^{d/\alpha}}{\vep^{d}}.
    \end{align*}
     \newpage
    Plugging this bound back in \eqref{eq:bddexpX1XN1Mn} shows that,
    \begin{align*}
        \E\bigg[\left\|\bX_1 - \bX_{\sfN(1)}\right\|_2^{2\beta_2}
        &\one\left\{\max\{\left\|\bX_1\right\|_2,\left\|\bX_{\sfN(1)}\right\|_2\}\right\}\leq M_n\bigg]\\
        &\lesssim_d \vep_n^{2\beta_2} + \frac{k_n\left(\log n\right)^{1+d/\alpha}}{n}\int_{\vep_n}^{2M_n}\vep^{2\beta_2-d-1}d\vep.\\
        &\lesssim_d \vep_n^{2\beta_2} + \nu_n
    \end{align*}
    where the final bound follows by evaluating the integral. Now substituting the bound in \eqref{eq:exp_plus_bdd} and recalling \eqref{eq:T12_bdd} we get,
    \begin{align*}
        \E\lrt{T_{1,2}}\lesssim_d \frac{1}{n^2} + \vep_n^{\beta_2} + \sqrt{\nu_n}
    \end{align*}
    The proof is now completed by recalling the bound on $\sfK$ from Assumption \ref{assumption:K}, \eqref{eq:jaffe_d_kn_bdd} and following the combinatorial arguments from Appendix B.2 in \citet{chatterjee2024kernel} (also see Appendix C.3 in \citet{deb2020measuring}) with an application of McDiarmid's bounded difference inequality on the statistic $T_{1,2}$.

\subsubsection{Proof of Lemma \ref{lemma:bdd_T13}}
By a standard symmetrisation argument,
\begin{align*}
    \E\lrt{T_{1,3}}\lesssim \E\lrt{\sup_{\bg\in\bcG}\lrm{\frac{1}{n}\sum_{i=1}^{n}\sigma_i\lrn{h_{\bg}\lrf{\bX_i}}_\cK^2}}
\end{align*}
where $\sigma_1,\ldots,\sigma_n$ are generated independently from Rademacher$(1/2)$. Then expanding the function $h_{\bg}$ we get,

\begin{align}\label{eq:ET13_bdd}
    \E\lrt{T_{1,3}}\lesssim\E\lrt{\frac{1}{n}\lrm{\sum_{i=1}^{n}\sigma_i\sfK\lrf{\bY_i,\bY_i^\prime}} + \sup_{\bg\in\bcG}\frac{1}{n}\lrm{\sum_{i=1}^{n}\sigma_i\sfK\lrf{\bY_i,\bg_i}} + \sup_{\bg\in\bcG}\frac{1}{n}\lrm{\sum_{i=1}^{n}\sigma_i\sfK\lrf{\bg_i,\bg_i^\prime}}}
\end{align}
\normalsize
where, for all $i\in [n]$, $\bY_i,\bY_i^\prime$ are generated independently from $P_{\bY|\bX = \bX_i}$, and $\bg_i = \bg\lrf{\bmeta_i,\bX_i}, \bg_i^\prime = \bg\lrf{\bmeta_i,\bX_i}$ where $\{\bmeta_i:i\in [n]\}$ and $\{\bmeta_i^\prime:i\in [n]\}$ are generated independently from $\rmN_m\lrf{\bm 0, \bI_m}$. By Khintchine's inequality,
\begin{align*}
    \E\lrt{\frac{1}{n}\lrm{\sum_{i=1}^{n}\sigma_i\sfK\lrf{\bY_i,\bY_i^\prime}}}\lesssim\frac{1}{n}\sqrt{\E\lrt{\sum_{i=1}^{n}\sfK\lrf{\bY_i,\bY_i^\prime}^2}}\lesssim_\sfK\frac{1}{\sqrt{n}},
\end{align*}
where the final bound follows by recalling that the kernel $\sfK$ is bounded. Substituting this bound back into \eqref{eq:ET13_bdd} we get,
\begin{align}\label{eq:ET13_bdd_2}
    \E\lrt{T_{1,3}}\lesssim_\sfK \frac{1}{\sqrt{n}} + \E\lrt{\sup_{\bg\in\bcG}\frac{1}{n}\lrm{\sum_{i=1}^{n}\sigma_i\sfK\lrf{\bY_i,\bg_i}}} + \E\lrt{\sup_{\bg\in\bcG}\frac{1}{n}\lrm{\sum_{i=1}^{n}\sigma_i\sfK\lrf{\bg_i,\bg_i^\prime}}}
\end{align}
To further bound the last two terms consider,
\begin{align*}
    \bcG_{n} := \lrs{\vec{\bg} := \lrf{\bg_1,\ldots,\bg_n}:\bg\in\bcG}
\end{align*}
and,
\begin{align*}
    \bcG_{n}^\prime := \lrs{\vec\bg':=\lrf{\bg_1,\ldots,\bg_n, \bg_1^\prime, \ldots, \bg_n^\prime}:\bg\in \bcG}.
\end{align*}
Moreover consider $d_{q,\infty}(\cdot,\cdot)$ be the $\ell_\infty$ distance on $\R^q$ for any $q\geq 1$ (see \eqref{eq:def_inf_dist}). Now fix $\vep>0$ and let $\cC_{n,\vep}$ and $\cC_{n,\vep}^\prime$ be the covering numbers of $\bcG_{\vep}$ and $\bcG_{n}^\prime$ at scale $\vep$ with respect to the empirical distances $d_{n,\infty}$ and $d_{2n,\infty}$ respectively. Let $\bcG_{n,\vep}$ and $\bcG_{n,\vep}^\prime$ be covering sets of $\bcG_n$ and $\bcG_{n}^\prime$ respectively. 
\newpage
Now using the Lipschitz property of $\sfK$ we can show,
\begin{align*}
    \E\lrt{\sup_{\bg\in\bcG}\frac{1}{n}\lrm{\sum_{i=1}^{n}\sigma_i\sfK\lrf{\bY_i,\bg_i}}\mid \sD_n}
    & \lesssim_\sfK\vep + \E\lrt{\sup_{\vec\bg\in \bcG_{n,\vep}}\frac{1}{n}\lrm{\sum_{i=1}^{n}\sigma_i\sfK\lrf{\bY_i,\bg_i}}\mid \sD_n}\\
    & \lesssim\vep + \frac{\sqrt{\log \cC_{n,\vep}}}{n}\sup_{\vec\bg\in\bcG_n}\lrf{\sum_{i=1}^{n}\sfK^2\lrf{\bY_i, \bg_i}}^{1/2}
\end{align*}
where $\sD_n = \{\lrf{\bY_i, \bmeta_i,\bX_i}:i\in [n]\}$ and the last bound follows by Lemma B.4 from \citet{zhou2023deep}. Recalling that $\sfK$ is bounded from Assumption \ref{assumption:K} we conclude,
\begin{align*}
    \E\lrt{\sup_{\bg\in\bcG}\frac{1}{n}\lrm{\sum_{i=1}^{n}\sigma_i\sfK\lrf{\bY_i,\bg_i}}\mid \sD_n}
    & \lesssim_\sfK\vep + \sqrt{\frac{\log \cC_{n,\vep}}{n}}
\end{align*}
As in \eqref{eq:bdd_on_log_cover}, taking $\vep = 1/n$, invoking Theorem 12.2 from \citet{anthony2009neural}, substituting the bounds on pseudo-dimension from \citet{bartlett2019nearly} and using the tower property of conditional expectations we get,
\begin{align*}
    \E\lrt{\sup_{\bg\in\bcG}\frac{1}{n}\lrm{\sum_{i=1}^{n}\sigma_i\sfK\lrf{\bY_i,\bg_i}}}
    & \lesssim_\sfK \frac{1}{n} + \sqrt{\frac{\cB^2\cH\cS\log\cS\log\lrf{2e\cB n^2}}{n}}.
\end{align*}

Similarly we can show,
\begin{align*}
    \E\lrt{\sup_{\bg\in\bcG}\frac{1}{n}\lrm{\sum_{i=1}^{n}\sigma_i\sfK\lrf{\bg_i,\bg_i^\prime}}}
    & \lesssim_\sfK \frac{1}{n} + \sqrt{\frac{\cB^2\cH\cS\log\cS\log\lrf{8e\cB n^2}}{n}}.
\end{align*}
Substituting the above bounds in \eqref{eq:ET13_bdd_2} we get,
\begin{align*}
    \E\lrt{T_{1,3}}\lesssim_\sfK \frac{1}{\sqrt{n}} + \sqrt{\frac{\cB^2\cH\cS\log\cS\log\lrf{8e\cB n^2}}{n}}
\end{align*}
Recalling the boundedness of the kernel $\sfK$ and using McDiarmid's bounded difference inequality we get,
\begin{align*}
    T_{1,3}\lesssim_\sfK \frac{1}{\sqrt{n}} + \sqrt{\frac{\cB^2\cH\cS\log\cS\log\lrf{8e\cB n^2}}{n}} + \sqrt{\frac{\log\lrf{1/\delta}}{n}}
\end{align*}
with probability atleast $1-\delta$. Recalling the bound from \eqref{eq:term_param_upbdd} we conclude,
\begin{align*}
    T_{1,3}\lesssim_\sfK \frac{1}{\sqrt{n}} + \sqrt{\frac{\cB^2\cH\cS\log\cS\log n}{n}} + \sqrt{\frac{\log \lrf{1/\delta}}{n}}
\end{align*}
with probability at least $1-\delta$.

\subsection{Proof of Corollary \ref{cor:EMMD_convg_0}}
By definition one can immediately recognise that,
\begin{align*}
    \E\left[\mathrm{MMD}^2\left[\cF, P_{\hat\bg(\bmeta,\bX)\mid\bX}, P_{\bbg(\bmeta,\bX)\mid\bX}\right]\mid \hat\bg\right] = \cL\lrf{\hat\bg}\text{ a.s.}
\end{align*}
Now fix $\vep>0$. Then we can choose $R_\vep>0$ large enough such that,
\begin{align*}
    1-\Phi\lrf{R}^m\lrf{1-C_1\exp\lrf{-C_2R^\alpha}}\leq \frac{\vep}{4}.
\end{align*}
Moreover recall that $\bbg$ is continuous and hence uniformly continuous in $E = [-R_\vep, R_\vep]^{d+m}$. Thus we know $\omega_{\bbg}^E(r)\ra0$ as $r\ra0$. Hence choosing $n$ large enough and recalling Assumption \ref{assumption:network_param} shows that,
\begin{align*}
    \sqrt{d+m}\omega_{\bbg}^{E}\lrf{2R_\vep\lrf{\cH\cW}^{-\frac{1}{d+m}}}\leq \frac{\vep}{4},
\end{align*}
and once again recalling Assumption \ref{assumption:network_param},
\begin{align*}
    \frac{1}{\sqrt{n}} + \sqrt{\frac{\cB^2\cH\cS\log\cS\log n}{n}}+ \vep_n^{\beta_2} + \sqrt{\nu_n}\leq \frac{\vep}{4}.
\end{align*}
where $\vep_n,\nu_n$ are defined in Theorem \ref{thm:convergence_general}. Now choosing $\delta = \exp\left(-n\vep^2/16\right)$ and applying the bound from Theorem \ref{thm:convergence_general} we get,
\begin{align*}
    \cL(\hat g)\lesssim_{d,m,p,\sfK}\vep\text{ with probability at least } 1-\exp\left(-n\vep^2/16\right) \text{ for all }n\text{ large enough.}
\end{align*}
The proof is now completed by an application of the Borel-Cantelli lemma.

\newpage
\section{When does Assumption (\ref{eq:assumption_lipschitz}) holds?}\label{appendix:lipschitz_assumption_discussion}

As discussed in Remark \ref{remark:lipschitz_assumption}, the assumption in \eqref{eq:assumption_lipschitz} (and in Assumption \ref{assumption:bias_convergence_main}.\ref{itm:assumption_lipschitz_main}) is perhaps the most crucial assumption for convergence of the empirical estimator. This assumption was also considered in the works of \citet{huang2022kernel, deb2020measuring, azadkia2021simple, dasgupta2014} for establishing rates of convergence of nearest neighbor based estimates. In this section we discuss when such assumptions might hold. To that end consider the following conditions.
\begin{assumption}\label{assumption:lipschitz_holds}
    Consider the following regularity conditions:
    \begin{itemize}
        \item The conditional density of $\bm Y$ given $\bm X = \bm x$, say $f\left(\cdot|\bm x\right)$ exists, is positive everywhere in its support, differentiable with respect to $\bm x$ (for every $\bm y$) and for all $1\leq i\leq d$, the function $\left|\left(\partial/\partial x_i\right)\log f\left(\bm y\middle|\bm x\right)\right|$ is bounded above by a polynomial in $\left\|\bm y\right\|_2$  and $\left\|\bm x\right\|_2$.
        \item For any $\ell\geq 1, \E[\|\bm Y\|_2^\ell|\bm X = \bm x]$ is bounded above by a polynomial in $\left\|\bm x\right\|_2$.
        \item Suppose that for all $\bg\in\bcG$, the conditional density of $\bg\left(\bmeta, \bX\right)$ given $\bX = \bx$, say $f_{\bg}\left(\cdot|\bx\right)$ exists and define,
        \begin{align*}
            r_{\bg}\left(\bm y, \bm x\right) = \frac{f_{\bg}\left(\bm y\middle|\bm x\right)}{f\left(\bm y\middle|\bm x\right)}
        \end{align*}
        to be the density ratio such that $\sup_{\bg\in\bcG}\left|r_{\bg}(\bm y,\bm x)\right|\lesssim(1+\|\bm y\|_2^\zeta + \|\bx\|_2^\zeta)$ for some $\zeta>0$. Furthermore, assume that for any $\bm x_1,\bm x_2\in \R^d$,
        \begin{align}\label{eq:assumption_rG}
            \sup_{\bg\in\bcG}\left|r_{\bg}\left(\bm y,\bm x_1\right) - r_{\bg}\left(\bm y,\bm x_2\right)\right|\lesssim \left(1 + \left\|\bm y\right\|_2^\gamma + \left\|\bm x_1\right\|_2^\gamma + \left\|\bm x_2\right\|_2^\gamma\right)\left\|\bm x_1 - \bm x_2\right\|_2,
        \end{align}
        for some $\gamma>0$.
    \end{itemize}
\end{assumption}

In the following we now show that the locally lipschtiz property from  \eqref{eq:assumption_lipschitz} (and also Assumption \ref{assumption:bias_convergence_main}.\ref{itm:assumption_lipschitz_main}) holds whenever Assumption \ref{assumption:lipschitz_holds} is satisfied.

\begin{proposition}\label{prop:lipschitz_holds}
    Suppose the kernel $\sfK$ is bounded. Then under Assumption \ref{assumption:lipschitz_holds}, \eqref{eq:assumption_lipschitz} is satisfied with some $C_3,\beta_1>0$ and $\beta_2=1$.
\end{proposition}

The main message of Proposition~\ref{prop:lipschitz_holds} is that the locally Lipschitz condition in~\eqref{eq:assumption_lipschitz} is satisfied when the conditional density $f(\cdot \mid \bx)$ is a smooth function of $\|\bx\|_2$, and when the density ratio induced by applying any function from the class $\bcG$ exhibits sufficiently regular behavior. Similar conditions on density ratios have also been considered in prior work on conditional sampling~\citep{zhou2023deep}.

\subsection{Proof of Proposition \ref{prop:lipschitz_holds}}
    Fix $\bm x_1,\bm x_2\in \cX$. Also fix $\bg\in \bcG$ and for notational convenience let $h = h_{\bg}$ where $h_{\bg}$ is defined in \eqref{eq:assumption_lipschitz}. Let $k\in \cK$ such that $\|k\|_{\cK}$ is bounded, then,
    
    \begin{align*}
        \bigg|\bigg\langle k, h(\bx_1) 
        & - h(\bx_2)\bigg\rangle_{\cK}\bigg| = \left|\E\left[k(\bY)(1-r_{\bg}(\bY,\bx_1))\middle|\bm X_1 = \bx_1\right] - \E\left[k(\bY)(1-r_{\bg}(\bY,\bx_2))\middle|\bm X_2 = \bx_2\right]\right|\\
        & \leq \int\left|k(\bm y)(1-r_{\bg}(\bm y, \bm x_1))\left(f(\bm y|\bm x_1) - f(\bm y|\bm x_2)\right)\right|\mathrm d\bm y\\
        & \hspace{100pt} + \int\left|k(\bm y)(r_{\bg}(\bm y, \bm x_1) - r_{\bg}(\bm y,\bm x_2))f(\bm y|\bm x_2)\right|\mathrm d\bm y\\
        & \lesssim \|k\|_{\cK}\bigg(\int \left|1-r_{\bg}(\bm y,\bm x_1)\right|\left|f(\bm y|\bm x_1) - f(\bm y|\bm x_2)\right|\mathrm d \bm y\\
        &\hspace{100pt} + \int\left|r_{\bg}(\bm y, \bm x_1) - r_{\bg}(\bm y,\bm x_2)\right|f(\bm y|\bm x_2)\mathrm d\bm y\bigg),
    \end{align*}
    \normalsize
    where the last inequality follows by recalling the bounds on the kernel $\sfK$, and the noticing that $|k(\bm y)| = |\langle k, \sfK(\bm y,\cdot)\rangle_{\cH_{\sfK}}|\lesssim_{\sfK} \|k\|_{\cK}$. By using the mean value theorem along with the bounds on $\left|\left(\partial/\partial x_i\right)\log f\left(\bm y\middle|\bm x\right)\right|$ for all $1\leq i\leq d$, the moment bounds from Assumption \ref{assumption:lipschitz_holds}, the polynomial bounds on $r_{\bg}$ and \eqref{eq:assumption_rG} we now get,
    \begin{align*}
        \left|\left\langle k, h(\bx_1) - h(\bx_2)\right\rangle_{\cK}\right| 
        & \lesssim \|k\|_{\cK}\left(1 + \|\bx_1\|_2^{\beta_1} + \|\bx_2\|_2^{\beta_1}\right)\left\|\bx_1 - \bx_2\right\|_2,
    \end{align*}
    for some $\beta_1>0$. By Theorem 4.1 from \cite{park2020measure}, $h(\bx)\in \cK$ for all $\bm x\in \cX$. Recalling the bound on $\sfK$ it is easy to notice that $\sup_{\cX}\|h(\bx)\|_{\cK}\lesssim 1$. Hence we now conclude,
    \begin{align*}
        \left|\left\langle h(\bx), h(\bx_1) - h(\bx_2)\right\rangle_{\cK}\right| 
        & \lesssim \left(1 + \|\bx_1\|_2^{\beta_1} + \|\bx_2\|_2^{\beta_1}\right)\left\|\bx_1 - \bx_2\right\|_2.
    \end{align*}

% \section{Derandomised \modelname{CGMMD}}

\newpage
\section{Uniform Concentration under Nearest Neighbor Interactions}\label{appendix:unif_conc}
In this section we provide a general overview about uniform concentration of non-linear statistics under nearest neighbor based weak interactions. The results presented here are crucially used for the proof of convergence of the proposed empirical sampler.\\

We begin by setting up the notations. Take $n\geq 2, d,m\geq 1$, let $\sX_n := \lrs{\bx_1,\bx_2,\ldots, \bx_n}$ be a collection of $n$ points in $\R^d$ and define $G\lrf{\sX_n}$ to be the directed $k_n$-nearest neighbor graph on $\sX_n$ with respect to the $\|\cdot\|_2$ norm. Moreover, consider $\bcG$ to be a collection of functions $\bg: \R^m\times \R^d \ra \R$ and for a function $h:\R^2\times \R^2\ra\R$ define the non-linear statistic,
\begin{align}\label{eq:def_Tng}
   T_n\lrf{\bg} := \frac{1}{nk_n}\sum_{i=1}^{n}\sum_{j\in N_{G\lrf{\sX_n}}(i)}h\lrf{\bW_{i,\bg},\bW_{j,\bg}}
\end{align}
where for all $i\in[n]$, $\bW_{i,\bg} := \lrf{Y_i, \bg\lrf{\bmeta_i,\bx_i}}$ with independent and identically distributed random variables $\lrs{\lrf{\bmeta_i,Y_i}:1\leq i\leq n}\in \R^m\times \R$ and the set $$N_{G\lrf{\sX_n}}(i) := \lrs{j\in [n]:\bx_i\ra\bx_j\text{ is a directed edge in }G\lrf{\sX_n}}$$ for all $1\leq i\leq n$. In the following theorem we establish uniform concentration of $T_n\lrf{\bg}$ around it's expectation.

\begin{theorem}\label{thm:uniform_concentration}
   Consider the non-linear statistic $T_n\lrf{\bg}$ defined in \eqref{eq:def_Tng} for all $\bg\in \bcG$. Moreover, assume that the function $h:\R^2\times\R^2\ra\R$ is Lipschitz continuous with Lipschitz constant $L>0$ and is symmetric, that is $h\lrf{\bw,\bw'} = h\lrf{\bw',\bw}$ for any $\bw,\bw'\in \R^2$. Then,
   \begin{align}\label{eq:uniform_concentration}
      \E\bigg[\sup_{\bg\in\bcG}T_n\lrf{\bg} - \E\left[T_n\lrf{\bg}\right]\bigg]\lesssim_{L}\frac{1}{n}\E\left[\sup_{\bg\in\bcG}\sum_{i=1}^{n}\sqrt{1+\frac{d_i}{k_n}}Z_i\bg\lrf{\bmeta_i,\bx_i}\right]
   \end{align}
   where for all $i\in[n]$, $d_i$ is the degree (in-degree + out-degree) of the vertex $\bx_i$ in $G\lrf{\sX_n}$ and $\lrs{Z_i:i\in [n]}$ are generated independently from $\rmN\lrf{0,1}$.
\end{theorem}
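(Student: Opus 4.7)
The plan is to derive the uniform bound through symmetrization, telescoping over coordinate swaps, and a Gaussian contraction adapted to the pairwise nearest-neighbor structure of $T_n(\bg)$. Since $T_n(\bg)$ is a function of the i.i.d.\ sample $Z_1,\ldots,Z_n$ with $Z_i=(\bmeta_i,Y_i)$, I first apply standard ghost-sample symmetrization: introduce an independent copy $Z_1',\ldots,Z_n'$ of the data, denote by $T_n'(\bg)$ the corresponding statistic, and note by Jensen's inequality that
\begin{align*}
\E\sup_{\bg\in\bcG}[T_n(\bg)-\E T_n(\bg)]\leq \E\sup_{\bg\in\bcG}[T_n(\bg)-T_n'(\bg)].
\end{align*}
I then telescope coordinate-by-coordinate: let $T_n^{(i)}(\bg)$ be the statistic in which $Z_1,\ldots,Z_i$ are replaced by their ghosts, so that $T_n(\bg)-T_n'(\bg)=\sum_{i=1}^n \Delta_i(\bg)$ with $\Delta_i(\bg):=T_n^{(i-1)}(\bg)-T_n^{(i)}(\bg)$ isolating the effect of swapping $Z_i$ alone. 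Because $h$ is symmetric and vertex $i$ appears in exactly $d_i$ edges of $G(\sX_n)$, the increment $\Delta_i(\bg)$ is a sum over those $d_i$ edges of differences of the form $h((Y_i,g_i),\cdot)-h((Y_i',g_i'),\cdot)$, where $g_i=\bg(\bmeta_i,\bx_i)$ and $g_i'=\bg(\bmeta_i',\bx_i)$.

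Invoking the exchangeability of $(Z_i,Z_i')$, I insert Rademacher signs $\epsilon_i$ in front of each $\Delta_i(\bg)$, and pass to i.i.d.\ standard Gaussians $Z_i$ via the classical Rademacher-to-Gaussian comparison. The problem now reduces to controlling a single Gaussian process indexed by $\bg\in\bcG$. The $L$-Lipschitz property of $h$ ensures that each $\Delta_i(\bg)$ is Lipschitz in $g_i$ with scale of order $L d_i/(nk_n)$, and that the pieces involving $Y_i,Y_i'$ are independent of $\bg$ and drop out under the sup. A vector-valued contraction for Gaussian processes (in the spirit of Maurer's uniform concentration inequality for nonlinear functionals \cite{maurer2019uniform}, or the Ledoux--Talagrand Lipschitz contraction as used e.g.\ in Lemma B.4 of \cite{zhou2023deep}) then strips away $h$ and leaves a linear Gaussian process of the form $\sum_i \sigma_i Z_i \bg(\bmeta_i,\bx_i)$, with vertex-specific scales $\sigma_i$ fixed by a variance calculation.

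The variance at vertex $i$ has two contributions: a diagonal piece of order $1/n$ coming from swapping $Z_i$ in its own coordinate (this $Y_i$-slot contribution), plus $d_i$ edge contributions each of scale $1/(nk_n)$ arising from the Lipschitz-in-$g_i$ bound on $\Delta_i(\bg)$, for a combined standard deviation of order $\sqrt{1+d_i/k_n}/n$. Matching scales produces exactly the factor $\sqrt{1+d_i/k_n}$ in the stated bound. The main obstacle lies in executing the symmetrization cleanly for a pairwise statistic: swapping $Z_i$ in $T_n^{(i-1)}$ affects edges that also touch coordinates $j<i$ already swapped and $j>i$ not yet swapped, so a naive Rademacher insertion would yield a U-statistic-type $\epsilon_i\epsilon_j$ chaos rather than a vertex-indexed Gaussian process. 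This is precisely where the Maurer-style framework for nonlinear functionals of independent random variables does the heavy lifting, combining the telescoping with Lipschitz contraction to deliver a single-index Gaussian complexity bound with the correct edge-weighted noise scales.
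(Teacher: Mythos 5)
Your plan identifies the right high-level route — ghost-sample symmetrization followed by a Maurer-type analysis of a nonlinear functional of independent variables — which is exactly the paper's strategy, but the proposal stops short of doing the work that actually proves the theorem. You telescope $T_n(\bg)-T_n'(\bg)=\sum_i\Delta_i(\bg)$ and then "insert Rademacher signs," yet you yourself concede in the final paragraph that this insertion is not valid here: because each $\Delta_i$ depends on $Z_j'$ for $j<i$ and $Z_j$ for $j>i$, swapping $(Z_i,Z_i')$ does not flip the sign of $\Delta_i$ while leaving the other increments invariant, so no simultaneous symmetrization is available, and the Ledoux--Talagrand contraction you then invoke requires an index-wise structure that the pairwise kernel $h\lrf{\bW_{i,\bg},\bW_{j,\bg}}$ does not have. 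Deferring to "the Maurer-style framework" to resolve this is precisely the part that needs to be proved: the paper replaces your naive telescoping by Maurer's subset-averaged decomposition $t(\bbw_n)-t(\bbw_n')=\sum_m F_m$ (his Lemma 9), establishes a Lipschitz estimate for the partial-difference operator in terms of $\frac{1}{nk_n}\sum_{j\in\cN(m)}\|\bw_j-\bw_j'\|_2$, and then runs a Gaussian comparison (his equation (12), via $\E\lrm{\cZ^\top v}\asymp\|v\|$) rather than any Rademacher step. None of this is carried out, or even sketched at the level of an inequality, in your proposal.

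There is also a concrete quantitative gap in the step you do spell out. The factor $\sqrt{1+d_i/k_n}$ does not come from "$d_i$ edge contributions each of scale $1/(nk_n)$": summing such contributions in quadrature gives $\sqrt{d_i}/(nk_n)$, i.e.\ a total standard deviation of order $\frac{1}{n}\sqrt{1+d_i/k_n^2}$, not $\frac{1}{n}\sqrt{1+d_i/k_n}$. The correct accounting requires the geometric fact that in a $k_n$-NN graph in $\R^d$ the in-degree satisfies $|\cN(m)|\lesssim_d k_n$ (Lemma 2.1 of the randomized-NN reference used in the paper), so that Cauchy--Schwarz converts the $\ell_1$-type Lipschitz bound $\frac{1}{nk_n}\sum_{j\in\cN(m)}\|\cdot\|$ into per-coordinate $\ell_2$ weights of size $1/(n\sqrt{k_n})$; only then does the variance computation $\sum_m M_{m,i}^2=\frac{1}{n^2}\lrf{1+\frac{d_i}{k_n}}$ produce the stated factor (and it is also where the hidden dependence on $d$ in the constant enters). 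This degree bound appears nowhere in your argument, so even granting the symmetrization, your scales do not match the claimed bound. To turn the proposal into a proof you would need to state and prove the analogues of the paper's two lemmas (the Lipschitz property of the partial-difference operator and the $\cM_m$-weighted Lipschitz property of $F_m$, using the in-degree bound), and then invoke Maurer's Gaussian-comparison step explicitly, finishing with the variance identity above and the observation that the $Y_i$-components and the ghost copy contribute mean-zero terms that can be removed from the supremum.
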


\begin{remark}\label{remark:uniform_concentration_p}
   The results in Theorem \ref{thm:uniform_concentration} can easily be extended to the case where $\bg\in \bcG$ maps to $\R^p$ for some $p>1$. Indeed in such setting the result from \eqref{eq:uniform_concentration} becomes,
   \begin{align*}
      \E\bigg[\sup_{\bg\in\bcG}T_n\lrf{\bg} - \E\left[T_n\lrf{\bg}\right]\bigg]\lesssim_{L}\frac{1}{n}\E\left[\sup_{\bg\in\bcG}\sum_{i=1}^{n}\sqrt{1+\frac{d_i}{k_n}}\bZ_i^\top\bg\lrf{\bmeta_i,\bx_i}\right]
   \end{align*}
   where $\bZ_i\in\R^p$ for all $i\in[n]$ are now generated independently from $\rmN\lrf{\bm{0},\bI_p}$. The proof is exactly similar with additional notations and hence is omitted.
\end{remark}
While Theorem \ref{thm:uniform_concentration} provides bounds on uniform concentration in expectation, an application of McDiarmid's bounded difference inequality (see Theorem 6.5 of \citet{boucheron2003concentration}) and Lemma 2.1 from \citet{jaffe2020randomized} extends these results to high-probability bounds on uniform concentration in absolute difference. We formalize the result in the following.

\begin{corollary}\label{cor:abs_uniform_concentration}
    Adopt notations and settings from Theorem \ref{thm:uniform_concentration}. Moreover, assume that the function $h$ is uniformly bounded. Then for any $\delta>0$, with probability at least $1-\delta$,
    \begin{align*}
        \sup_{\bg\in\bcG} \lrm{T_n(\bg) - \E\lrt{T_n(\bg)}}\lesssim_{L, h}\frac{1}{n}\E\left[\sup_{\bg\in\bcG}\sum_{i=1}^{n}\sqrt{1+\frac{d_i}{k_n}}Z_i\bg\lrf{\bmeta_i,\bx_i}\right] + \sqrt{\frac{\log\lrf{2/\delta}}{n}}
    \end{align*}
\end{corollary}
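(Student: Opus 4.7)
The plan is to combine the expectation bound from Theorem~\ref{thm:uniform_concentration} with McDiarmid's bounded difference inequality applied to
\begin{align*}
F := \sup_{\bg \in \bcG}\lrm{T_n(\bg) - \E\lrt{T_n(\bg)}},
\end{align*}
viewed as a function of the i.i.d.\ inputs $\lrs{(\bmeta_i, Y_i) : 1 \leq i \leq n}$ with the design $\sX_n$ and the graph $G(\sX_n)$ held fixed. Once I bound $\E[F]$ by the advertised Gaussian complexity and show that $F$ satisfies bounded differences with constants of order $1/n$, McDiarmid delivers the residual $\sqrt{\log(2/\delta)/n}$ term.

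\emph{Upgrading the one-sided bound.} The right-hand side of \eqref{eq:uniform_concentration} depends on $\bcG$, the degrees of $G(\sX_n)$, and independent standard Gaussians $\lrs{Z_i}$, but \emph{not} on the kernel $h$. Hence Theorem~\ref{thm:uniform_concentration} applied once to $h$ and once to $-h$ (which is also Lipschitz and symmetric with the same constant $L$) yields the same bound for both $\E\bigl[\sup_\bg(T_n(\bg) - \E T_n(\bg))\bigr]$ and $\E\bigl[\sup_\bg(\E T_n(\bg) - T_n(\bg))\bigr]$. Using $|a|=\max\{a,-a\}$ and summing,
\begin{align*}
\E[F] \lesssim_L \frac{1}{n}\E\lrt{\sup_{\bg\in\bcG} \sum_{i=1}^n \sqrt{1 + d_i/k_n}\,Z_i\,\bg(\bmeta_i, \bx_i)}.
\end{align*}

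\emph{Bounded differences.} Fix $i^\star\in[n]$, replace $(\bmeta_{i^\star}, Y_{i^\star})$ by an independent copy, and let $T_n'(\bg)$ be the recomputed statistic. Only pairs $(i,j)$ in $G(\sX_n)$ incident to vertex $i^\star$ contribute to $T_n(\bg)-T_n'(\bg)$; the number of such edges is exactly $d_{i^\star}$, which by Lemma~2.1 of~\citet{jaffe2020randomized} (already invoked in the proof of Lemma~\ref{lemma:T11_bdd} via \eqref{eq:jaffe_d_kn_bdd}) satisfies $d_{i^\star}\lesssim_d k_n$. Since $h$ is bounded by some $B_h$ and each affected summand in \eqref{eq:def_Tng} carries the factor $1/(nk_n)$, the change in $T_n(\bg)$ is at most $2B_h d_{i^\star}/(nk_n)\lesssim_d B_h/n$, uniformly in $\bg$. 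The reverse triangle inequality transfers this to $F$, giving bounded differences constants $c_{i^\star}\lesssim_{d,h} 1/n$.

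\emph{Concluding via McDiarmid.} Applying McDiarmid's bounded difference inequality~\citep[Theorem~6.5]{boucheron2003concentration} with these constants yields $F\leq \E[F]+C_{d,h}\sqrt{\log(2/\delta)/n}$ with probability at least $1-\delta$; combining with the expectation bound above proves the claim. The main technical point — and the only place any nontrivial geometry enters — is the $O(k_n)$ degree bound for the directed $k_n$-NN graph. Without it, the change constants would be of order $1/k_n$ rather than $1/n$, and McDiarmid would produce a vacuous rate; with it, the reduction to Theorem~\ref{thm:uniform_concentration} is routine.
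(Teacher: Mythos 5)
Your proposal uses exactly the paper's ingredients — apply Theorem~\ref{thm:uniform_concentration} to both $h$ and $-h$ (noting the Gaussian-complexity bound does not depend on $h$), use the uniform bound on $h$ together with the degree bound $d_{i^\star}\lesssim_d k_n$ from \citet{jaffe2020randomized} to get bounded differences of order $1/n$, and finish with McDiarmid. Your bounded-differences accounting is in fact more explicit than the paper's, and is correct.

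The one place your chain of reasoning as written does not go through is the step where you bound $\E[F]$ with $F=\sup_{\bg}\lrm{T_n(\bg)-\E[T_n(\bg)]}$. Writing $A=\sup_{\bg}(T_n(\bg)-\E[T_n(\bg)])$ and $B=\sup_{\bg}(\E[T_n(\bg)]-T_n(\bg))$, you have $F=\max\{A,B\}$, and "$|a|=\max\{a,-a\}$ and summing'' implicitly uses $\max\{A,B\}\le A+B$, which requires both $A$ and $B$ to be nonnegative; pathwise one of them can be negative (e.g.\ if $T_n(\bg)>\E[T_n(\bg)]$ for every $\bg$ in a given realization), and then the inequality fails, so $\E[F]\le\E[A]+\E[B]$ is not justified (one only gets $\max\{A,B\}\le A_++B_+$, and Theorem~\ref{thm:uniform_concentration} controls $\E[A]$, $\E[B]$, not the positive parts). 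The fix is simply to reorder the argument as the paper does: apply McDiarmid separately to $A$ and to $B$, each around its own expectation at confidence level $1-\delta/2$, and then conclude via the pathwise identity $F=\max\{A,B\}$, which yields the stated bound with $\sqrt{\log(2/\delta)/n}$. With that reordering your argument coincides with the paper's proof; everything else in your write-up, including the remark that without the $O_d(k_n)$ degree bound the bounded-difference constants would degrade to order $1/k_n$, is sound.
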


The result from Corollary \ref{cor:abs_uniform_concentration} can easily be extended to the case when $\bg\in\bcG$ maps to $\R^p$ for some $p>1$. Indeed following the discussion from Remark \ref{remark:uniform_concentration_p} one can show,
\begin{align*}
    \sup_{\bg\in\bcG} \lrm{T_n(\bg) - \E\lrt{T_n(\bg)}}\lesssim_{L, h}\frac{1}{n}\E\left[\sup_{\bg\in\bcG}\sum_{i=1}^{n}\sqrt{1+\frac{d_i}{k_n}}\bZ_i^\top\bg\lrf{\bmeta_i,\bx_i}\right] + \sqrt{\frac{\log\lrf{2/\delta}}{n}}
\end{align*}
holds with probability at least $1-\delta$.

\subsection{Proof of Theorem \ref{thm:uniform_concentration}.}
To begin with we set up some additional notations. For simplicity we take $N(i) = N_{G\lrf{\sX_n}}(i)$ for all $i\in [n]$. Define,
\begin{align*}
   t\lrf{\bbw_n} := \frac{1}{nk_n}\sum_{i=1}^{n}\sum_{j\in N_{G\lrf{\sX_n}}(i)}h\lrf{\bw_i,\bw_j} \text{ for all }\bbw_n := \lrf{\bw_1,\ldots,\bw_n}\in \R^{2n}.
\end{align*}
Then note that $T_n\lrf{\bg} = t\lrf{\bbW_{n,g}}$ where $\bbW_{n,\bg} := \lrf{\bW_{1,\bg},\ldots,\bW_{n,\bg}}$. Now take $\bbW_{n,\bg}^\prime := \lrf{\bW_{1,\bg}^\prime,\ldots,\bW_{n,\bg}^\prime}$ to be an independent copy of $\bbW_{n,\bg}$ and note that,
\begin{align}\label{eq:symmetrization_bd}
   \E\bigg[\sup_{\bg\in\bcG}T_n\lrf{\bg} - \E\left[T_n\lrf{\bg}\right]\bigg] \leq \E\bigg[\sup_{\bg\in \bcG}t\lrf{\bbW_{n,\bg}} - t\lrf{\bbW_{n,\bg}^\prime}\bigg].
\end{align}
To complete the proof it is now enough to bound the right hand side of \eqref{eq:symmetrization_bd}. To this end we begin by defining a partial difference operator. Take $m\in [n]$ and for $\bv,\bv'\in \R^2$ define,

\begin{align}\label{eq:partial_diff_operator}
   D_{\bv,\bv'}^{m}t\lrf{\bbw_n} := t\lrf{\bw_1,\ldots,\bw_{m-1},\bv,\bw_{m+1},\ldots,\bw_n} - t\lrf{\bw_1,\ldots,\bw_{m-1},\bv^\prime,\bw_{m+1},\ldots,\bw_n}.
\end{align}
\normalsize
Moreover for any $i\in [n]$ let,
\begin{align*}
   \bar{N}(i) := \lrs{j\in [n]: \bx_j\ra\bx_i\text{ is a directed edge in }G\lrf{\sX_n}}.
\end{align*} 
Next, we first show a Lipschitz type property for the partial difference operator $D$. 
\begin{lemma}\label{lemma:partial_diff_lipschitz}
   Fix $m\in [n]$ and take $\bbw_n := \lrs{\bw_1,\ldots,\bw_n}\in\R^{2n}, \bbw_n^\prime := \lrs{\bw_1^\prime,\ldots,\bw_n^\prime}\in\R^{2n}$. Then for any $\bv,\bv^\prime\in \R^2$,
   \begin{align*}
      \left|D_{\bv,\bv'}^mt(\bbw_n) - D_{\bv,\bv'}^mt(\bbw_n')\right|\lesssim_{L} \frac{1}{nk_n}\sum_{j\in \cN(m)}\left\|\bw_j - \bw_j'\right\|_2
   \end{align*}
   where $D$ is defined in \eqref{eq:partial_diff_operator} and $\cN(m) := N(m)\bigcup\bar N(m)$ for all $m\in [n]$.
\end{lemma}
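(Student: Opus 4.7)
The plan is a direct, one-shot computation: write out $D_{\bv,\bv'}^m t$ explicitly as a sum over a localized set of indices, then estimate the second-order difference term by term using only the Lipschitz hypothesis on $h$.

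First, I would identify exactly which terms in the double sum
\[
t(\bbw_n)=\frac{1}{nk_n}\sum_{i=1}^n\sum_{j\in N(i)}h(\bw_i,\bw_j)
\]
actually depend on the $m$-th coordinate $\bw_m$. These come from two sources: the summands with $i=m$, which are indexed by $j\in N(m)$, and the summands with $j=m$, $i\neq m$, which (by definition of $\bar N(m)$) are indexed by $i\in\bar N(m)$. Replacing $\bw_m$ by $\bv$ and then by $\bv'$ and subtracting gives
\[
D_{\bv,\bv'}^m t(\bbw_n)=\frac{1}{nk_n}\left[\sum_{j\in N(m)}\bigl\{h(\bv,\bw_j)-h(\bv',\bw_j)\bigr\}+\sum_{i\in\bar N(m)}\bigl\{h(\bw_i,\bv)-h(\bw_i,\bv')\bigr\}\right],
\]
using the symmetry $h(\bw,\bw')=h(\bw',\bw)$ to write both contributions in a uniform way if desired.

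Next, I would form the difference $D_{\bv,\bv'}^m t(\bbw_n)-D_{\bv,\bv'}^m t(\bbw_n')$. All indices outside $\cN(m)=N(m)\cup\bar N(m)$ cancel, leaving the sum of second-order differences of the form
\[
\bigl[h(\bv,\bw_j)-h(\bv',\bw_j)\bigr]-\bigl[h(\bv,\bw_j')-h(\bv',\bw_j')\bigr]
\]
(plus the analogous expressions with the roles of arguments swapped for $i\in\bar N(m)$). Applying the triangle inequality and the Lipschitz bound $|h(\bv,\bw)-h(\bv,\bw')|\leq L\|\bw-\bw'\|_2$ to each of the two bracketed terms separately yields
\[
\bigl|\bigl[h(\bv,\bw_j)-h(\bv',\bw_j)\bigr]-\bigl[h(\bv,\bw_j')-h(\bv',\bw_j')\bigr]\bigr|\leq 2L\|\bw_j-\bw_j'\|_2.
\]
Summing over $j\in N(m)$ and $i\in\bar N(m)$ and noting that any index counted in both sets contributes at most twice (so the total is controlled by twice the sum over $\cN(m)$), I get
\[
\bigl|D_{\bv,\bv'}^m t(\bbw_n)-D_{\bv,\bv'}^m t(\bbw_n')\bigr|\leq\frac{4L}{nk_n}\sum_{j\in\cN(m)}\|\bw_j-\bw_j'\|_2,
\]
which is the claimed bound with the constant absorbed into $\lesssim_L$.

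The only subtle point is the bookkeeping: making sure to capture every occurrence of $\bw_m$ in the double sum (both as the first and second argument of $h$) and handling the possible double-counting when an index $j$ lies in $N(m)\cap\bar N(m)$. This is why the bound is naturally stated over $\cN(m)=N(m)\cup\bar N(m)$ rather than just one of the two sets. Beyond that, the argument uses nothing about the nearest-neighbor structure—only that $h$ is Lipschitz—so there is no real analytical obstacle.
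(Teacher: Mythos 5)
Your proposal is correct and follows essentially the same route as the paper's proof: expand $D_{\bv,\bv'}^m t$ explicitly over the indices $j\in N(m)$ and $i\in\bar N(m)$, take the difference so that all other coordinates cancel, and bound each resulting second-order difference via the Lipschitz property of $h$, absorbing the constant (including any double-counting on $N(m)\cap\bar N(m)$) into $\lesssim_L$. Your extra care with the bookkeeping of double-counted indices is a minor refinement of the same argument, not a different approach.
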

Now we will use this partial difference operator to expand the difference $t\lrf{\bbw_n} - t\lrf{\bbw_n^\prime}$. Towards that we first define a new collection combining $\bbw_n$ and $\bbw_n^\prime$. For any $A\subseteq[n]$ define $\bbw_n^A = \lrf{\bw_1^A,\ldots,\bw_n^A}$ as,
\begin{align*}
   \bw_i^A = 
    \begin{cases}
        \bw_i^\prime & \text{ if }i\in A\\
        \bw_i & \text{ if }i\not\in A.
    \end{cases}
\end{align*}

Furthermore for $m\in [n]$ define,
\begin{align}\label{eq:def_F_m}
    F_m(\bbw_n, \bbw_n') = \frac{1}{2^m}\sum_{A\subseteq[m-1]}\left(D_{\bw_m, \bw_m'}^{m}t\lrf{\bbw_n^A} + D_{\bw_m, \bw_m'}^{m}t\left(\bbw_n^{A^c}\right)\right)
\end{align}

Then by Lemma 9 from~\citep{maurer2019uniform} we know,
\begin{align}\label{eq:f_diff_F_m}
    t\lrf{\bbw_n} - t\lrf{\bbw_n^\prime} = \sum_{m=1}^{n}F_m\lrf{\bbw_n, \bbw_n'} \text{ for all }\bbw_n,\bbw_n'\in\R^{2n}.
\end{align}
Now for all $m\in [n]$ define an operator $\cM_m$ as $\cM_m\bbw_{n} = \left(M_{m,1}\bw_1,\ldots, M_{m,n}\bw_n\right)$ where,
\begin{align}\label{eq:def_operator_M}
    M_{m,i} = 
    \begin{cases}
        1/n & \text{ if }i = m\\
        1/n\sqrt{k_n} & \text{ if }i\in \cN(m)\\
        0 & \text{ otherwise}
    \end{cases}
\end{align}
and let $\cM_m\lrf{\bbw_n, \bbw_n'} = \left(\cM_m\bbw_n, \cM_m\bbw_n'\right)$. These definition now lead to a Lipschitz type property for $F_m$. In particular we have the following lemma.
\begin{lemma}\label{lemma:Lipschitz_F_m}
   For any $\bbw_n, \bbv_n, \bbw_n',\bbv_n'\subseteq\R^{2n}$ and $m\in [n]$ we have,
    \begin{align*}
        F_{m}\lrf{\bbw_n,\bbw_n^\prime} - F_{m}\lrf{\bbv_n,\bbv_n^\prime}\lesssim_{d, L}\E\left[\left|\bm \cZ_m^\top \left(\cM_m\lrf{\bbw_n,\bbw_n^\prime} - \cM_m\lrf{\bbv_n,\bbv_n^\prime}\right)\right|\right]
    \end{align*}
    where $\bm \cZ_m = \left(\cZ_{m,1},\ldots, \cZ_{m,n}, \cZ_{m,1}',\ldots, \cZ_{m,n}'\right)^\top$ with $\{\cZ_{m,i}:1\leq i\leq n\}, \{\cZ_{m,i}':1\leq i\leq n\}$ generated independently from  $\mathrm{N}_{2}\left(\bm0, \bm{I}_{2}\right)$.
\end{lemma}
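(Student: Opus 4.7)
} The strategy is to bound $|F_m(\bbw_n,\bbw_n') - F_m(\bbv_n,\bbv_n')|$ summand-by-summand inside the average in \eqref{eq:def_F_m}, and then recognize the resulting expression as (up to a constant depending on $d$ and $L$) the $\ell_2$-norm of $\cM_m(\bbw_n,\bbw_n') - \cM_m(\bbv_n,\bbv_n')$, which is precisely $\sqrt{\pi/2}\,\E|\bm\cZ_m^\top (\cM_m(\bbw_n,\bbw_n') - \cM_m(\bbv_n,\bbv_n'))|$ by the Gaussian absolute-moment identity.

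\paragraph{Step 1: Two Lipschitz estimates for the partial-difference operator.} For any fixed $A\subseteq[m-1]$, write
\[
D^m_{\bw_m,\bw_m'}t(\bbw_n^A) - D^m_{\bv_m,\bv_m'}t(\bbv_n^A)
= \bigl[D^m_{\bw_m,\bw_m'}t(\bbw_n^A) - D^m_{\bw_m,\bw_m'}t(\bbv_n^A)\bigr]
+ \bigl[D^m_{\bw_m,\bw_m'}t(\bbv_n^A) - D^m_{\bv_m,\bv_m'}t(\bbv_n^A)\bigr].
\]
The first bracket is handled directly by Lemma \ref{lemma:partial_diff_lipschitz}, giving $\lesssim_L \frac{1}{nk_n}\sum_{j\in\cN(m)}\|(\bbw_n^A)_j - (\bbv_n^A)_j\|_2$. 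For the second bracket I need an analogous Lipschitz-in-$(\bv,\bv')$ bound. Unpacking the definition of $t$, only the terms in the double sum indexed by $i=m$ (contributing $|N(m)|=k_n$ summands) or by $j=m$ with $i\in\bar N(m)$ (contributing $|\bar N(m)|$ summands) depend on the $m$-th coordinate. Using the Lipschitz continuity of $h$ together with the bound $|\cN(m)|\lesssim_d k_n$ from Lemma 2.1 of \cite{jaffe2020randomized}, this yields
\[
\bigl|D^m_{\bw_m,\bw_m'}t(\bbu_n) - D^m_{\bv_m,\bv_m'}t(\bbu_n)\bigr|\lesssim_{L,d} \frac{1}{n}\bigl(\|\bw_m-\bv_m\|_2 + \|\bw_m'-\bv_m'\|_2\bigr).
\]

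\paragraph{Step 2: Average over $A$ and $A^c$.} Summing the two estimates above over $A\subseteq[m-1]$ (and over the $A^c$ term in \eqref{eq:def_F_m}) and normalizing by $1/2^m$, the Step-1 contribution from the second bracket just reproduces the same $\frac{1}{n}(\|\bw_m-\bv_m\|_2 + \|\bw_m'-\bv_m'\|_2)$. For the first bracket, the key observation is that in $\bbw_n^A$ the $j$-th coordinate equals $\bw_j$ if $j\notin A$ and $\bw_j'$ if $j\in A$; averaging the indicator $\one\{j\in A\}$ over $A\subseteq[m-1]$ is at most $1$ (and equals $\tfrac12$ for $j<m$, $0$ or $1$ for $j\ge m$ depending on whether one looks at $A$ or $A^c$). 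Thus after averaging one obtains
\[
|F_m(\bbw_n,\bbw_n') - F_m(\bbv_n,\bbv_n')|\lesssim_{L,d} \frac{1}{n}\bigl(\|\bw_m-\bv_m\|_2 + \|\bw_m'-\bv_m'\|_2\bigr) + \frac{1}{nk_n}\sum_{j\in\cN(m)}\bigl(\|\bw_j-\bv_j\|_2 + \|\bw_j'-\bv_j'\|_2\bigr).
\]

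\paragraph{Step 3: Gaussianize via the Gaussian absolute-moment identity.} Unpacking \eqref{eq:def_operator_M}, the vector $\bm u := \cM_m(\bbw_n,\bbw_n') - \cM_m(\bbv_n,\bbv_n')$ has block of norm $\|\bw_m-\bv_m\|_2/n$ at position $m$ (and similarly for the primed half), and blocks of norm $\|\bw_j-\bv_j\|_2/(n\sqrt{k_n})$ at each $j\in\cN(m)$. Since $\bm\cZ_m$ is a standard Gaussian on $\R^{4n}$, $\E|\bm\cZ_m^\top \bm u| = \sqrt{2/\pi}\,\|\bm u\|_2$, and using $\sqrt{a^2+b^2}\ge \max(a,b)$ one has simultaneously
\[
\E|\bm\cZ_m^\top \bm u|\gtrsim \tfrac{1}{n}\|\bw_m-\bv_m\|_2,\qquad \E|\bm\cZ_m^\top \bm u|\gtrsim \sqrt{\tfrac{1}{n^2 k_n}\sum_{j\in\cN(m)}\|\bw_j-\bv_j\|_2^2},
\]
and analogously with primes. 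Cauchy--Schwarz together with $|\cN(m)|\lesssim_d k_n$ converts the second lower bound into $\tfrac{1}{nk_n}\sum_{j\in\cN(m)}\|\bw_j-\bv_j\|_2 \lesssim_d \E|\bm\cZ_m^\top \bm u|$. Summing the four inequalities yields the claimed bound.

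\paragraph{Main obstacle.} The essential difficulty is Step 1's second estimate: controlling the variation of $D^m_{\bv,\bv'} t$ in the swap points $(\bv,\bv')$, which forces a careful count of exactly how many summands of $t$ depend on the $m$-th coordinate. The in-degree bound $|\bar N(m)|\lesssim_d k_n$ from \cite{jaffe2020randomized} is indispensable here; without it, the $1/n$ factor (rather than $1/n$ times something growing with $n$) would not materialize, and the $\ell_2$ comparison in Step 3 would not go through.
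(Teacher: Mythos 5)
Your proposal is correct and follows essentially the same route as the paper: the same split into a base-configuration change (handled by Lemma \ref{lemma:partial_diff_lipschitz}) and a swap-point change (handled by Lipschitzness of $h$ plus the degree bound $|\cN(m)|\lesssim_d k_n$ from \cite{jaffe2020randomized}), followed by Cauchy--Schwarz and the Gaussian identity $\E\lrm{\bm\cZ_m^\top \bm u}\asymp\lrn{\bm u}_2$. The only differences are cosmetic: you group the two swap-point replacements into one bracket rather than two, and you gaussianize each piece separately via $\sqrt{a^2+b^2}\geq\max(a,b)$ instead of first assembling the bound into the single norm $\lrn{\cM_m(\bbw_n,\bbw_n')-\cM_m(\bbv_n,\bbv_n')}_2$ as the paper does.
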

Using the decomposition from \eqref{eq:f_diff_F_m} and applying Lemma \ref{lemma:Lipschitz_F_m} we can now replicate the proof of equation (12) in \cite{maurer2019uniform} to get,
\begin{align}\label{eq:use12}
    \E\bigg[\sup_{\bg\in \bcG}t\lrf{\bbW_{n,\bg}} - t\lrf{\bbW_{n,\bg}^\prime}\bigg] \lesssim_{d,L}\E\left[\sup_{\bg\in \bcG}\sum_{m=1}^{n}\bm \cZ_m^\top \cM_m\lrf{\bbW_{n,\bg}, \bbW_{n,\bg}^\prime}\right].
\end{align}
By definition of the operator $\cM_m$ from \eqref{eq:def_operator_M} we get,
\begin{align}
    \sum_{m=1}^{n}\bm \cZ_m^\top \cM_m\lrf{\bbW_{n,\bg}, \bbW_{n,\bg}^\prime}
    & = \sum_{m=1}^{n}\sum_{i=1}^{n}M_{m,i}\cZ_{m,i}^\top \bW_{i,g} + M_{m,i}\cZ_{m,i}'^\top \bW_{i,g}^\prime\nonumber\\
    & = \sum_{i=1}^{n}\left[\left(\sum_{m=1}^{n}M_{m,i}\cZ_{m,i}\right)^\top \bW_{i,g} + \left(\sum_{m=1}^{n}M_{m,i}\cZ_{m,i}'\right)^\top \bW_{i,g}^\prime\right]\nonumber\\
    & \overset{d}{=}\frac{1}{n}\sum_{i=1}^{n}\sqrt{1 + \frac{d_i}{k_n}}\left[\cZ_i^\top \bW_{i,g} + \cZ_{i}'^\top\bW_{i,g}^\prime\right]\label{eq:eq_in_dist}
\end{align}
where $\{\cZ_i:1\leq i\leq n\}, \{\cZ_i', 1\leq i\leq n\}$ are generated independently from $\mathrm{N}_{2}\left(\bm 0, \bm I_{2}\right)$. The equality in distribution from \eqref{eq:eq_in_dist} follows by recalling the definition of $\cN$ from Lemma \ref{lemma:partial_diff_lipschitz}, operator $\cM$ from \eqref{eq:def_operator_M} and noting that for any $i\in [n]$,
\begin{align*}
    \sum_{m=1}^{n}\cM_{m,i}^2 
    & = \frac{1}{n^2} + \frac{1}{n^2k_n}\sum_{m=1}^{n}\one\lrs{i\in\cN\lrf{m}}\\
    & = \frac{1}{n^2} + \frac{1}{n^2k_n}\sum_{m=1}^{n}\one\lrs{m\in\cN\lrf{i}} = \frac{1}{n^2}\lrf{1 + \frac{d_i}{k_n}}
\end{align*}
where $d_i$ is the degree (in-degree + out-degree) of vertex $\bx_i$ in $G\lrf{\sX_n}$. Now substituting the expression from \eqref{eq:eq_in_dist} in the bound from \eqref{eq:use12} we get,
\begin{align}
    \E\bigg[\sup_{\bg\in \bcG}t\lrf{\bbW_{n,\bg}} - t\lrf{\bbW_{n,\bg}^\prime}\bigg] 
    & \lesssim_{d,L}\frac{1}{n}\E\lrt{\sup_{\bg\in\bcG}\sum_{i=1}^{n}\sqrt{1 + \frac{d_i}{k_n}}\left[\cZ_i^\top \bW_{i,g} + \cZ_{i}'^\top\bW_{i,g}^\prime\right]}\nonumber\\
    & \lesssim_{d,L}\frac{1}{n}\E\lrt{\sup_{\bg\in\bcG}\sum_{i=1}^{n}\sqrt{1 + \frac{d_i}{k_n}}\cZ_i^\top \bW_{i,g}}\nonumber\\
    & \lesssim_{d,L}\frac{1}{n}\E\lrt{\sup_{\bg\in \bcG}\sum_{i=1}^{n}\sqrt{1+\frac{d_i}{k_n}}Z_i\bg\lrf{\bmeta_i,\bx_i}}\label{eq:E_diff_t}
\end{align}
where $\lrs{Z_i: i\in [n]}$ are generated independently from the standard Gaussian distribution and the final inequality follows by recalling the definition of $\bW_{i,\bg}, i\in [n]$ from \eqref{eq:def_Tng}. The proof is now completed by substituting the bound from \eqref{eq:E_diff_t} in \eqref{eq:symmetrization_bd}.

\subsubsection{Proof of Lemma \ref{lemma:partial_diff_lipschitz}.} By definition note that,
\begin{align}\label{eq:expand_partial_diff}
    D_{\bv,\bv'}^mt(\bbw_n) = \frac{1}{nk_n}\left[\sum_{j\in N(m)}h(\bv, \bw_j) - h(\bv', \bw_j) + \sum_{j\in \bar N\lrf{m}}h(\bw_j, \bv) - h(\bw_j, \bv')\right]
\end{align}
Then, using the Lipschitz property of $h$ we have,
\begin{align}\label{eq:D_diff_bdd_1}
    \left|D_{\bv,\bv'}^mt\lrf{\bbw_n} - D_{\bv,\bv'}^mt\lrf{\bbw_n^\prime}\right|
    & = \Bigg|\frac{1}{nk_n}\bigg[\sum_{j\in N(m)}h(\bv, \bw_j) - h(\bv, \bw_j') - h(\bv', \bw_j) + h(\bv', \bw_j')\nonumber\\
    & \hspace{30pt} + \sum_{j\in \bar N(m)}h(\bw_j, \bv) - h(\bw_j', \bv) - u(\bw_j, \bv') + h(\bw_j, \bv')\bigg]\Bigg|\nonumber\\
    &\lesssim_{L} \frac{1}{nk_n}\sum_{j\in \cN(m)}\left\|\bw_j-\bw_j'\right\|
\end{align}
where recall $\cN(m) = N(m)\bigcup \bar N(m)$ and $L$ is the Lipschitz constant of $h$.

\subsubsection{Proof of Lemma \ref{lemma:Lipschitz_F_m}}
Let the collections $\bbw_n, \bbv_n, \bbw_n',\bbv_n'$ be defined as $\bbw_n := \lrf{\bw_1,\ldots,\bw_n}, \bbv_n := \lrf{\bv_1,\ldots,\bv_n}, \bbw_n' := \lrf{\bw_1^\prime,\ldots,\bw_n^\prime}$ and $\bbv_n' := \lrf{\bv_1',\ldots,\bv_n'}$. Now by Lemma 2.1 from \cite{jaffe2020randomized} we know that
\begin{align}\label{eq:jaffe_bdd}
    |\cN(m)|\lesssim_{d}k_n \text{ for all } m\in [n].
\end{align}
Then by recalling the definition of the partial difference operator from \eqref{eq:partial_diff_operator}, the expansion from \eqref{eq:expand_partial_diff} and the bound from \eqref{eq:D_diff_bdd_1} we get,
\begin{align}
    D_{\bw_m,\bw_m'}^mt\lrf{\bw^A} 
    & - D_{\bv_m, \bv_m'}^mt\lrf{\bv^A}\nonumber\\
    & = D_{\bw_m, \bv_m}^mt\lrf{\bw^A} + D_{\bw_m', \bv_m'}^mt\lrf{\bw^A} + D_{\bv_m, \bv_m'}^m\lrf{t\lrf{\bw^A} - t\lrf{\bv^A}}\nonumber\\
    & \lesssim_{d, L} \frac{1}{n}\lrn{\bw_m - \bv_m} + \frac{1}{n}\lrn{\bw_m' - \bv_m'} + \frac{1}{nk_n} \sum_{j\in \cN(m)}\lrn{\bw_j^A - \bv_j^A}\label{eq:D_diff_bdd}
\end{align}
where the final bound follows using the Lipschitz property of $h$ and Lemma \ref{lemma:partial_diff_lipschitz}. Now recalling the definition of $F_m$ from \eqref{eq:def_F_m} we get,

\begin{align}
    & F_m\lrf{\bbw,\bbw'} - F_m\lrf{\bbv, \bbv'}\nonumber\\
    & = \frac{1}{2^m}\sum_{A\subseteq[m-1]}\left(D_{\bw_m,\bw_m'}^mf(\bw^A) - D_{\bv_m, \bv_m'}^mf(\bv^A) + D_{\bw_m,\bw_m'}^mf(\bw^{A^c}) - D_{\bv_m, \bv_m'}^mf(\bv^{A^c})\right)\nonumber\\
    & \lesssim_{d,L}\frac{1}{n}\left(\|\bw_m-\bv_m\| + \|\bw_m'-\bv_m'\|\right) + \frac{1}{nk_n}\sum_{j\in \cN(m)}\|\bw_j - \bv_j\| + \|\bw_j' - \bv_j'\|\label{eq:F_diff_1}\\
    & \lesssim_{d, L}\frac{1}{n}\left(\|\bw_m-\bv_m\|^2 + \|\bw_m'-\bv_m'\|^2\right)^{1/2} + \frac{1}{n\sqrt{k_n}}\left(\sum_{j\in \cN(m)}\|\bw_j - \bv_j\|^2 + \|\bw_j' - \bv_j'\|^2\right)^{1/2}\label{eq:F_diff_2}\\
    & \lesssim_{d, L}\frac{1}{n}\left(\|\bw_m-\bv_m\|^2 + \|\bw_m'-\bv_m'\|^2 + \frac{1}{k_n}\sum_{j\in \cN(m)}\|\bw_j - \bv_j\|^2 + \|\bw_j' - \bv_j'\|^2\right)^{1/2}\nonumber\\
    & = \left\|\cM_m\left(\bw,\bw'\right) - \cM_m\left(\bv,\bv'\right)\right\|\label{eq:F_diff_3}\\
    & \lesssim_{d, L}\E\left[\left|\bm \cZ_m^\top \left(\cM_m\left(\bw,\bw'\right) - \cM_m\left(\bv,\bv'\right)\right)\right|\right]\label{eq:F_diff_4}
\end{align}
\normalsize
where the bound in \eqref{eq:F_diff_1} follows from \eqref{eq:D_diff_bdd}, \eqref{eq:F_diff_2} follows using Cauchy-Schwartz inequality, \eqref{eq:F_diff_3} follows by recalling the definition of operator $\cM$ from \eqref{eq:def_operator_M} and finally \eqref{eq:F_diff_4} follows by noting that $\E\lrt{\lrm{\cZ^\top\bv}} = \lrn{\bv}$ whenever $\cZ\sim \rmN\lrf{\bm 0,\bI}$ (see Lemma 7 in \cite{maurer2019uniform}).

\subsection{Proof of Corollary \ref{cor:abs_uniform_concentration}}
Note that,
\begin{align}\label{eq:mod_TnG_bdd}
    \sup_{\bg\in\bcG}\lrm{T_n\lrf{\bg} - \E\lrt{T_n\lrf{\bg}}}\leq \max\lrs{\sup_{\bg\in\bcG}T_n\lrf{\bg} - \E\lrt{T_n\lrf{\bg}}, \sup_{\bg\in\bcG}\E\lrt{T_n\lrf{\bg}}-T_n\lrf{\bg}}.
\end{align}
Replacing $h$ by $-h$ in \eqref{eq:def_Tng} and applying Theorem \ref{thm:uniform_concentration} gives,
\begin{align}\label{eq:-ETng_bdd}
    \E\lrt{\sup_{\bg\in\bcG}\E\lrt{T_n\lrf{\bg}}-T_n\lrf{\bg}}\lesssim_{L}\frac{1}{n}\E\left[\sup_{\bg\in\bcG}\sum_{i=1}^{n}\sqrt{1+\frac{d_i}{k_n}}Z_i\bg\lrf{\bmeta_i,\bx_i}\right].
\end{align}
Now recall that $h$ is uniformly bounded. Hence, applying McDiarmid's bounded difference inequality on both $\sup_{\bg\in\bcG}T_n\lrf{\bg} - \E\lrt{T_n\lrf{\bg}}$ and $\E\lrt{\sup_{\bg\in\bcG}\E\lrt{T_n\lrf{\bg}}-T_n\lrf{\bg}}$ with Theorem \ref{thm:uniform_concentration} and \eqref{eq:-ETng_bdd} shows,
\begin{align}\label{eq:TnG_high_prob_bdd}
    \sup_{\bg\in\bcG}T_n\lrf{\bg} - \E\lrt{T_n\lrf{\bg}}\lesssim_{L,h}\frac{1}{n}\E\left[\sup_{\bg\in\bcG}\sum_{i=1}^{n}\sqrt{1+\frac{d_i}{k_n}}Z_i\bg\lrf{\bmeta_i,\bx_i}\right] + \sqrt{\frac{\log\lrf{2/\delta}}{n}}
\end{align}
with probability at least $1-\delta/2$ and,
\begin{align}\label{eq:-TnG_high_prob_bdd}
    \sup_{\bg\in\bcG}\E\lrt{T_n\lrf{\bg}}-T_n\lrf{\bg}\lesssim_{L,h}\frac{1}{n}\E\left[\sup_{\bg\in\bcG}\sum_{i=1}^{n}\sqrt{1+\frac{d_i}{k_n}}Z_i\bg\lrf{\bmeta_i,\bx_i}\right] + \sqrt{\frac{\log\lrf{2/\delta}}{n}}
\end{align}
with probability at least $1-\delta/2$. The proof is now completed by combining \eqref{eq:TnG_high_prob_bdd}, \eqref{eq:-TnG_high_prob_bdd} and \eqref{eq:mod_TnG_bdd}.

\newpage
\section{Technical Results}
\begin{lemma}\label{lemma:gauss_complex_bdd}
    Take $m\geq 1$ and let $A\subseteq\R^m$. Let $M = \sup_{\bm a\in A}\sqrt{\sum_{i=1}^{m}a_i^2}$ where $\bm a = \lrf{a_1,\ldots, a_m}$. Then,
    \begin{align*}
        \E\lrt{\sup_{\bm a\in A}\frac{1}{m}\sum_{i=1}^{m}a_iZ_i}\leq \frac{R\sqrt{2\log\lrm{A}}}{m}
    \end{align*}
    where $Z_1,\ldots, Z_m$ are generated independently from $\rmN\lrf{0,1}$.
\end{lemma}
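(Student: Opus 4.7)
\textbf{Proof plan for Lemma~\ref{lemma:gauss_complex_bdd}.} The statement is the classical sub-Gaussian maximal inequality applied to the finite collection $\{\langle \bm a, \bm Z\rangle : \bm a \in A\}$, followed by a trivial rescaling by $1/m$. (I read the ``$R$'' in the bound as a typo for $M$, and take $|A|<\infty$ so that the bound is meaningful.) The plan is to first note that for each fixed $\bm a\in A$, the linear combination $X_{\bm a}:=\sum_{i=1}^{m}a_iZ_i$ is Gaussian with mean $0$ and variance $\sum_{i=1}^m a_i^2\leq M^2$, so $X_{\bm a}$ is sub-Gaussian with proxy parameter $M$, i.e., $\E[\exp(\lambda X_{\bm a})]\leq \exp(\lambda^2 M^2/2)$ for every $\lambda\in\R$.

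Next I would apply the standard Laplace-transform / Chernoff-style argument to bound $\E[\sup_{\bm a\in A} X_{\bm a}]$. By Jensen's inequality, for any $\lambda>0$,
\begin{align*}
\exp\!\lrf{\lambda\,\E\lrt{\sup_{\bm a\in A} X_{\bm a}}}
&\leq \E\lrt{\exp\!\lrf{\lambda\sup_{\bm a\in A} X_{\bm a}}}
= \E\lrt{\sup_{\bm a\in A}\exp(\lambda X_{\bm a})}\\
&\leq \sum_{\bm a\in A}\E[\exp(\lambda X_{\bm a})] \leq |A|\exp\!\lrf{\lambda^2 M^2/2}.
\end{align*}
Taking logarithms and dividing by $\lambda$ gives $\E[\sup_{\bm a\in A} X_{\bm a}]\leq \lambda^{-1}\log|A| + \lambda M^2/2$, and optimizing over $\lambda>0$ by choosing $\lambda=\sqrt{2\log|A|}/M$ yields $\E[\sup_{\bm a\in A} X_{\bm a}]\leq M\sqrt{2\log|A|}$.

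Finally, pulling out the $1/m$ factor in the statement gives
\begin{align*}
\E\lrt{\sup_{\bm a\in A}\frac{1}{m}\sum_{i=1}^{m}a_iZ_i} \leq \frac{M\sqrt{2\log|A|}}{m},
\end{align*}
which is the claimed inequality. There is no real obstacle here: the only thing to be careful about is that the union-bound step $\E[\sup_{\bm a}\exp(\lambda X_{\bm a})]\leq \sum_{\bm a}\E[\exp(\lambda X_{\bm a})]$ requires $|A|<\infty$ (otherwise one would need a chaining/Dudley-type argument), and that the supremum over $\bm a$ being Gaussian-centered allows us to drop the absolute value inside; the rest is a one-parameter optimization.
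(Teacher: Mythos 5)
Your proof is correct and follows essentially the same route as the paper's: Jensen's inequality applied to the exponential, a union bound over the finite set $A$ using the Gaussian moment generating function, and optimization of the free parameter, then dividing by $m$. Your reading of $R$ as a typo for $M$ matches the intent of the paper, whose own proof uses $R$ for the quantity defined as $M$ in the statement.
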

\begin{proof}
    Take $s\geq 0$. Then by Jensen's inequality we get,
    \begin{align*}
        \exp\lrf{s\E\lrt{\sup_{\bm a\in A}\sum_{i=1}^{m}a_iZ_i}}\leq \E\lrt{\exp\lrf{s\sup_{\bm a\in A}\sum_{i=1}^{n}a_iZ_i}}\leq \sum_{\bm a\in A}\E\lrt{\exp\lrf{s\sum_{i=1}^{n}a_iZ_i}}
    \end{align*}
    Using the independence of $Z_1,\ldots, Z_n$ we get,
    \begin{align*}
        \exp\lrf{s\E\lrt{\sup_{\bm a\in A}\sum_{i=1}^{m}a_iZ_i}}
        &\leq \sum_{\bm a\in A}\prod_{i=1}^{m}\E\lrt{\exp\lrf{sa_iZ_i}} = \sum_{\bm a\in A}\prod_{i=1}^{m}\exp\lrf{\frac{s^2a_i^2}{2}}\\
        &\leq \lrm{A}\exp\lrf{\frac{s^2R^2}{2}}.
    \end{align*}
    Taking logarithm of both sides we get,
    \begin{align*}
        \E\lrt{\sup_{\bm a\in A}\sum_{i=1}^{m}a_iZ_i}\leq  \frac{\log \lrm{A}}{s} + \frac{sR^2}{2}.
    \end{align*}
    Recall that our choice of $s$ was arbitrary, hence minimizing the right hand side with respect to $s$ we find,
    \begin{align*}
        \E\lrt{\sup_{\bm a\in A}\sum_{i=1}^{m}a_iZ_i}\leq \frac{R\log\lrm{A}}{\sqrt{2\log\lrm{A}}} + \frac{R^2\sqrt{2\log\lrm{A}}}{2R} = R\sqrt{2\log\lrm{A}}.
    \end{align*}
    The proof is now completed by dividing both sides by $m$.
\end{proof}

The following classical result due to Bochner characterizes continuous positive definite functions. The version stated below is adapted from \citet[Theorem 6.6]{wendland2004scattered} (also see \citet[Theorem 3]{sriperumbudur2010hilbert}).

\begin{theorem}[Bochner]\label{thm:bochner}
    A continuous function $\psi:\R^p\ra\R$ is positive definite if and only if it is the Fourier transform of a finite non-negative Borel measure $\Lambda$ on $\R^p$ that is,
    \begin{align*}
        \psi(\bx) = \int_{\R^p}e^{-\iota \bx^\top\bm \omega}\mathrm d \Lambda(\bm\omega)\text{ for all }\bx\in \R^p.
    \end{align*}
\end{theorem}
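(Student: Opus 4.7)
The plan is to prove both implications of Bochner's theorem. The sufficiency (``if'') direction is a short Fubini calculation, while the necessity (``only if'') direction requires Gaussian regularization combined with a weak compactness argument. For sufficiency, assume $\psi(\bx) = \int_{\R^p} e^{-\iota \bx^\top \bm\omega}\mathrm d\Lambda(\bm\omega)$ with $\Lambda$ finite non-negative. Continuity is immediate from dominated convergence since $\lrm{e^{-\iota\bx^\top\bm\omega}} = 1$ and $\Lambda(\R^p)<\infty$. Positive definiteness follows from Fubini: for any $c_1,\ldots,c_n\in \mathbb{C}$ and $\bx_1,\ldots,\bx_n\in\R^p$,
\begin{align*}
    \sum_{j,k=1}^n c_j \bar c_k\,\psi(\bx_j-\bx_k) = \int_{\R^p} \lrm{\sum_{j=1}^n c_j e^{-\iota\bx_j^\top\bm\omega}}^2\mathrm d\Lambda(\bm\omega)\geq 0.
\end{align*}

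For the necessity direction, first note that specializing positive definiteness to $n = 1, 2$ gives $\psi(\bm 0)\geq 0$ and $\lrm{\psi(\bx)}\leq \psi(\bm 0)$, so $\psi$ is bounded. For $\sigma>0$ I define the Gaussian-regularized candidate density
\begin{align*}
    f_\sigma(\bm\omega) := (2\pi)^{-p}\int_{\R^p}\psi(\bx)\,e^{\iota\bx^\top\bm\omega}\,e^{-\sigma^2\lrn{\bx}_2^2/2}\,\mathrm d\bx,
\end{align*}
which is well defined and continuous in $\bm\omega$ since $\psi$ is bounded and the Gaussian factor is integrable. The key step is to establish $f_\sigma\geq 0$. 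I would consider the double integral $J(\bm\omega):=\iint \psi(\bx-\by)h(\bx)\overline{h(\by)}\,\mathrm d\bx\,\mathrm d\by$ with $h(\bx) := e^{-\iota\bm\omega^\top\bx}e^{-\sigma^2\lrn{\bx}_2^2/4}$. Approximating $J(\bm\omega)$ by Riemann sums makes each partial sum non-negative by the discrete positive definiteness of $\psi$ combined with continuity of $\psi$ and the rapid Gaussian decay, so $J(\bm\omega)\geq 0$. The substitution $\bu = \bx-\by$, $\bv = (\bx+\by)/2$ (Jacobian $1$) decouples the exponent as $\lrn{\bx}_2^2+\lrn{\by}_2^2 = 2\lrn{\bv}_2^2 + \lrn{\bu}_2^2/2$, so the $\bv$ integral yields a strictly positive Gaussian constant and the remaining $\bu$ integral is a positive multiple of $f_{\sigma/\sqrt{2}}(\bm\omega)$, forcing $f_\sigma\geq 0$.

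Next I would control the total mass. Inserting the cutoff $e^{-\tau^2\lrn{\bm\omega}_2^2/2}$, applying Fubini to swap the $\bx$ and $\bm\omega$ integrals, evaluating the resulting Gaussian Fourier transform and sending $\tau\downarrow 0$ by monotone convergence yields the uniform bound $\int_{\R^p} f_\sigma(\bm\omega)\,\mathrm d\bm\omega \leq \psi(\bm 0)$. A similar cutoff argument against an indicator of a large ball establishes tightness of the family $\{f_\sigma(\bm\omega)\,\mathrm d\bm\omega : \sigma>0\}$. By Prokhorov's theorem, there exists $\sigma_n\downarrow 0$ and a finite non-negative Borel measure $\Lambda$ on $\R^p$ such that $f_{\sigma_n}(\bm\omega)\,\mathrm d\bm\omega$ converges weakly to $\Lambda$. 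Finally, Fourier inversion applied to $f_\sigma$ yields $\psi(\bx)\,e^{-\sigma^2\lrn{\bx}_2^2/2} = \int e^{-\iota\bx^\top\bm\omega}f_\sigma(\bm\omega)\,\mathrm d\bm\omega$, and passing to the weak limit along $\sigma_n\downarrow 0$ (valid since $\bm\omega\mapsto e^{-\iota\bx^\top\bm\omega}$ is bounded continuous and the LHS converges pointwise to $\psi(\bx)$ by continuity of $\psi$) gives the desired representation $\psi(\bx) = \int e^{-\iota\bx^\top\bm\omega}\,\mathrm d\Lambda(\bm\omega)$. The hardest part of the argument is the positivity step $f_\sigma\geq 0$: one must carefully convert the discrete positive definiteness hypothesis into its integral form via Riemann sum approximation and then execute the change of variables rigorously, ensuring the Fubini swaps are valid under only boundedness (not integrability) of $\psi$.
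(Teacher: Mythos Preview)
The paper does not prove Bochner's theorem at all: it states the result as classical and defers to \citet[Theorem 6.6]{wendland2004scattered} and \citet[Theorem 3]{sriperumbudur2010hilbert}. There is therefore no ``paper's proof'' to compare against; you have supplied a self-contained argument where the authors simply cite the literature.

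Your sketch is a standard and essentially correct route (Gaussian regularization, non-negativity via Riemann-sum approximation of the quadratic form, Fourier inversion, and a compactness/limit argument). Two places deserve tightening if you want a complete proof rather than a plan. First, the tightness claim ``a similar cutoff argument against an indicator of a large ball'' is too vague as stated: the uniform bound $\int f_\sigma\leq\psi(\bm 0)$ by itself does not prevent mass from escaping to infinity. The usual fix is a L\'evy-type inequality that converts continuity of $\psi$ at $\bm 0$ into a uniform tail bound for the measures $f_\sigma(\bm\omega)\,\mathrm d\bm\omega$ (integrate $1-\mathrm{Re}\,\widehat{\mu_\sigma}(\bx)=1-\psi(\bx)e^{-\sigma^2\lrn{\bx}_2^2/2}$ over a small ball in $\bx$ and use that this controls the $\mu_\sigma$-mass of large $\bm\omega$). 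Second, in the final weak-limit step you should note that weak convergence of finite (not necessarily probability) measures against the bounded continuous test function $e^{-\iota\bx^\top\bm\omega}$ requires you to also control the total mass in the limit; tightness plus the uniform mass bound together ensure $\Lambda(\R^p)\leq\psi(\bm 0)$ and that no mass is lost, so the limit identity is legitimate. With these two points made precise, your argument is complete.
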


%%%%%%%%%%%%%%%%%%%%%%%%%%%%%%%%%%%%%%%%%%%%%%%%%%%%%%%%%%%%%%%%%%%%%%%%%%%%%%%

\end{document}